\def\eqref#1{equation~\ref{#1}}
\def\1{\bm{1}}
\DeclareMathAlphabet{\mathsfit}{\encodingdefault}{\sfdefault}{m}{sl}
\SetMathAlphabet{\mathsfit}{bold}{\encodingdefault}{\sfdefault}{bx}{n}
\newcommand{\E}{\mathbb{E}}
\newcommand{\R}{\mathbb{R}}
\definecolor{darkred}{RGB}{150,0,0}
\definecolor{darkgreen}{RGB}{0,150,0}
\definecolor{darkblue}{RGB}{0,0,200}
\newtheorem{definition}{Definition}
\newtheorem{theorem}{Theorem}[section]
\newtheorem{lemma}[theorem]{Lemma}
\newenvironment{proof}{\noindent{\bf Proof}}{$\blacksquare$\par}
\newcommand\doi[1]{doi:\doilink{#1}}
\providecommand*\url[1]{\href{#1}{#1}}
\renewcommand*\url[1]{\href{#1}{\texttt{#1}}}
\newcommand{\abs}[1]{\left|#1\right|}
\newcommand{\C}{{\mathbb C}}
\newcommand{\vct}[1]{\bm{#1}}
\newcommand{\mtx}[1]{\bm{#1}}
\newcommand{\opnorm}[1]{\left\|#1\right\|}
\newcommand{\twonorm}[1]{\left\|#1\right\|_{\ell_2}}
\newcommand{\infnorm}[1]{\left\|#1\right\|_{\ell_\infty}}
\newcommand{\fronorm}[1]{\left\|#1\right\|_{F}}
\def\shownotes{1}  
\newcommand{\authnote}[2]{$\ll$\textsf{\footnotesize #1: #2}$\gg$}
\newcommand{\authnote}[2]{}
\title{Understanding Overparameterization \\in Generative Adversarial Networks}
\author{Yogesh Balaji$^{1}$\thanks{First two authors contributed equally. Correspondence to \textit{yogesh@cs.umd.edu, sajedi@usc.edu}},  Mohammadmahdi Sajedi$^{2}$$^{*}$, Neha Mukund Kalibhat$^{1}$, Mucong Ding$^{1}$, \\
\textbf{Dominik Stöger$^{2}$, Mahdi Soltanolkotabi$^{2}$, Soheil Feizi$^{1}$} \\

$^{1}$ University of Maryland, College Park, MD
\\
$^{2}$ University of Southern California, Los Angeles, CA 
}
\begin{document}

\maketitle

\begin{abstract}

A broad class of {\it unsupervised} deep learning methods such as Generative Adversarial Networks (GANs) involve training of overparameterized models where the number of parameters of the model exceeds a certain threshold. Indeed, most successful GANs used in practice are trained using overparameterized generator and discriminator networks, both in terms of depth and width. A large body of work in {\it supervised} learning have shown the importance of model overparameterization in the convergence of the gradient descent (GD) to globally optimal solutions. In contrast, the unsupervised setting and GANs in particular involve non-convex concave mini-max optimization problems that are often trained using Gradient Descent/Ascent (GDA).
The role and benefits of model overparameterization in the convergence of GDA to a global saddle point in non-convex concave problems is far less understood. In this work, we present a comprehensive analysis of the importance of model overparameterization in GANs both theoretically and empirically. We theoretically show that in an overparameterized GAN model with a $1$-layer neural network generator and a linear discriminator, GDA converges to a global saddle point of the underlying non-convex concave min-max problem. To the best of our knowledge, this is the first result for global convergence of GDA in such settings. Our theory is based on a more general result that holds for a broader class of nonlinear generators and discriminators that obey certain assumptions (including deeper generators and random feature discriminators). Our theory utilizes and builds upon a novel connection with the convergence analysis of linear time-varying dynamical systems which may have broader implications for understanding the convergence behavior of GDA for non-convex concave problems involving overparameterized models. We also empirically study the role of model overparameterization in GANs using several large-scale experiments on CIFAR-10 and Celeb-A datasets. Our experiments show that overparameterization improves the quality of generated samples across various model architectures and datasets. Remarkably, we observe that overparameterization leads to faster and more stable convergence behavior of GDA across the board.

\end{abstract}


\section{Introduction}\label{sec:intro}

In recent years, we have witnessed tremendous progress in deep generative modeling with some state-of-the-art models capable of generating photo-realistic images of objects and scenes~\citep{brock2018large, karras2019StyleGAN, clark2019adversarial}. Three prominent classes of deep generative models include GANs~\citep{Goodfellow2014GAN}, VAEs~\citep{kingma2014VAE} and normalizing flows~\citep{dinh2017realnvp}. Of these, GANs remain a popular choice for data synthesis especially in the image domain. GANs are based on a two player \textit{min-max} game between a generator network that generates samples from a distribution, and a critic (discriminator) network that discriminates real distribution from the generated one. The networks are optimized using Gradient Descent/Ascent (GDA) to reach a saddle-point of the min-max optimization problem.

One of the key factors that has contributed to the successful training of GANs is model \textit{overparameterization}, defined based on the model parameters count. By increasing the complexity of discriminator and generator networks, both in depth and width, recent papers show that GANs can achieve photo-realistic image and video synthesis~\citep{brock2018large, clark2019adversarial, karras2019StyleGAN}. While these works empirically demonstrate some benefits of \emph{overparameterization}, there is lack of a rigorous study explaining this phenomena. In this work, we attempt to provide a comprehensive understanding of the role of \emph{overparameterization} in GANs, both theoretically and empirically. We note that while \emph{overparameterization} is a key factor in training successful GANs, other factors such as generator and discriminator architectures, regularization functions and model hyperparameters have to be taken into account as well to improve the performance of GANs. 

Recently, there has been a large body of work in {\it supervised} learning (e.g. regression or classification problems) studying the importance of model overparameterization in gradient descent (GD)'s convergence to globally optimal solutions~\citep{soltanolkotabi2018theoretical, zhu2019convergence, du2018gradient, oymak2019overparameterized, zou2019improved,  oymak2019generalization}. A key observation in these works is that, under some conditions, overparameterized models experience {\it lazy training}~\citep{chizat2019lazy} where optimal model parameters computed by GD remain close to a randomly initialized model. Thus, using a linear approximation of the model in the parameter space, one can show the global convergence of GD in such minimization problems.    

In contrast, training GANs often involves solving a non-convex concave \emph{min-max} optimization problem that fundamentally differs from a single minimization problem of classification/regression. The key question is whether overparameterized GANs also experience lazy training in the sense that overparameterized generator and discriminator networks remain sufficiently close to their initializations. This may then lead to a general theory of global convergence of GDA for such overparameterized non-convex concave min-max problems. 

\begin{figure*}
\centering
\includegraphics[width=\textwidth]{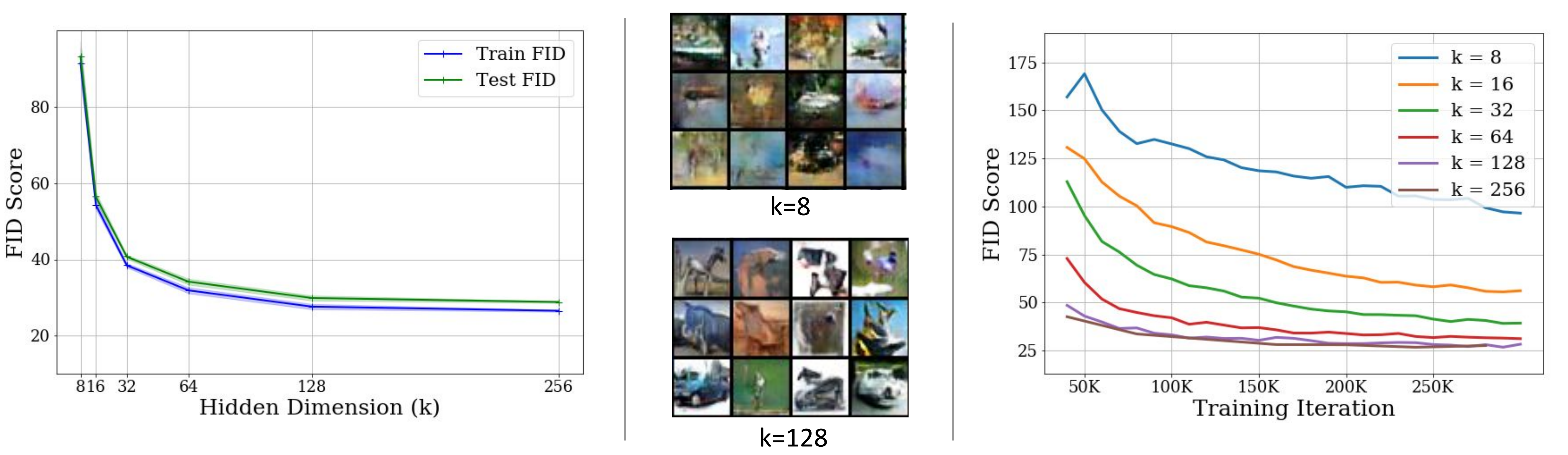}
\caption{\textbf{Overparameterization in GANs.} We train DCGAN models by varying the size of the hidden dimension $k$ (larger the $k$, more overparameterized the models are, see Fig.~\ref{fig:model_arch} for details). Overparameterized GANs enjoy improved training and test FID scores \emph{(the left panel)}, generate high-quality samples \emph{(the middle panel)} and have fast and stable convergence \emph{(the right panel)}. }
\label{fig:title_fig} 
\end{figure*}

In this paper we first theoretically study the role of overparameterization for a GAN model with a $1$-hidden layer generator and a linear discriminator. We study two optimization procedures to solve this problem: (i) using a conventional training procedure in GANs based on GDA in which generator and discriminator networks perform simultaneous steps of gradient descent to optimize their respective models, (ii) using GD to optimize generator's parameters for the optimal discriminator. The latter case corresponds to taking a sufficiently large number of gradient ascent steps to update discriminator's parameters for each GD step of the generator.    
In both cases, our results show that in an overparameterized regime, the GAN optimization converges to a global solution. To the best of our knowledge, this is the first result showing the global convergence of GDA in such settings. While in our results we focus on one-hidden layer generators and linear discriminators, our theory is based on analyzing a general class of min-max optimization problems which can be used to study a much broader class of generators and discriminators potentially including deep generators and deep random feature-based discriminators. A key component of our analysis is a novel connection to exponential stability of non-symmetric time varying dynamical systems in control theory which may have broader implications for theoretical analysis of GAN's training. Ideas from control theory have also been used for understanding and improving training dynamics of GANs in \citep{xu2019understanding, an2018pid}.

Having analyzed overparameterized GANs for relatively simple models, we next provide a comprehensive empirical study of this problem for practical GANs such as DCGAN~\citep{radford2015dcgan} and ResNet GAN~\citep{gulrajani2017WGAN} trained on CIFAR-10 and Celeb-A datasets. For example, the benefit of overparamterization in training DCGANs on CIFAR-10 is illustrated in Figure~\ref{fig:title_fig}. We have three key observations: (i) as the model becomes more overparameterized (e.g. using wider networks), the {\it training} FID scores that measure the training error, decrease. This phenomenon has been observed in other studies as well~\citep{brock2018large}. (ii) overparameterization does not hurt the {\it test} FID scores (i.e. the generalization gap remains small). This improved test-time performance can also be seen qualitatively in the center panel of Figure~\ref{fig:title_fig}, where overparameterized models produce samples of improved quality. (iii) Remarkably, overparameterized GANs, with a lot of parameters to optimize over, have significantly improved convergence behavior of GDA, both in terms of rate and stability, compared to small GAN models (see the right panel of Figure~\ref{fig:title_fig}).

In summary, in this paper
\begin{itemize}
    \item We provide the first theoretical guarantee of simultaneous GDA's global convergence for an overparameterized GAN with one-hidden neural network generator and a linear discriminator (Theorem \ref{minmaxthm}).
    \item By establishing connections with linear time-varying dynamical systems, we provide a theoretical framework to analyze simultaneous GDA's global convergence for a general overparameterized GAN (including deeper generators and random feature discriminators), under some general conditions (Theorems \ref{infmetathm} and \ref{metathm}).
    \item We provide a comprehensive empirical study of the role of model overparameterization in GANs using several large-scale experiments on CIFAR-10 and Celeb-A datasets. We observe overparameterization improves GANs' training error, generalization error, sample qualities as well as the convergence rate and stability of GDA.  
\end{itemize}


\section{Theoretical results}\label{sec:theory}

\subsection{Problem formulation}
Given $n$ data points of the form $\vct{x}_1, \vct{x}_2, \ldots, \vct{x}_n\in\R^m$, the goal of GAN's training is to find a generator that can mimic sampling from the same distribution as the training data. More specifically, the goal is to find a generator mapping $\mathcal{G}_{\vct{\theta}}(z):\R^d \rightarrow \R^m$, parameterized by $\vct{\theta}\in\R^p$, so that $\mathcal{G}_{\vct{\theta}}(\vct{z}_1), \mathcal{G}_{\vct{\theta}}(\vct{z}_2), \ldots, \mathcal{G}_{\vct{\theta}}(\vct{z}_n)$ with $\vct{z}_1, \vct{z}_2, \ldots, \vct{z}_n$ generated i.i.d.~according to $\mathcal{N}(\vct{0},\mtx{I}_d)$ has a similar empirical distribution to $\vct{x}_1, \vct{x}_2, \ldots, \vct{x}_n$\footnote{In general, the number of observed and generated samples can be different. However, in practical GAN implementations, batch sizes of observed and generated samples are usually the same. Thus, for simplicity, we make this assumption in our setup. }. To measure the discrepancy between the data points and the GAN outputs, one typically uses a discriminator mapping $\mathcal{D}_{\widetilde{\theta}}:\R^m\rightarrow\R$ parameterized with $\widetilde{\theta}\in\R^{\widetilde{p}}$. The overall training approach takes the form of the following min-max optimization problem which minimizes the worst-case discrepancy detected by the discriminator
\begin{align}
\label{mainminimax}
    \min_{\vct{\theta}} \max_{\widetilde{\vct{\theta}}}\quad \frac{1}{n}\sum_{i=1}^n \mathcal{D}_{\widetilde{\vct{\theta}}}(\vct{x}_i)-\frac{1}{n}\sum_{i=1}^n \mathcal{D}_{\widetilde{\vct{\theta}}}(\mathcal{G}_{\vct{\theta}}(\vct{z}_i))+\mathcal{R}(\widetilde{\vct{\theta}}).
\end{align}
Here, $\mathcal{R}(\widetilde{\vct{\theta}})$ is a regularizer that typically ensures the discriminator is Lipschitz. This formulation mimics the popular Wasserstein GAN~\citep{arjovsky2017WGAN} (or, IPM GAN) formulations. This optimization problem is typically solved by running Gradient Descent Ascent (GDA) on the minimization/maximization variables. 

The generator and discriminator mappings $\mathcal{G}$ and $\mathcal{D}$ used in practice are often deep neural networks. Thus, the min-max optimization problem above is highly nonlinear and non-convex concave. Saddle point optimization is a classical and fundamental problem in game theory \citep{von2007theory} and control \citep{gutman1979uncertain}. However, most of the classical results apply to the convex-concave case \citep{arrow1958studies} while the saddle point optimization of GANs is often {\it non convex-concave}. If GDA converges to the global (local) saddle points, we say it is globally (locally) stable. For a general min-max optimization, however, GDA can be trapped in a loop or even diverge. Except in some special cases (e.g. \citep{feizi2017understanding} for a quadratic GAN formulation or \citep{lei2019sgd} for the under-parametrized setup when the generator is a one-layer network), GDA is not globally stable for GANs in general  \citep{nagarajan2017gradient,mescheder2018training,adolphs2019local,mescheder2017numerics,daskalakis2020complexity}.

None of these works, however, study the role of model overparameterization in the global/local convergence (stability) of GDA. In particular, it has been empirically observed (as we also demonstrate in this paper) that when the generator/discriminator contain a large number of parameters (i.e.~are sufficiently overparameterized) GDA does indeed find (near) globally optimal solutions. In this section we wish to demystify this phenomenon from a theoretical perspective.

\subsection{Definition of Model Overparameterization}
In this paper, we use \emph{overparameterization} in the context of model parameters count. Informally speaking, overparameterized models have large number of parameters, that is we assume that the number of model parameters is sufficiently large. In specific problem setups of Section \ref{sec:theory}, we precisely compute thresholds where the number of model parameters should exceed in order to observe nice convergence properties of GDA. Note that the definition of \emph{overparameterization} based on model parameters count is related, but distinct from the complexity of the hypothesis class. For instance, in our empirical studies, when we say we \emph{overparameterize} a neural network, we fix the number of layers in the neural network and increase the hidden dimensions. Our definition does not include the case where the number of layers also increases, which forms a different hypothesis class.

\subsection{Results for one-hidden layer generators and random discriminators}
In this section, we discuss our main results on the convergence of gradient based algorithms when training GANs in the overparameterized regime. We focus on the case where the generator takes the form of a single hidden-layer ReLU network with $d$ inputs, $k$ hidden units, and $m$ outputs. Specifically, $\mathcal{G}\left(\vct{z}\right)=\mtx{V}\cdot\text{ReLU}\left(\mtx{W}\vct{z}\right)$ with $\mtx{W}\in\R^{k\times d}$ and $\mtx{V}\in\R^{m\times k}$ denoting the input-to-hidden and hidden-to-output weights. We also consider a linear discriminator of the form $\mathcal{D}(\vct{x})=\vct{d}^T\vct{x}$ with an $\ell_2$ regularizer on the weights i.e.~$\mathcal{R}(\vct{d})=-\twonorm{\vct{d}}^2/2$. The overall min-max optimization problem (\eqref{mainminimax}) takes the form
\begin{align}
\label{overparamopt}
\min_{\mtx{W}\in\R^{k\times d}}\max_{\vct{d}\in\R^m}\quad\mathcal{L}(\mtx{W}, \vct{d}):=\langle \vct{d}, \frac{1}{n}\sum_{i=1}^n \left(\vct{x}_i-\mtx{V}\text{ReLU}\left(\mtx{W}\vct{z}_i\right)\right) \rangle-\frac{\twonorm{\vct{d}}^2}{2}.
\end{align}

Note that we initialize $\mtx{V}$ at random and keep it fixed throughout the training. The common approach to solve the above optimization problem is to run a Gradient Descent Ascent (GDA) algorithm. At iteration $t$, GDA takes the form 
\begin{align}\label{gda1}
    \begin{cases}
        \vct{d}_{t+1}&=\vct{d}_t+\mu\nabla_{\vct{d}}\mathcal{L}\left(\mtx{W}_t,\vct{d}_t\right)\\
        \mtx{W}_{t+1}&=\mtx{W}_t-\eta\nabla_{\mtx{W}}\mathcal{L}\left(\mtx{W}_t,\vct{d}_t\right)
    \end{cases}
\end{align}
Next, we establish the global convergence of GDA for an overparameterized model. Note that a global saddle point $(\mtx{W}^*, \vct{d}^*)$ is defined as 
\begin{align*}
    \mathcal{L}(\mtx{W}^*, \vct{d}) \leq \mathcal{L}(\mtx{W}^*, \vct{d}^*) \leq \mathcal{L}(\mtx{W}, \vct{d}^*)
\end{align*}
for all feasible $\mtx{W}$ and $\vct{d}$. If these inequalities hold in a local neighborhood, ($\mtx{W}^*, \vct{d}^*)$ is called a local saddle point.

\begin{theorem} \label{minmaxthm} Let $\vct{x}_1, \vct{x}_2, \ldots,\vct{x}_n\in\R^m$ be $n$ training data with their mean defined as $\bar{\vct{x}}:=\frac{1}{n}\sum_{i=1}^n \vct{x}_i$. Consider the GAN model with a linear discriminator of the form $\mathcal{D}(\vct{x})=\vct{d}^T\vct{x}$ parameterized by $\vct{d}\in\R^m$ and a one hidden layer neural network generator of the form $\mathcal{G}(\vct{z})=\mtx{V}\phi(\mtx{W}z)$ parameterized by $\mtx{W}\in\R^{k \times d}$ with $\mtx{V}\in\R^{m\times k}$ a fixed matrix generated at random with i.i.d.~$\mathcal{N}(0,\sigma_v^2)$ entries. Also assume the input data to the generator $\{\vct{z}_i\}_{i=1}^n$ are generated i.i.d.~according to $\sim\mathcal{N}(\vct{0},\sigma_z^2\mtx{I}_d)$. Furthermore, assume the generator weights at initialization $\mtx{W}_0\in\R^{k\times d}$ are generated i.i.d.~according to $\mathcal{N}(0,\sigma_w^2)$. Furthermore, assume the standard deviations above obey $\sigma_{v}\sigma_{w}\sigma_{z}\geq \twonorm{\vct{\bar{x}}}/(md^{\frac{5}{2}}\log{d}^{\frac{3}{2}})$. Then, as long as 
	\begin{align*}
	k\geq C\cdot md^4\log\left(d\right)^3
	\end{align*}
	with $C$ a fixed constant, running GDA updates per \eqref{gda1} starting from the random $\mtx{W}_0$ above and $\vct{d}_0=\vct{0}$\footnote{The zero initialization of $\vct{d}$ is merely done for simplicity. A similar result can be derived for an arbitrary initialization of the discriminator's parameters with minor modifications. See Theorem \ref{infmetathm} for such a result.} with step-sizes obeying $0<\mu\le 1$ and
	$\eta=\bar{\eta}\frac{\mu}{324\cdot k\cdot \frac{d+\frac{n-1}{\pi}}{n}\cdot \sigma_{v}^2\cdot \sigma_{z}^2}$, with $\bar{\eta}\leq1$,
satisfies
	\begin{align}
	\label{conv}
	\twonorm{\frac{1}{n}\sum_{i=1}^n\mtx{V}\text{ReLU}\left(\mtx{W}_\tau\vct{z}_i\right)-\bar{\vct{x}}}\leq 5 \left(1-10^{-5}\cdot\bar{\eta}\mu \right)^{\tau}\twonorm{\frac{1}{n}\sum_{i=1}^n\mtx{V}\text{ReLU}\left(\mtx{W}_0\vct{z}_i\right)-\bar{\vct{x}}}.
	\end{align}
	This holds with probability at least $1-\left(n+5\right)e^{-\frac{m}{1500}}-5k\cdot e^{-c_1\cdot n}-\left(2k+2\right)e^{-\frac{d}{216}}-ne^{-c_2\cdot md^3\log\left(d\right)^2}$ where $c_1$, $c_2$ are fixed numerical constants.
\end{theorem}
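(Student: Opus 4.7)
My plan is to reduce Theorem~\ref{minmaxthm} to the general meta-theorems (Theorems~\ref{infmetathm} and~\ref{metathm}) by verifying that, in the overparameterized regime, the GDA dynamics on~\eqref{overparamopt} can be recast as a well-conditioned linear time-varying (LTV) system on the residual and discriminator. Define the residual $\vct{r}_t := \bar{\vct{x}} - \tfrac{1}{n}\sum_i \mtx{V}\,\mathrm{ReLU}(\mtx{W}_t \vct{z}_i)$ and let $f(\mtx{W}) := \tfrac{1}{n}\sum_i \mtx{V}\,\mathrm{ReLU}(\mtx{W}\vct{z}_i)$ with Jacobian $\mathcal{J}(\mtx{W}) : \mathbb{R}^{k\times d} \to \mathbb{R}^m$. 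The ascent step gives $\vct{d}_{t+1} = (1-\mu)\vct{d}_t + \mu \vct{r}_t$, and a first-order expansion of $f$ gives $\vct{r}_{t+1} = \vct{r}_t - \eta\, \mathcal{J}(\mtx{W}_t)\mathcal{J}(\mtx{W}_t)^\top \vct{d}_t + \vct{e}_t$, where $\vct{e}_t$ collects the ReLU linearization error. Setting $\mtx{K}_t := \mathcal{J}(\mtx{W}_t)\mathcal{J}(\mtx{W}_t)^\top \in \mathbb{R}^{m\times m}$, the pair $(\vct{r}_t,\vct{d}_t)$ obeys an LTV recursion whose stability the meta-theorem reduces to three quantitative checks: (i) a lower bound on $\lambda_{\min}(\mtx{K}_t)$, (ii) an upper bound on $\lambda_{\max}(\mtx{K}_t)$, and (iii) control of the perturbation $\vct{e}_t$.

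The first step is to analyze the spectrum of the NTK-like matrix $\mtx{K}_0$ at initialization. Here $\mathcal{J}(\mtx{W}_0)\mathcal{J}(\mtx{W}_0)^\top = \tfrac{1}{n^2}\sum_{i,j}(\vct{z}_i^\top\vct{z}_j)\,\mtx{V}\,\mtx{D}_i \mtx{D}_j\,\mtx{V}^\top$, where $\mtx{D}_i = \mathrm{diag}(\mathbbm{1}\{\mtx{W}_0 \vct{z}_i > 0\})$. Conditioning on the activation patterns and using standard concentration for the Gaussian matrix $\mtx{V}$ (with variance $\sigma_v^2$), together with the near-orthogonality of the $\vct{z}_i$'s ($\vct{z}_i^\top \vct{z}_j \approx \sigma_z^2 d\,\delta_{ij}$ for $i\neq j$ up to $\mathcal{O}(\sigma_z^2\sqrt{d})$ cross terms), I will show that with the stated high probability
\[
\tfrac{1}{2}\,k\sigma_v^2\sigma_z^2 \tfrac{d + (n-1)/\pi}{n}\,\mtx{I}_m \;\preceq\; \mtx{K}_0 \;\preceq\; 2\,k\sigma_v^2\sigma_z^2 \tfrac{d + (n-1)/\pi}{n}\,\mtx{I}_m,
\]
which is the source of the step-size choice $\eta = \bar{\eta}\mu/[324\, k\, \sigma_v^2\sigma_z^2 (d+(n-1)/\pi)/n]$ in the theorem. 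The expectation of $\mtx{D}_i\mtx{D}_j$ with i.i.d.\ Gaussian $\mtx{W}_0$ gives the arcsine kernel whose diagonal contributes $1/2$ and whose off-diagonal is small, explaining the $d+(n-1)/\pi$ term.

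Next, I will set up a self-bounding induction to prove the lazy-training property: assuming~\eqref{conv} has held up to step $\tau$, the update sizes $\|\mtx{W}_{t+1}-\mtx{W}_t\|_F$ are geometrically bounded in $\|\vct{r}_t\|_{\ell_2}$ plus a small contribution from $\|\vct{d}_t\|_{\ell_2}$. Summing the geometric series gives $\|\mtx{W}_\tau-\mtx{W}_0\|_F \lesssim \|\vct{r}_0\|_{\ell_2}/\sqrt{k\sigma_v^2\sigma_z^2 d}$. Because the generator is overparameterized ($k \gtrsim md^4\log^3 d$), this radius is small enough that (a) only a tiny fraction of ReLU activation patterns $\mtx{D}_i$ flip, so the sign-pattern perturbation is bounded using anticoncentration of $\vct{z}_i^\top \mtx{w}_r$; (b) the perturbed Jacobian satisfies $\lambda_{\min}(\mtx{K}_t) \geq \tfrac{1}{3}\lambda_{\min}(\mtx{K}_0)$ and $\lambda_{\max}(\mtx{K}_t) \leq 3\lambda_{\max}(\mtx{K}_0)$; and (c) the first-order linearization error obeys $\|\vct{e}_t\|_{\ell_2} \leq \tfrac{1}{100}\eta\lambda_{\min}(\mtx{K}_0)\|\vct{d}_t\|_{\ell_2}$. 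The assumption on $\sigma_v\sigma_w\sigma_z$ is used here to ensure that $\|\vct{r}_0\|_{\ell_2}$ is comparable to $\|\bar{\vct{x}}\|_{\ell_2}$ so these inequalities can be calibrated cleanly.

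With these three ingredients verified, the LTV system is a small perturbation of a contraction: the $2\times 2$ block matrix governing $(\vct{r}_t,\vct{d}_t)$ has spectral radius at most $1 - c\,\eta\lambda_{\min}(\mtx{K}_0)\cdot\mu \asymp 1 - c'\bar{\eta}\mu$. Invoking Theorem~\ref{infmetathm} (or its finite-$n$ counterpart Theorem~\ref{metathm}) converts this spectral-radius bound into the geometric rate in~\eqref{conv}. I expect the main obstacle to be the lazy-training/ReLU-activation step: controlling the number of sign flips of $\mathrm{ReLU}$ uniformly over $t$ while simultaneously proving the self-bounding Frobenius bound on $\mtx{W}_t - \mtx{W}_0$ requires a careful inductive argument that exploits the $d^4$ factor in the overparameterization to absorb the $\mathrm{poly}(d,\log d)$ losses from concentration of the Gaussian inputs, the Gaussian initialization, and the union bound over activation patterns. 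The remaining probability budget in the theorem statement precisely matches the union bound over these events.
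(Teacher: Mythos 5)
Your proposal follows essentially the same route as the paper's own proof: it reduces Theorem~\ref{minmaxthm} to the meta-theorem by verifying two-sided spectral control of the Jacobian (equivalently of $\mathcal{J}\mathcal{J}^T$, with the $\frac{d+\frac{n-1}{\pi}}{n}$ scaling) at initialization, an activation-flip-based Jacobian perturbation bound in a neighborhood of $\mtx{W}_0$, and an initial-misfit/radius calibration that produces the $k\gtrsim md^4\log^3 d$ requirement and the condition on $\sigma_v\sigma_w\sigma_z$ --- exactly the roles played by Lemmas~\ref{sigma_min}, \ref{spectral norm}, \ref{perturbation 1} and \ref{initial misfit}. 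The only substantive difference is organizational: the self-bounding lazy-training induction you plan to carry out (and the attendant minor slips, e.g.\ the extra factor of $\mu$ in your spectral-radius expression and the missing $n$-dependence in the radius) is already internal to the proof of Theorem~\ref{metathm} (its Parts I, IV, V), so when invoking that theorem you need only verify Assumptions 1--3 at the radius $R$ of \eqref{radius}.
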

To better understand the implications of the above theorem, note that the objective of \eqref{overparamopt} can be simplified by solving the inner maximization in a closed form so that the min-max problem in \eqref{overparamopt} is equivalent to the following single minimization problem:
\begin{align}
\label{minopt}
\min_{\mtx{W}}\text{ }\mathcal{L}(\mtx{W}):= \frac{1}{2}\twonorm{\frac{1}{n}\sum_{i=1}^n\mtx{V}\text{ReLU}\left(\mtx{W}\vct{z}_i\right)-\bar{\vct{x}}}^2,   
\end{align}
which has a global optimum of zero. As a result \eqref{conv} in Theorem \ref{minmaxthm} guarantees that running simultaneous GDA updates achieves the global optimum. This holds as long as the generator network is sufficiently overparameterized in the sense that the number of hidden nodes is polynomially large in its output dimension $m$ and input dimension $d$. Interestingly, the rate of convergence guaranteed by this result is geometric, guaranteeing fast GDA convergence to the global optima. To the extent of our knowledge, this is the first result that establishes the global convergence of simultaneous GDA for an overparameterized GAN model. 

While the result proved above shows the global convergence of GDA for a GAN with 1-hidden layer generator and a linear discriminator, for a general GAN model, local saddle points may not even exist and GDA may converge to approximate local saddle points~\citep{Berard2020A, farnia2020gans}. For a general min-max problem, \citep{daskalakis2020complexity} has recently shown that \emph{approximate} local saddle points exist under some general conditions on the lipschitzness of the objective function. Understanding GDA dynamics for a general GAN remains an important open problem. Our result in Theorem \ref{minmaxthm} is a first and important step towards that.

We acknowledge that the considered GAN formulation of \eqref{overparamopt} is very simpler than GANs used in practice. Specially, since the discriminator is linear, this GAN can be viewed as a moment-matching GAN~\citep{li2017mmdgan} pushing first moments of input and generative distributions towards each other. Alternatively, this GAN formulation can be viewed as one instance of the Sliced Wasserstein GAN~\citep{deshpande2018generative}. Although the maximization on discriminator's parameters is concave, the minimization over the generator's parameters is still non-convex due to the use of a neural-net generator. Thus, the overall optimization problem is a non-trivial non-convex concave min-max problem. From that perspective, our result in Theorem \ref{minmaxthm} {\it partially} explains the role of model overparameterization in GDA's convergence for GANs. 

Given the closed form \eqref{minopt}, one may wonder what would happen if we run gradient descent on this minimization objective directly. That is running gradient descent updates of the form $\mtx{W}_{\tau+1}=\mtx{W}_{\tau}-\eta\nabla {\cal{L}}(\mtx{W}_{\tau})$ with $\mathcal{L}(\mtx{W})$ given by \eqref{minopt}. This is equivalent to GDA but instead of running one gradient ascent iteration for the maximization iteration we run infinitely many. Interestingly, in some successful GAN implementations~\citep{gulrajani2017WGAN}, often more updates on the discriminator's parameters are run per generator's updates. This is the subject of the next result.
\begin{theorem} \label{minthm} Consider the setup of Theorem \ref{minmaxthm}.
Then as long as
\begin{align*}
& k\geq C\cdot md^4\log\left(d\right)^3
\end{align*}
with $C$ a fixed numerical constant, running GD updates of the form $\mtx{W}_{\tau+1}=\mtx{W}_{\tau}-\eta\nabla {\cal{L}}(\mtx{W}_{\tau})$ on the loss given in \eqref{minopt} with step-size $\eta=\frac{2\bar{\eta}}{243k\cdot\frac{d+\frac{n-1}{\pi}}{n}\cdot\sigma_{v}^2\cdot \sigma_{z}^2}$, with $\bar{\eta}\leq 1$, satisfies
	\begin{align}
	\label{conv2}
	\twonorm{\frac{1}{n}\sum_{i=1}^n\mtx{V}\text{ReLU}\left(\mtx{W}_\tau\vct{z}_i\right)-\bar{\vct{x}}}\leq  \left(1-4\times10^{-6}\cdot\bar{\eta} \right)^{\tau}\twonorm{\frac{1}{n}\sum_{i=1}^n\mtx{V}\text{ReLU}\left(\mtx{W}_0\vct{z}_i\right)-\bar{\vct{x}}}.
	\end{align}
This holds with probability at least $1-\left(n+5\right)e^{-\frac{m}{1500}}-5k\cdot e^{-c_1\cdot n}-\left(2k+2\right)e^{-\frac{d}{216}}-ne^{-c_2\cdot md^3\log{\left(d\right)}^2}$ with $c_1$, $c_2$ fixed numerical constants.  
\end{theorem}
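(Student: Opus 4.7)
The plan is to analyze the residual dynamics $\vct{r}_\tau := \vct{f}(\mtx{W}_\tau) - \bar{\vct{x}}$, where $\vct{f}(\mtx{W}) := \frac{1}{n}\sum_{i=1}^n \mtx{V}\,\text{ReLU}(\mtx{W}\vct{z}_i)$ so that $\mathcal{L}(\mtx{W}) = \tfrac{1}{2}\twonorm{\vct{r}(\mtx{W})}^2$ and $\nabla\mathcal{L}(\mtx{W}) = \mathcal{J}(\mtx{W})^T\vct{r}(\mtx{W})$, with $\mathcal{J}(\mtx{W})$ the Jacobian of $\vct{f}$ with respect to the vectorized $\mtx{W}$. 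Because ReLU is piecewise linear, along each GD segment $[\mtx{W}_\tau,\mtx{W}_{\tau+1}]$ one obtains
\begin{align*}
    \vct{r}_{\tau+1} - \vct{r}_\tau \;=\; \widetilde{\mathcal{J}}_\tau\,(\mtx{W}_{\tau+1}-\mtx{W}_\tau) \;=\; -\eta\,\widetilde{\mathcal{J}}_\tau\,\mathcal{J}(\mtx{W}_\tau)^T\vct{r}_\tau,
\end{align*}
where $\widetilde{\mathcal{J}}_\tau$ is an averaged Jacobian along the segment. This is the recursion to iterate: if $\mathcal{J}(\mtx{W}_\tau)\mathcal{J}(\mtx{W}_\tau)^T$ retains a spectral gap of the right order and $\widetilde{\mathcal{J}}_\tau$ stays close to $\mathcal{J}(\mtx{W}_\tau)$ throughout training, the residual must contract geometrically. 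This is precisely the NTK-style ``lazy training'' framework, and the argument parallels that of Theorem \ref{minmaxthm}; indeed, GD on $\mathcal{L}(\mtx{W})$ is the limit of GDA with an infinitely responsive discriminator choosing $\vct{d}_\tau=-\vct{r}_\tau$, so the concentration and Jacobian-stability machinery required for Theorem \ref{minmaxthm} transfers essentially intact and simplifies here because the discriminator dynamics no longer need to be tracked.

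First I would control the spectrum of the initial Jacobian. Using Gaussian concentration on both $\mtx{V}$ and the inputs $\{\vct{z}_i\}$, together with the standard shallow-ReLU NTK calculation, one shows that, on the event whose failure probability is reported in the theorem statement,
\begin{align*}
    \lambda_{\min}\!\left(\mathcal{J}_0\mathcal{J}_0^T\right) \;\geq\; \alpha := c\,k\,\sigma_{v}^2\sigma_{z}^2, \qquad \opnorm{\mathcal{J}_0}^2 \;\leq\; \beta := C\,k\cdot\tfrac{d+(n-1)/\pi}{n}\cdot\sigma_{v}^2\sigma_{z}^2,
\end{align*}
so that $\beta$ matches (up to absolute constants) the inverse step size prescribed in the statement, which is the canonical rule $\eta\asymp 1/\beta$ for least-squares GD. Next I would prove Jacobian stability: there is a radius $R$ growing polynomially with $k$ such that whenever $\fronorm{\mtx{W}-\mtx{W}_0}\leq R$, only a small fraction of the $k$ hidden units flips activation pattern on each training input $\vct{z}_i$ (using anti-concentration of the Gaussian pre-activations at $\mtx{W}_0$), hence $\mathcal{J}(\mtx{W})\mathcal{J}(\mtx{W})^T \succeq (\alpha/2)\,\mtx{I}_m$ and $\opnorm{\mathcal{J}(\mtx{W})-\mathcal{J}_0}\leq \sqrt{\beta}/4$ hold uniformly on this ball.

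Third, I would close the argument by induction on $\tau$. Assuming \eqref{conv2} holds up to time $\tau$, the cumulative drift telescopes via the GD update as
\begin{align*}
    \fronorm{\mtx{W}_\tau-\mtx{W}_0} \;\leq\; \eta\sum_{s=0}^{\tau-1}\opnorm{\mathcal{J}_s}\,\twonorm{\vct{r}_s} \;\lesssim\; \frac{\eta\sqrt{\beta}\,\twonorm{\vct{r}_0}}{4\times 10^{-6}\bar{\eta}} \;\asymp\; \frac{\twonorm{\vct{r}_0}}{\sqrt{\beta}},
\end{align*}
which remains below the stability radius $R$ once $k\geq C\,md^4\log(d)^3$, with the hypothesis $\sigma_{v}\sigma_{w}\sigma_{z}\geq \twonorm{\bar{\vct{x}}}/(m d^{5/2}\log(d)^{3/2})$ keeping $\twonorm{\vct{r}_0}$ polynomially bounded. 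Given Jacobian stability, one then expands the residual as
\begin{align*}
    \twonorm{\vct{r}_{\tau+1}}^2 \;\leq\; \twonorm{\vct{r}_\tau}^2 - 2\eta\,\vct{r}_\tau^T\mathcal{J}_\tau\mathcal{J}_\tau^T\vct{r}_\tau + \eta^2\twonorm{\mathcal{J}_\tau\mathcal{J}_\tau^T\vct{r}_\tau}^2 + \text{(drift)},
\end{align*}
and the prescribed $\eta = 2\bar{\eta}/\big(243\,k\cdot\tfrac{d+(n-1)/\pi}{n}\cdot\sigma_{v}^2\sigma_{z}^2\big)$ yields the contraction factor $(1 - 4\times 10^{-6}\bar{\eta})$ once the spectrum and drift constants are absorbed.

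The main obstacle is the non-smoothness of ReLU: $\mathcal{J}(\mtx{W})$ is discontinuous in $\mtx{W}$, so the ``averaged Jacobian'' identity above needs justification. The clean workaround is to partition the segment $[\mtx{W}_\tau,\mtx{W}_{\tau+1}]$ at the finitely many points where some pre-activation $(\mtx{W}\vct{z}_i)_j$ crosses zero; on each piece $\vct{f}$ is exactly affine and the differential identity is valid pointwise, so $\widetilde{\mathcal{J}}_\tau$ is a genuine convex combination of well-defined piecewise Jacobians. The Jacobian-stability step must then be stated as a uniform operator-norm bound over all $\mtx{W}$ in the ball of radius $R$ and all admissible activation patterns, which is exactly where the overparameterization budget $k\gtrsim md^4\log(d)^3$ is consumed: one needs enough hidden units so that the bad event of too many sign flips simultaneously on all $n$ training inputs is controlled by the stated failure probability.
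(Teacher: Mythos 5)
Your overall route is essentially the paper's. The paper also reduces Theorem \ref{minthm} to (i) a generic convergence theorem for GD on a nonlinear least-squares objective under a lower bound on $\sigma_{\min}(\mathcal{J}(\mtx{W}_0))$, an operator-norm bound, and a Jacobian-perturbation bound on a ball around $\mtx{W}_0$ of radius comparable to $\twonorm{\vct{r}_0}/\sigma_{\min}(\mathcal{J}_0)$, plus (ii) a high-probability verification of those conditions for $f(\mtx{W})=\frac{1}{n}\sum_i\mtx{V}\phi(\mtx{W}\vct{z}_i)$ via Lemmas \ref{sigma_min}, \ref{spectral norm}, \ref{perturbation 1} and \ref{initial misfit}, with the same $k\gtrsim md^4\log(d)^3$ and $\sigma_v\sigma_w\sigma_z$ bookkeeping. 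The only structural difference is that the paper imports step (i) wholesale as Theorem \ref{non-smooth} from \citet{oymak2020towards}, whereas you re-derive it (averaged-Jacobian recursion, drift induction, descent expansion, splitting the GD segment at activation crossings to justify the averaged Jacobian); that re-derivation is the standard proof of the cited result, so the approaches coincide in substance.

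One quantitative point must be fixed before your constants chain to the stated rate. You posit $\lambda_{\min}(\mathcal{J}_0\mathcal{J}_0^T)\ge\alpha:=c\,k\sigma_v^2\sigma_z^2$ while $\opnorm{\mathcal{J}_0}^2\le\beta:=C\,k\,\frac{d+(n-1)/\pi}{n}\,\sigma_v^2\sigma_z^2$. With the prescribed $\eta\asymp\bar{\eta}/\beta$, the per-step contraction this yields is $1-\Theta(\eta\alpha)=1-\Theta\bigl(\bar{\eta}\,\tfrac{n}{d+(n-1)/\pi}\bigr)$, which is \emph{not} the dimension-free $1-4\times10^{-6}\bar{\eta}$ of \eqref{conv2} when $d\gg n$ (the theorem imposes no relation between $d$ and $n$). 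The remedy is to keep the factor you dropped: by Lemma \ref{expected jacobian}, $\E[\mathcal{J}_0\mathcal{J}_0^T]=\frac{\sigma_z^2(d+(n-1)/\pi)}{2n}\mtx{V}\mtx{V}^T$, and Lemma \ref{sigma_min} gives, for $k\gtrsim m$, $\lambda_{\min}(\mathcal{J}_0\mathcal{J}_0^T)\gtrsim k\sigma_v^2\sigma_z^2\,\frac{d+(n-1)/\pi}{n}$, i.e.\ the same scale as $\beta$, so $\mathcal{J}_0$ has $O(1)$ condition number and $\eta\lambda_{\min}\asymp\bar{\eta}$. Relatedly, the perturbation tolerance must be calibrated to $\sigma_{\min}(\mathcal{J}_0)$ rather than to $\opnorm{\mathcal{J}_0}$ (the paper enforces $\opnorm{\mathcal{J}(\mtx{W})-\mathcal{J}_0}\le\alpha/3$ with $\alpha\asymp\sigma_{\min}(\mathcal{J}_0)$); with your loose $\alpha$, a perturbation of size $\sqrt{\beta}/4$ would not imply $\mathcal{J}(\mtx{W})\mathcal{J}(\mtx{W})^T\succeq(\alpha/2)\mtx{I}$. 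Both issues vanish once the tight lower bound is used, and the remainder of your plan (drift $\lesssim\twonorm{\vct{r}_0}/\sigma_{\min}(\mathcal{J}_0)$ kept below the radius on which the activation-flip count controls the Jacobian perturbation, with the $\sigma_v\sigma_w\sigma_z$ hypothesis absorbing $\twonorm{\bar{\vct{x}}}$ in the initial misfit) matches the paper's argument.
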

This theorem states that if we solve the max part of \eqref{overparamopt} in closed form and run GD on the loss function per \eqref{minopt} with enough overparameterization, the loss will decrease at a geometric rate to zero. This result holds again when the model is sufficiently overparameterized. The proof of Theorem \ref{minthm} relies on a result from \citep{oymak2020towards}, which was developed in the framework of supervised learning. Also note that the amount of overparameterization required in both Theorems \ref{minmaxthm} and \ref{minthm} is the same.

\begin{figure*}[t!]
    \centering
    \begin{subfigure}[t]{0.33\textwidth}
        \centering
        \includegraphics[height=1.4in]{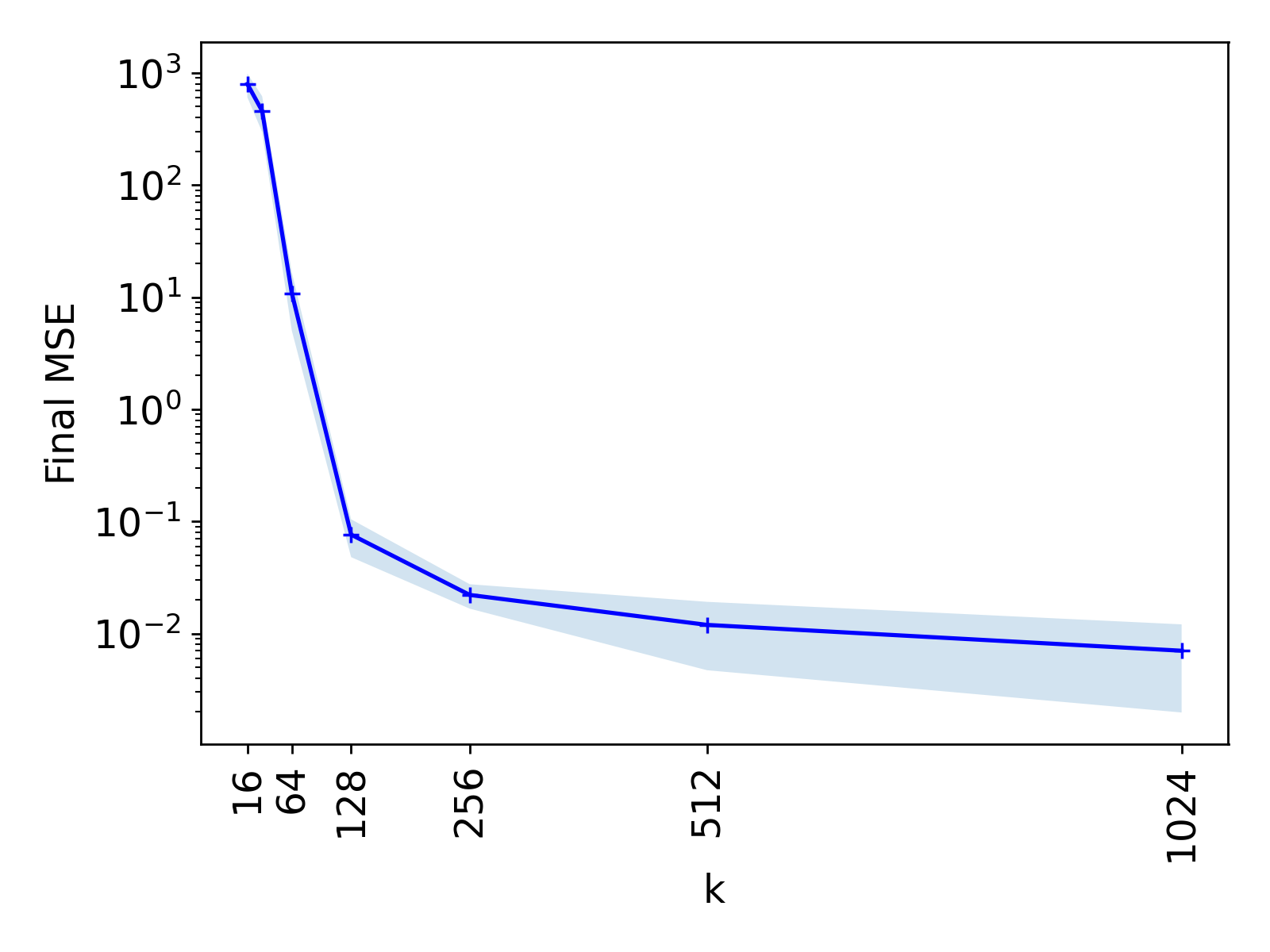}
        \caption{Discriminator trained to optimality}
    \end{subfigure}%
    ~ 
    \begin{subfigure}[t]{0.33\textwidth}
        \centering
        \includegraphics[height=1.4in]{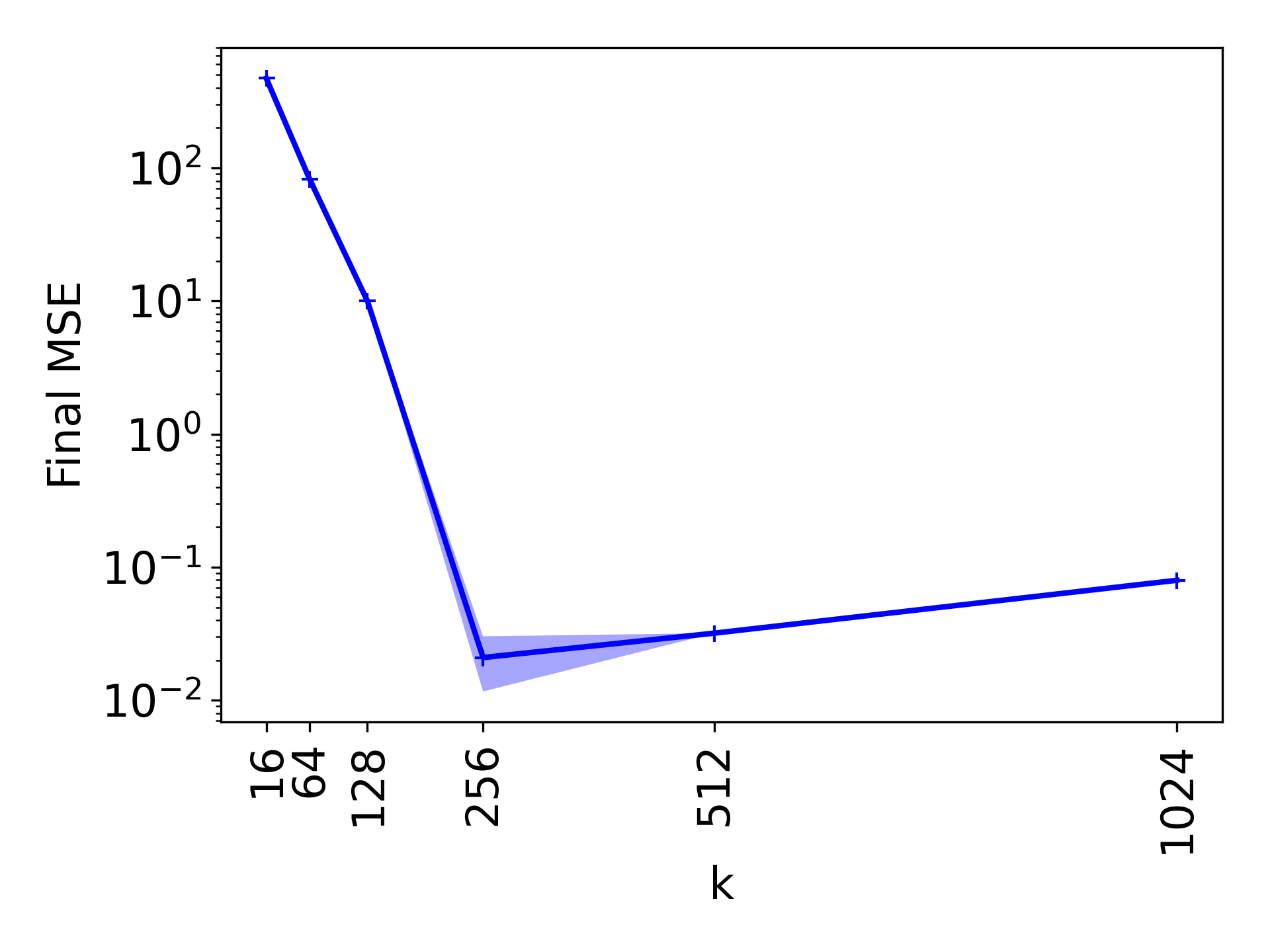}
        \caption{Gradient Descent Ascent}
    \end{subfigure}%
    ~
    \begin{subfigure}[t]{0.33\textwidth}
        \centering
        \includegraphics[height=1.4in]{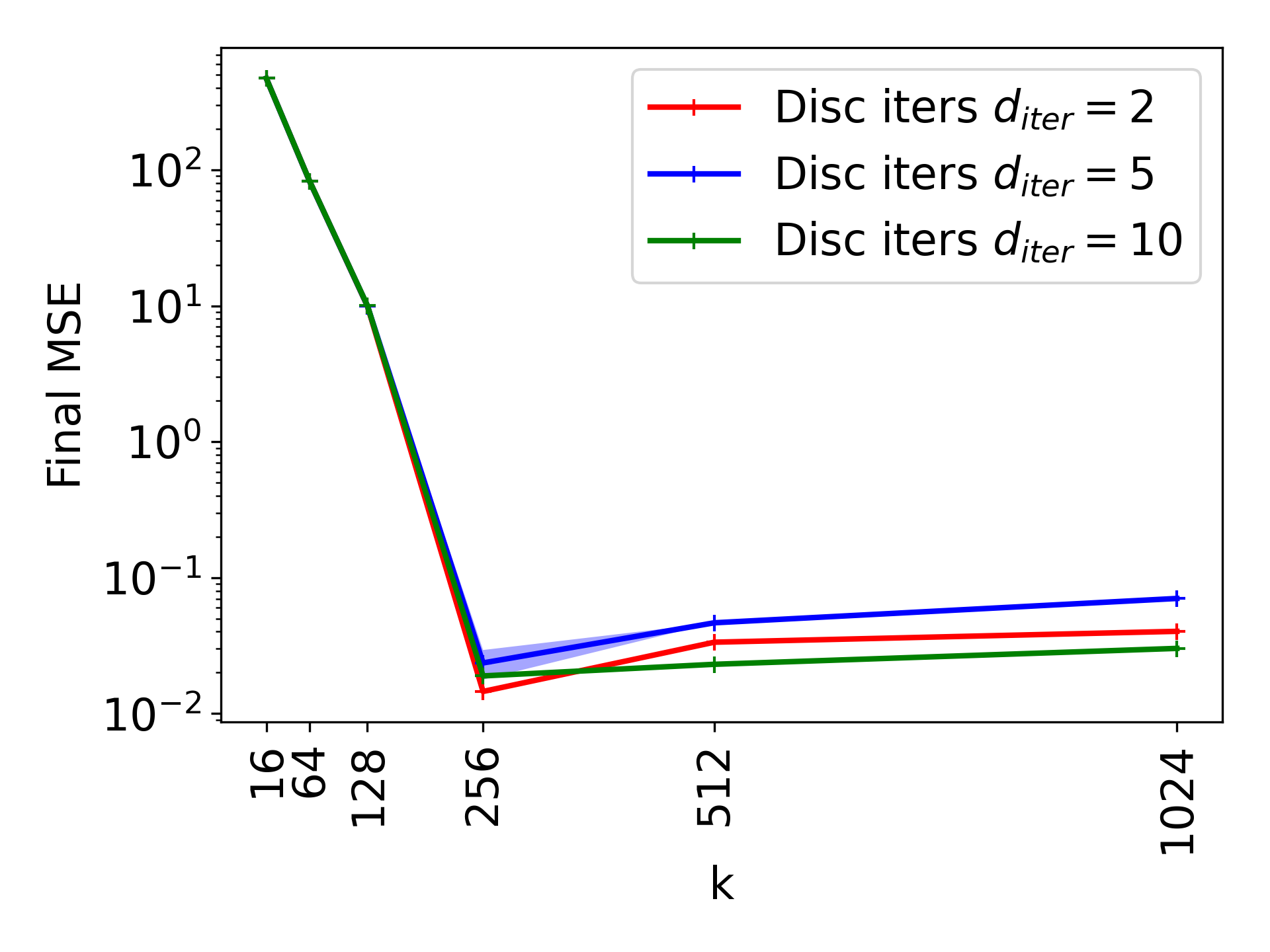}
        \caption{$d_{iter}$ steps of discriminator update per generator iteration}
    \end{subfigure}
    \caption{\textbf{Convergence plot} a GAN model with linear discriminator and 1-hidden layer generator as the hidden dimension ($k$) increases. \textit{Final mse} is the mse loss between true data mean and the mean of generated distribution. Over-parameterized models show improved convergence}
    \label{fig:gaussian_overparam}
\end{figure*}

\subsection{Can the analysis be extended to more general GANs?}
\label{gensec}
In the previous section, we focused on the implications of our results for one-hidden layer generator and linear discriminator. However, as it will become clear in the proofs, our theoretical results are based on analyzing the convergence behavior of GDA on a more general min-max problem of the form
\begin{align} \label{mainminmax_problem}
\underset{\vct{\theta}\in\R^p}{\text{min }}\underset{\vct{d}\in\R^m}{\text{max }} h(\vct{\theta},\vct{d}):=\left\langle \vct{d}, f\left(\vct{\theta}\right)-\vct{y}\right\rangle-\frac{\twonorm{\vct{d}}^2}{2},
\end{align}
where $f:\R^p\rightarrow\R^m$ denotes a general nonlinear mapping.
\begin{theorem}[Informal version of Theorem \ref{metathm}]\label{infmetathm} Consider a general nonlinear mapping $f:\R^p\rightarrow\R^m$ with the singular values of its Jacobian mapping around initialization obeying certain assumptions (most notably $\sigma_{\min}(\mathcal{J}(\vct{\theta}_0))\ge \alpha$).  Then, running GDA iterations of the form 
\begin{align}
\begin{cases}
\vct{d}_{t+1}=\vct{d}_t + \mu \nabla_{\vct{d}} h(\vct{\theta}_t, \vct{d}_t)\\
\vct{\theta}_{t+1}=\vct{\theta}_t-\eta\nabla_{\vct{\theta}} h(\vct{\theta}_t, \vct{d}_t)
\end{cases}
\end{align}
with sufficiently small step sizes $\eta$ and $\mu$ obeys
\begin{align*}
\twonorm{f(\vct{\theta}_t)-\vct{y}}\le\gamma\left(1-\frac{\eta\alpha^2}{2}\right)^t \sqrt{\twonorm{f(\vct{\theta}_0)-\vct{y}}^2+\twonorm{\vct{d}_0}^2}. 
\end{align*}
\end{theorem}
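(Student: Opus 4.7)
The plan is to reduce the min-max dynamics to a coupled time-varying linear recursion on the residual $\vct{r}_t := f(\vct{\theta}_t)-\vct{y}$ and the dual variable $\vct{d}_t$, and then analyze it as a perturbed linear dynamical system. Since $\nabla_{\vct{d}} h = \vct{r}-\vct{d}$ and $\nabla_{\vct{\theta}} h = \mathcal{J}(\vct{\theta})^T\vct{d}$, the GDA updates read $\vct{d}_{t+1} = (1-\mu)\vct{d}_t + \mu \vct{r}_t$ and $\vct{\theta}_{t+1} = \vct{\theta}_t - \eta\,\mathcal{J}(\vct{\theta}_t)^T\vct{d}_t$. A first-order Taylor expansion of $f$ along the generator step gives
\begin{equation*}
\vct{r}_{t+1} = \vct{r}_t - \eta\,\mathcal{J}(\vct{\theta}_t)\mathcal{J}(\vct{\theta}_t)^T \vct{d}_t + \vct{e}_t,
\end{equation*}
where $\vct{e}_t$ is a higher-order term controlled by the smoothness of $f$. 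So up to the error $\vct{e}_t$, the pair $(\vct{r}_t,\vct{d}_t)$ obeys a linear time-varying recursion driven by the (non-symmetric) block matrix $\bigl[\begin{smallmatrix} I & -\eta J_tJ_t^T \\ \mu I & (1-\mu)I\end{smallmatrix}\bigr]$ with $J_t := \mathcal{J}(\vct{\theta}_t)$.

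The second step is to exhibit a Lyapunov function for this recursion under the spectral hypothesis $\sigma_{\min}(J_t)\geq \alpha/\sqrt{2}$, which we bootstrap later. I plan to use a weighted quadratic of the form $V_t := \twonorm{\vct{r}_t}^2 + \beta\twonorm{\vct{d}_t - \vct{r}_t}^2$ for a carefully chosen $\beta>0$ depending on $\eta,\mu$ and $\alpha$. The change of variable $\vct{e}_t := \vct{d}_t-\vct{r}_t$ gives $\vct{e}_{t+1} = (1-\mu)\vct{e}_t + \eta J_tJ_t^T \vct{d}_t - \vct{e}_t^{\mathrm{err}}$ and $\vct{r}_{t+1} = (I - \eta J_tJ_t^T)\vct{r}_t - \eta J_tJ_t^T \vct{e}_t + \vct{e}_t$. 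Expanding $V_{t+1}$ and using $J_tJ_t^T \succeq (\alpha^2/2) I$ together with an upper bound $J_tJ_t^T \preceq \beta_{\max}I$ from the assumptions, the dominant negative term $-\eta\alpha^2\twonorm{\vct{r}_t}^2$ absorbs the cross terms coming from the non-symmetric structure, provided $\eta$ is chosen small compared to $\mu/\beta_{\max}$ and $\mu\leq 1$. This yields $V_{t+1}\leq (1-\eta\alpha^2/2)V_t$ plus a small perturbation from the Taylor remainder.

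The third step is to close the loop by induction. Assuming the geometric decay of $V_t$ up to step $t$, I would bound the cumulative parameter displacement $\twonorm{\vct{\theta}_t-\vct{\theta}_0}$ by a telescoping sum $\eta\sum_{s<t}\opnorm{J_s}\twonorm{\vct{d}_s}$, which converges thanks to the geometric decay of $V_s$ (and hence of $\twonorm{\vct{d}_s}$). A Jacobian-perturbation assumption of the form $\opnorm{\mathcal{J}(\vct{\theta})-\mathcal{J}(\vct{\theta}_0)}\leq L\twonorm{\vct{\theta}-\vct{\theta}_0}$ then ensures $\sigma_{\min}(J_t)\geq \alpha/\sqrt{2}$ throughout the trajectory, and also controls the Taylor remainder $\vct{e}_t$ by $\tfrac{L}{2}\twonorm{\vct{\theta}_{t+1}-\vct{\theta}_t}^2$. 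These perturbations are absorbed into the Lyapunov decay so long as the overparameterization is large enough that the ratio $\beta_{\max}/\alpha^2$ stays bounded and the step-sizes are taken sufficiently small. Extracting $\twonorm{\vct{r}_t}^2\leq V_t$ from the final inequality produces the stated decay $\twonorm{f(\vct{\theta}_t)-\vct{y}}\leq \gamma(1-\eta\alpha^2/2)^t\sqrt{\twonorm{\vct{r}_0}^2+\twonorm{\vct{d}_0}^2}$.

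The principal obstacle is the non-symmetric, time-varying block structure of the linearized update: unlike the purely minimization setting, one cannot diagonalize $J_tJ_t^T$ and read off decay from an eigenvalue bound, and the natural Lyapunov $\twonorm{\vct{r}_t}^2$ fails because the $-2\eta \vct{r}_t^T J_tJ_t^T \vct{d}_t$ cross term has indefinite sign when $\vct{d}_t\neq\vct{r}_t$. The key technical work is in calibrating $\beta$, $\eta$, $\mu$ so that the quadratic form in $(\vct{r}_t,\vct{e}_t)$ appearing in $V_{t+1}-(1-\eta\alpha^2/2)V_t$ is negative semidefinite uniformly over admissible $J_t$; this is precisely the discrete-time exponential stability statement for the associated non-symmetric system that the paper borrows from control theory.
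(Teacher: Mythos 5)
Your strategy is sound, but it takes a genuinely different route from the paper at the key stability step, so let me compare. The paper never forms a Taylor remainder: using the average Jacobian $\mathcal{J}(\vct{\theta}_{t+1},\vct{\theta}_t)$ it writes an \emph{exact} recursion $\vct{z}_{t+1}=\mtx{A}_t\vct{z}_t$ for $\vct{z}_t=(\vct{r}_t,\vct{d}_t)$, compares it to the time-invariant system built from $\mathcal{J}_0$, proves exponential stability of that comparison system by an eigenvalue computation (spectral radius $\le 1-\eta\alpha^2$ when $\mu/\eta\ge 4\beta^2$) plus an explicit bound $\gamma\le 5$ on the eigenvector conditioning of the $2\times 2$ blocks, and then transfers the decay to the true time-varying iterates via a perturbation series controlled by a discrete Gronwall-type comparison lemma, inside the same induction you describe for keeping the iterates in a ball. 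You instead certify decay of the time-varying system directly with the weighted Lyapunov function $V_t=\|\vct{r}_t\|^2+\beta\|\vct{d}_t-\vct{r}_t\|^2$; since $\mathcal{J}_t\mathcal{J}_t^T$ is symmetric, the one-step analysis decouples along its eigenbasis and reduces to a $2\times 2$ negative-semidefiniteness check in $(\eta,\mu,\beta)$, which does go through for an absolute-constant weight under $\mu/\eta\gtrsim\sigma_{\max}^2$ and $\mu\le 1$. This buys you a cleaner treatment of time variation (no comparison system, no Gronwall accumulation, and your prefactor comes out smaller than the paper's $\gamma\le5$), at the cost of an extra smoothness hypothesis (Lipschitz Jacobian) to control the Taylor remainder; you could drop that entirely by using the average-Jacobian identity instead of a first-order expansion, in which case the perturbation is $\eta(\mathcal{J}_t\mathcal{J}_t^T-\mathcal{J}_{t+1,t}\mathcal{J}_t^T)\vct{d}_t$, bounded by $2\eta\beta\epsilon\|\vct{d}_t\|$ under the paper's bounded-variation assumption alone.

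There is one concrete gap in your final step: if you only establish $V_{t+1}\le(1-\eta\alpha^2/2)V_t$, then $\|\vct{r}_t\|\le\sqrt{V_t}$ gives $\|\vct{r}_t\|\le(1-\eta\alpha^2/2)^{t/2}\sqrt{V_0}$, i.e.\ roughly rate $1-\eta\alpha^2/4$ per step on the \emph{norm}, not the stated $(1-\eta\alpha^2/2)^t$. Because $V_t$ is a squared quantity, you need the Lyapunov contraction at factor about $(1-\eta\alpha^2/2)^2\approx 1-\eta\alpha^2$. Your framework can deliver this: with the weight a suitable absolute constant one can certify $V_{t+1}\le\bigl(1-\eta\,\sigma_{\min}^2(\mathcal{J}_t)\bigr)V_t$ under the same step-size ratio, whence $\|\vct{r}_t\|$ decays at per-step rate $1-\eta\,\sigma_{\min}^2(\mathcal{J}_t)/2$. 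But then your bootstrap threshold $\sigma_{\min}(\mathcal{J}_t)\ge\alpha/\sqrt{2}$ is too loose (it only yields $\approx(1-\eta\alpha^2/4)^t$); you must keep $\sigma_{\min}^2(\mathcal{J}_t)$ essentially at $\alpha^2$ along the trajectory, as the paper does by forcing the Jacobian variation $\epsilon\le\alpha^2/(4\gamma\beta)$, or accept a slightly worse constant in the exponent. With that calibration fixed, $V_0\le 2(\|\vct{r}_0\|^2+\|\vct{d}_0\|^2)$ gives the claimed bound with a modest prefactor.
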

Note that similar to the previous sections one can solve the maximization problem in \eqref{mainminmax_problem} in closed form so that \eqref{mainminmax_problem} is equivalent to the following minimization problem
   \begin{align} \label{mainmin}
\underset{\vct{\theta}\in\R^p}{\text{min }}\mathcal{L}(\vct{\theta}):=\frac{1}{2}\twonorm{f(\vct{\theta})-\vct{y}}^2,
\end{align} 
with global optima equal to zero. Theorem \ref{infmetathm} ensures that GDA converges with a fast geometric rate to this global optima. This holds as soon as  the model $f(\vct{\theta})$ is sufficiently overparameterized which is quantitatively captured via the minimum singular value assumption on the Jacobian at initialization ($\sigma_{\min}(\mathcal{J}(\vct{\theta}_0))\ge \alpha$ which can only hold when $m\le p$). This general result can thus be used to provide theoretical guarantees for a much more general class of generators and discriminators. To be more specific, consider a deep GAN model where the generator $\mathcal{G}_{\vct{\theta}}$ is a deep neural network with parameters $\vct{\theta}$ and the discriminator is a deep random feature model of the form $\mathcal{D}_{\vct{d}}(\vct{x})=\vct{d}^T \psi(\vct{x})$ parameterized with $d$ and $\psi:\R^d\rightarrow\R^m$ a deep neural network with random weights. Then the min-max training optimization problem \eqref{mainminimax} with a regularizer $\mathcal{R}(\vct{d})=-\twonorm{\vct{d}}^2/2$ is a special instance of \eqref{mainminmax_problem} with 
\begin{align*}
f(\vct{\theta}):=\frac{1}{n}\sum_{i=1}^n \psi(\mathcal{G}_{\vct{\theta}}(\vct{z}_i))\quad\text{and}\quad \vct{y}:=\frac{1}{n}\sum_{i=1}^n \psi(\vct{x}_i)
\end{align*}
Therefore, the above result can in principle be used to rigorously analyze global convergence of GDA for an overparameterized GAN problem with a deep generator and a deep random feature discriminator model. However, characterizing the precise amount of overparameterization required for such a result to hold requires a precise analysis of the minimum singular value of the Jacobian of $f(\vct{\theta})$ at initialization as well as other singular value related conditions stated in Theorem \ref{metathm}.  We defer such a precise analysis to future works.


\begin{wrapfigure}{r}{0.45\textwidth}
\begin{center}
\includegraphics[width=0.45\textwidth]{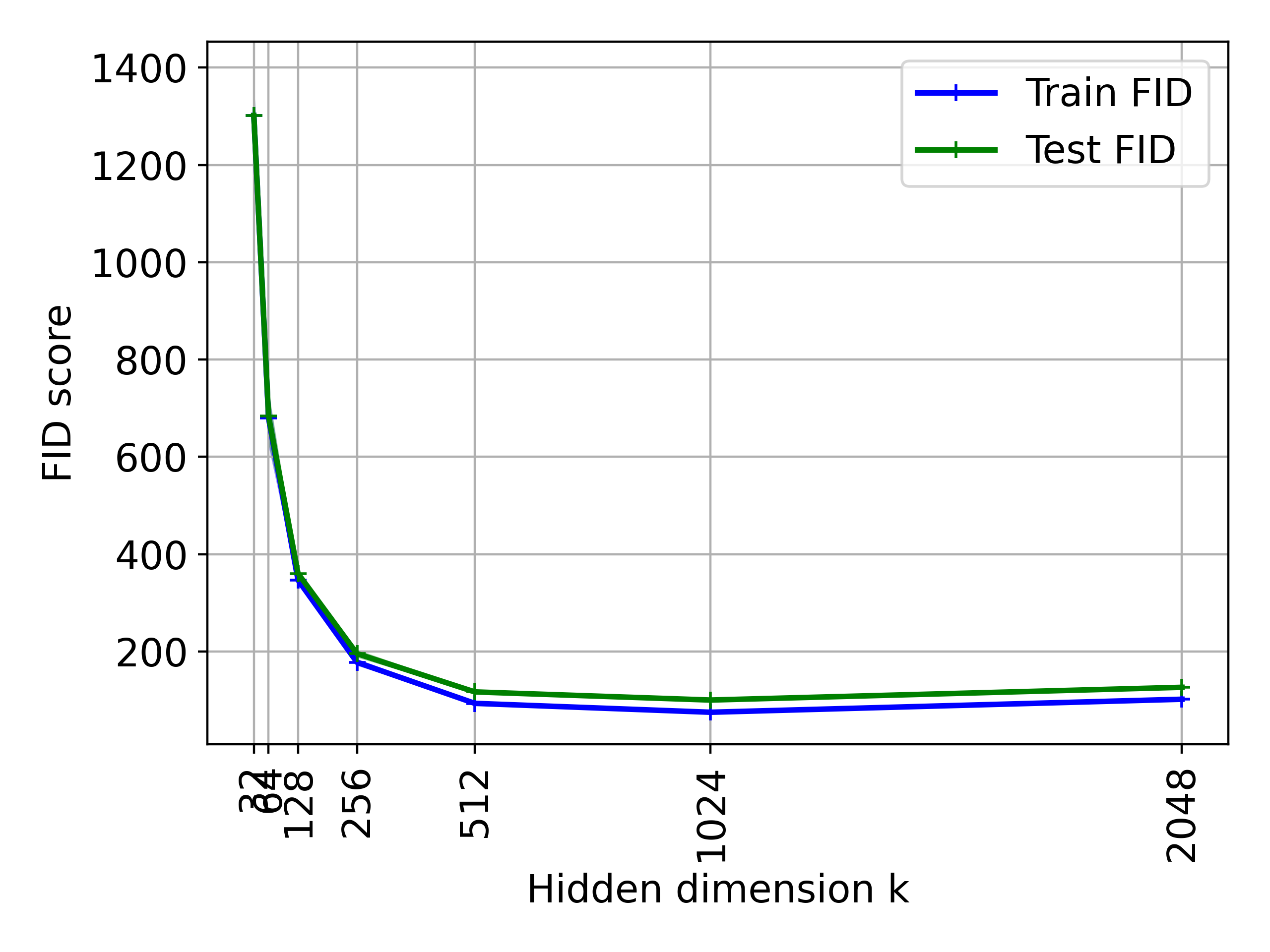}    
\end{center}
\caption{\textbf{MLP Overparameterization on MNIST.}}
\label{fig:MLP_overparam_mnist} 
\end{wrapfigure}

\paragraph{Numerical Validations: }
Next, we numerically study the convergence of GAN model considered in Theorems \ref{minmaxthm} and \ref{minthm} where the discriminator is a linear network while the generator is a one hidden layer neural net. In our experiments, we generate $\vct{x}_i$'s from an $m$-dimension Gaussian distribution with mean $\mu$ and an identity covariance matrix. The mean vector $\mu$ is randomly generated. We train two variants of GAN models using (1) GDA (as considered in Thm \ref{minmaxthm}) and (2) GD on generator while solving the discriminator to optimality (as considered in Thm \ref{minthm}).

In Fig.~\ref{fig:gaussian_overparam}, we plot the converged loss values of GAN models trained using both techniques (1) and (2) as the hidden dimension $k$ of the generator is varied. The MSE loss between the true data mean and the data mean of generated samples is used as our evaluation metric. As this MSE loss approaches $0$, the model converges to the global saddle point. We observe that overparameterized GAN models show improved convergence behavior than the narrower models. Additionally, the MSE loss converges to $0$ for larger values of $k$ which shows that with sufficient overparamterization, GDA converges to a global saddle point.


\section{Experiments}\label{sec:experiments}

\begin{figure*}[t!]
    \centering
    \begin{subfigure}[t]{0.3\textwidth}
        \centering
        \includegraphics[width=1.1\textwidth]{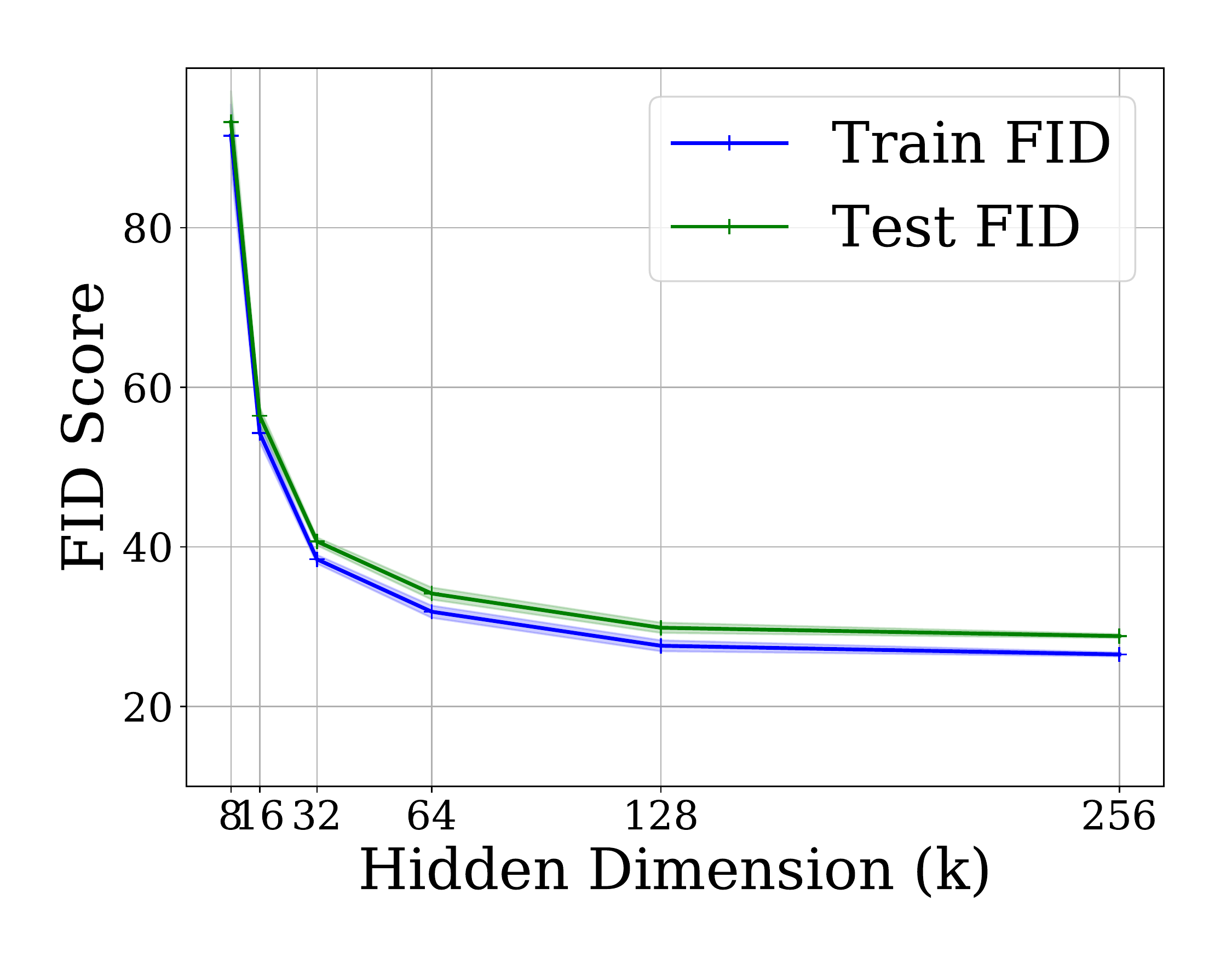}
        \caption{DCGAN, CIFAR}
    \end{subfigure}%
    ~ 
    \begin{subfigure}[t]{0.3\textwidth}
        \centering
        \includegraphics[width=1.09\textwidth]{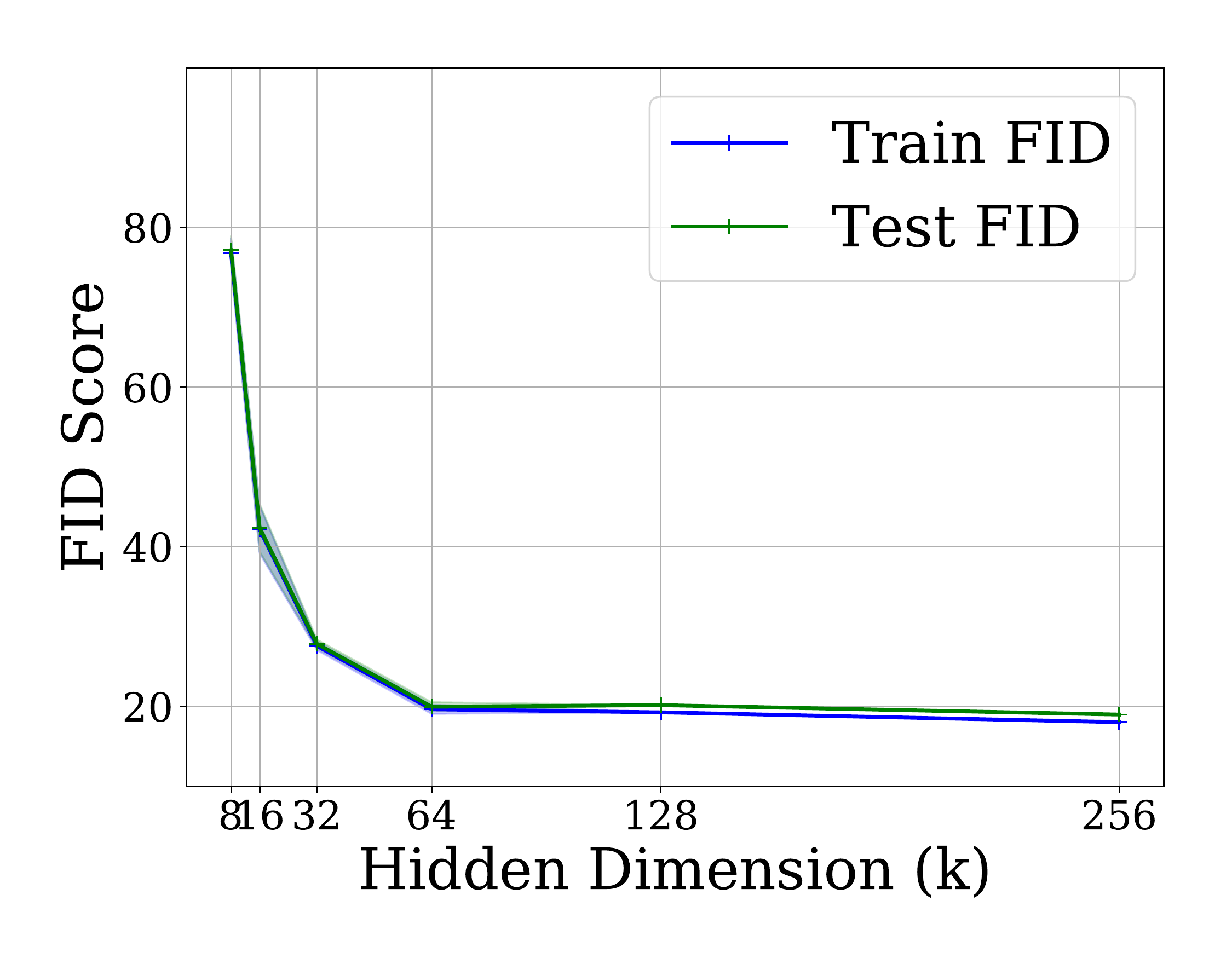}
        \caption{DCGAN, CelebA}
    \end{subfigure}
    ~
    \begin{subfigure}[t]{0.3\textwidth}
        \centering
        \includegraphics[width=1.1\textwidth]{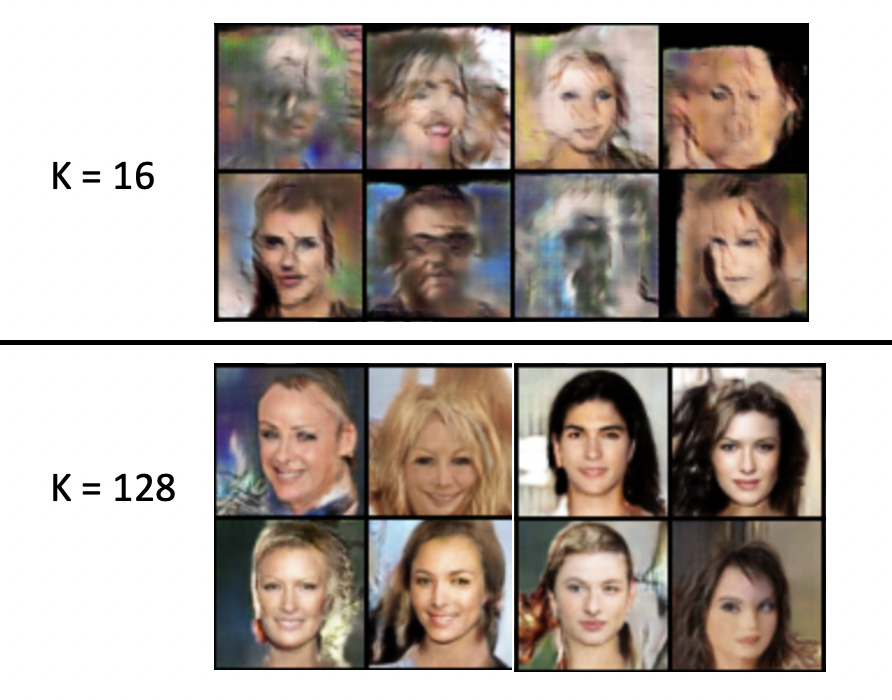}
        \caption{DCGAN}
    \end{subfigure}
    ~
    \begin{subfigure}[t]{0.3\textwidth}
        \centering
        \includegraphics[width=1.1\textwidth]{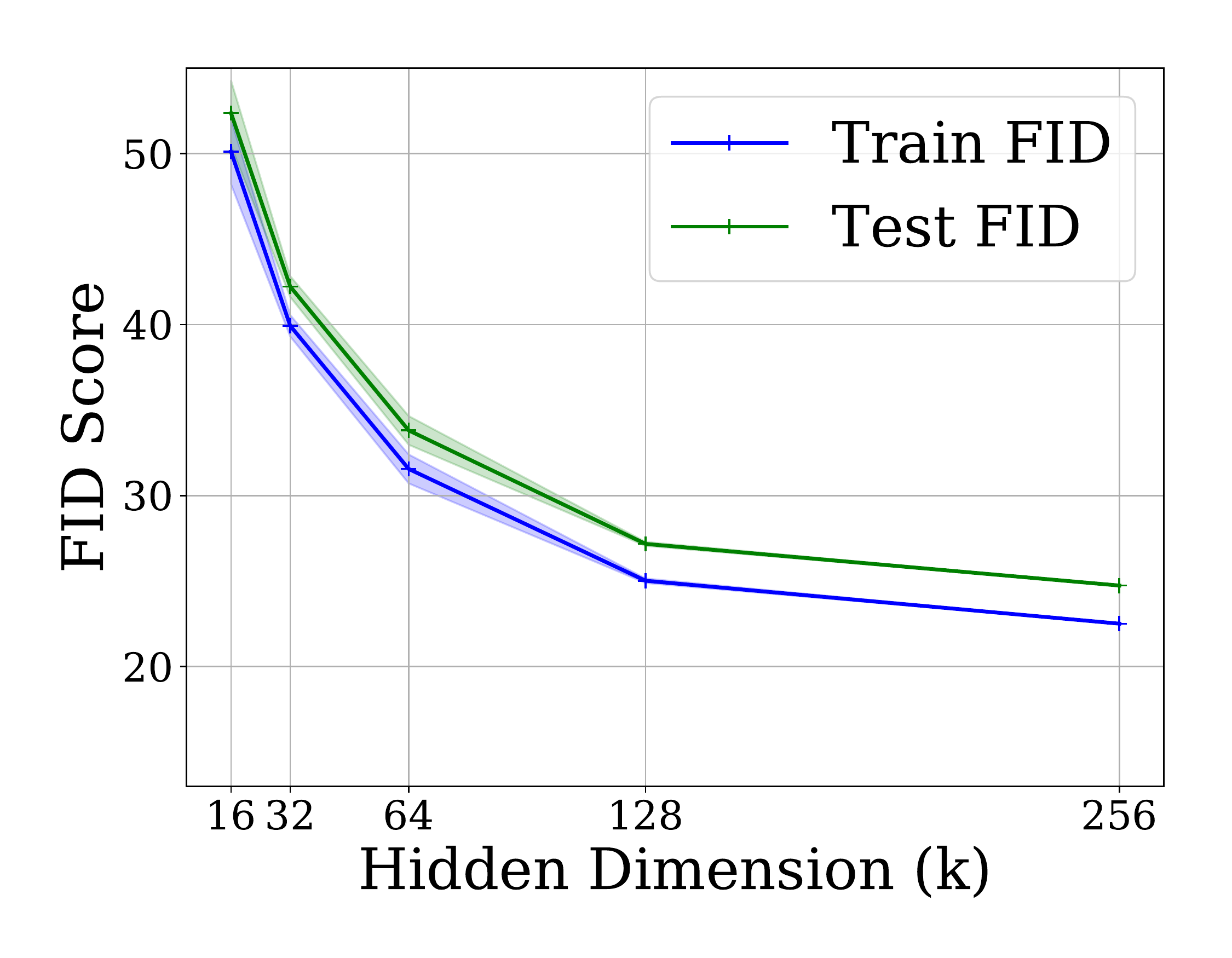}
        \caption{Resnet, CIFAR}
    \end{subfigure}%
    ~ 
    \begin{subfigure}[t]{0.3\textwidth}
        \centering
        \includegraphics[width=1.1\textwidth]{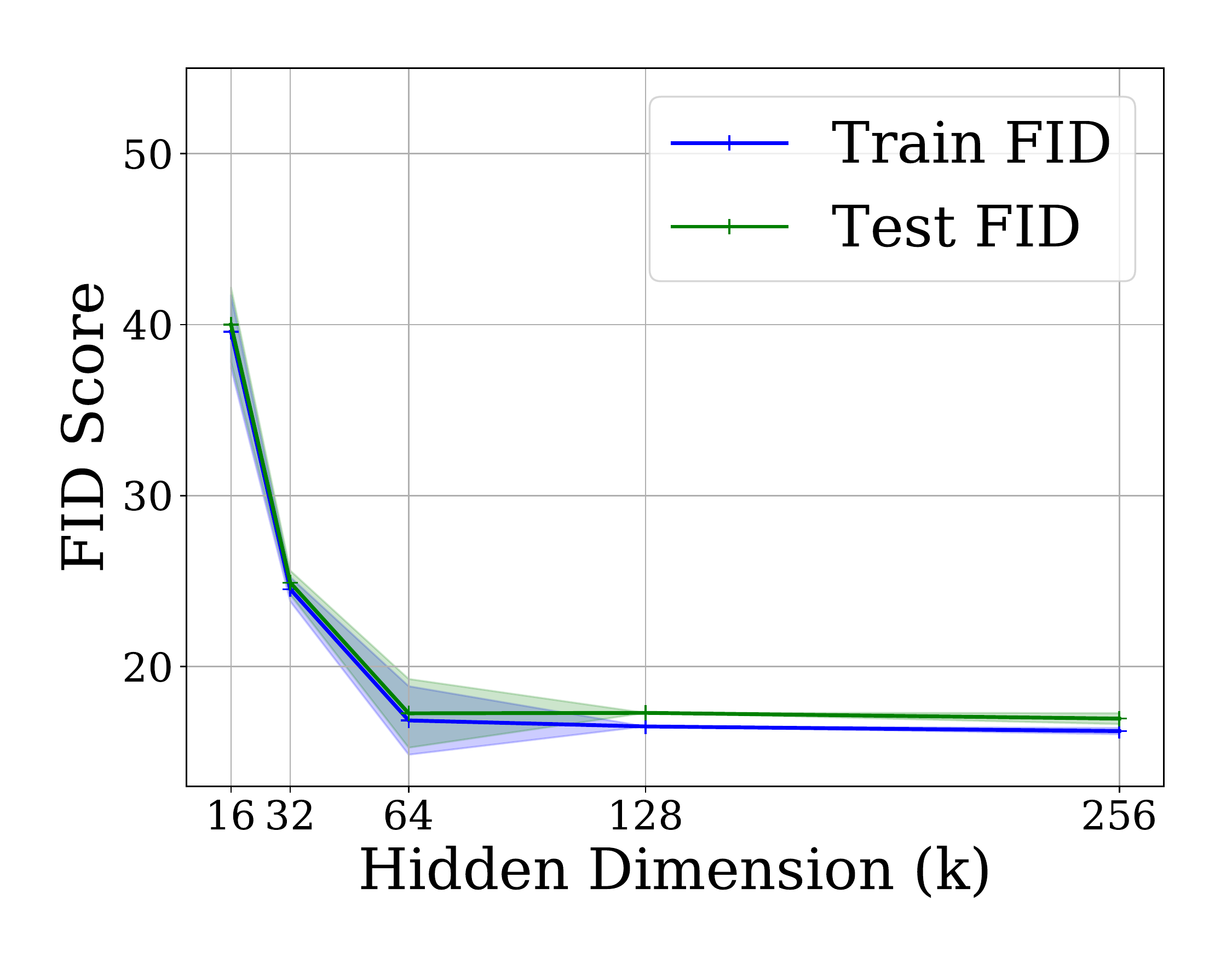}
        \caption{Resnet, CelebA}
    \end{subfigure}
    ~
    \begin{subfigure}[t]{0.3\textwidth}
        \centering
        \includegraphics[width=1.1\textwidth]{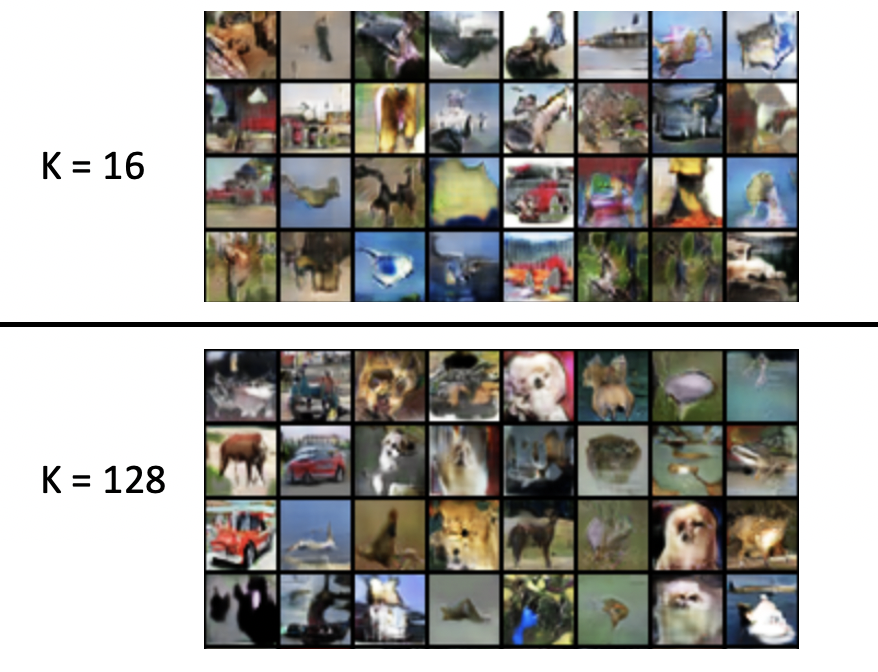}
        \caption{Resnet}
    \end{subfigure}
    \caption{\textbf{Overparamterization Results:} We plot the FID scores (lower, better) of DCGAN and Resnet DCGAN as the hidden dimension $k$ is varied. Results on CIFAR-10 and Celeb-A are shown on the plots on the left and right panels, respectively. Overparameterization gives better FID scores.}
    \label{fig:overparam_plots}
\end{figure*}

\begin{figure*}[t!]
    \centering
    \vspace{-5pt}
    \begin{subfigure}[t]{0.50\textwidth}
        \centering
        \includegraphics[height=1.6in]{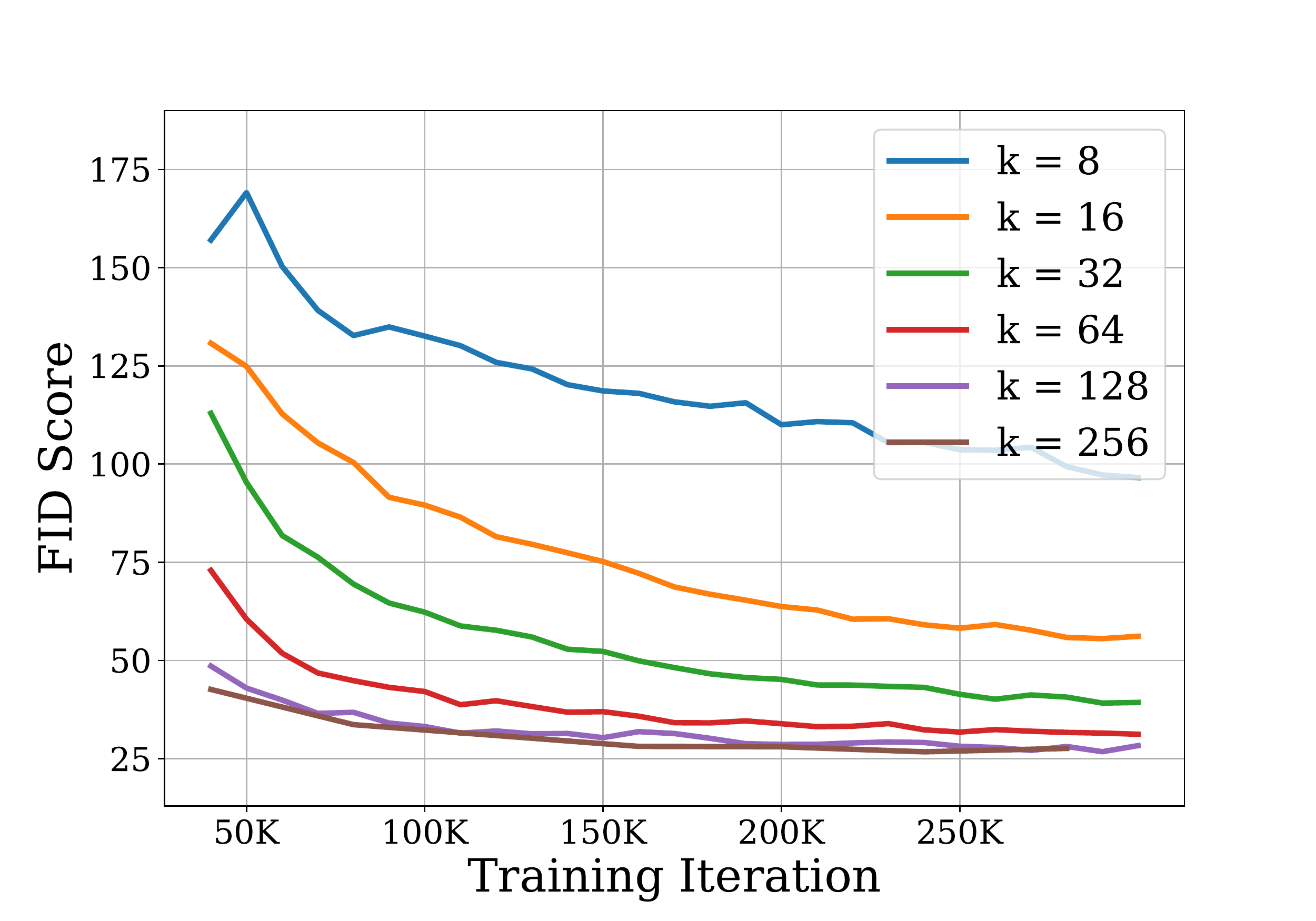}
        \caption{CIFAR}
    \end{subfigure}%
    ~ 
    \begin{subfigure}[t]{0.50\textwidth}
        \centering
        \includegraphics[height=1.6in]{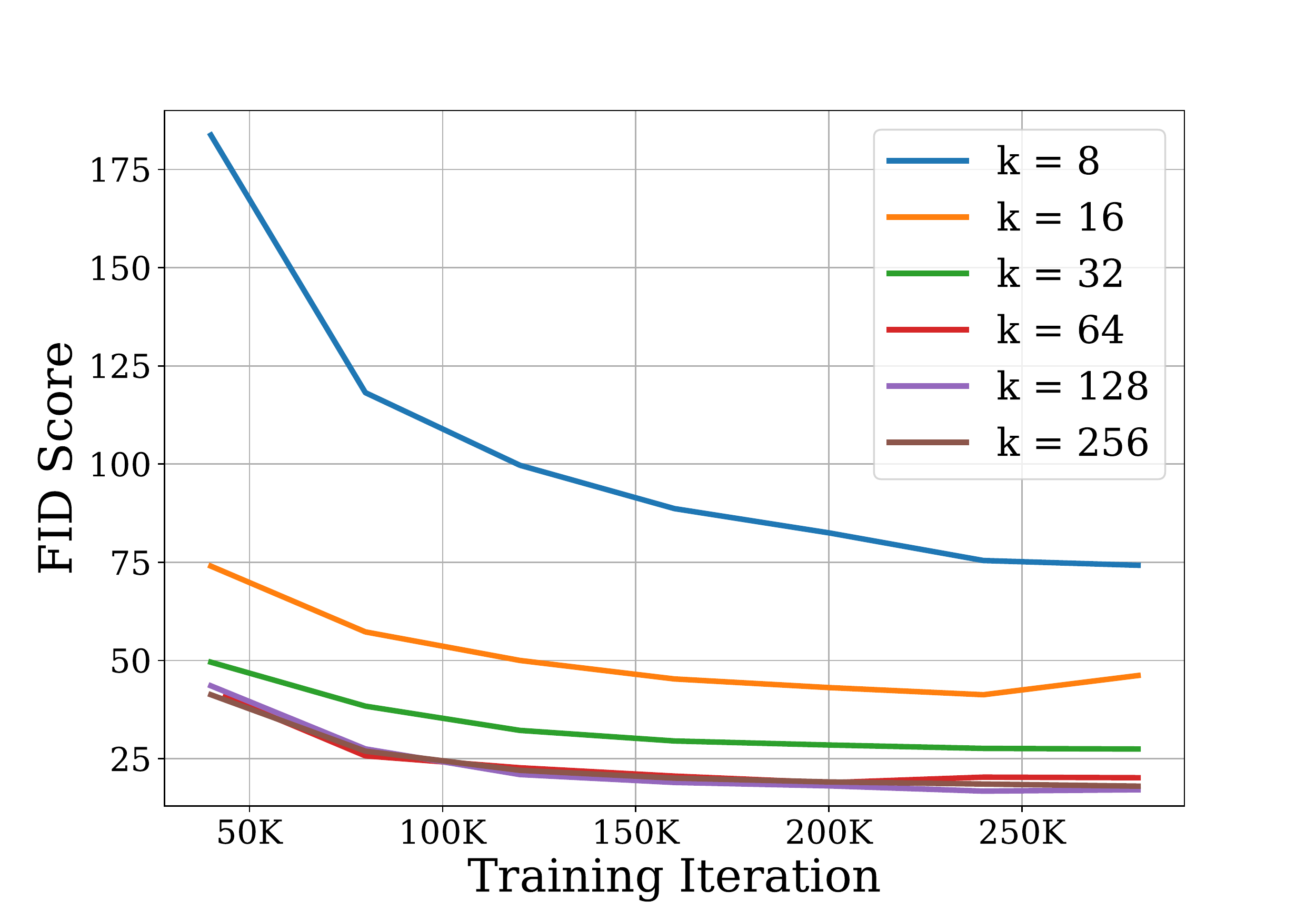}
        \caption{CelebA}
    \end{subfigure}
    \caption{\textbf{DCGAN Training Results:} We plot the FID scores across training iterations of DCGAN on CIFAR-10 and Celeb-A for different values of hidden dimension \textit{k}. Remarkably, we observe that over-parameterization improves the rate of convergence of GDA and its stability in training.}
    \label{fig:dcgan_iter}
\end{figure*}

\begin{figure*}[t!]
    \centering
    \begin{subfigure}[t]{0.50\textwidth}
        \centering
        \includegraphics[height=1.9in]{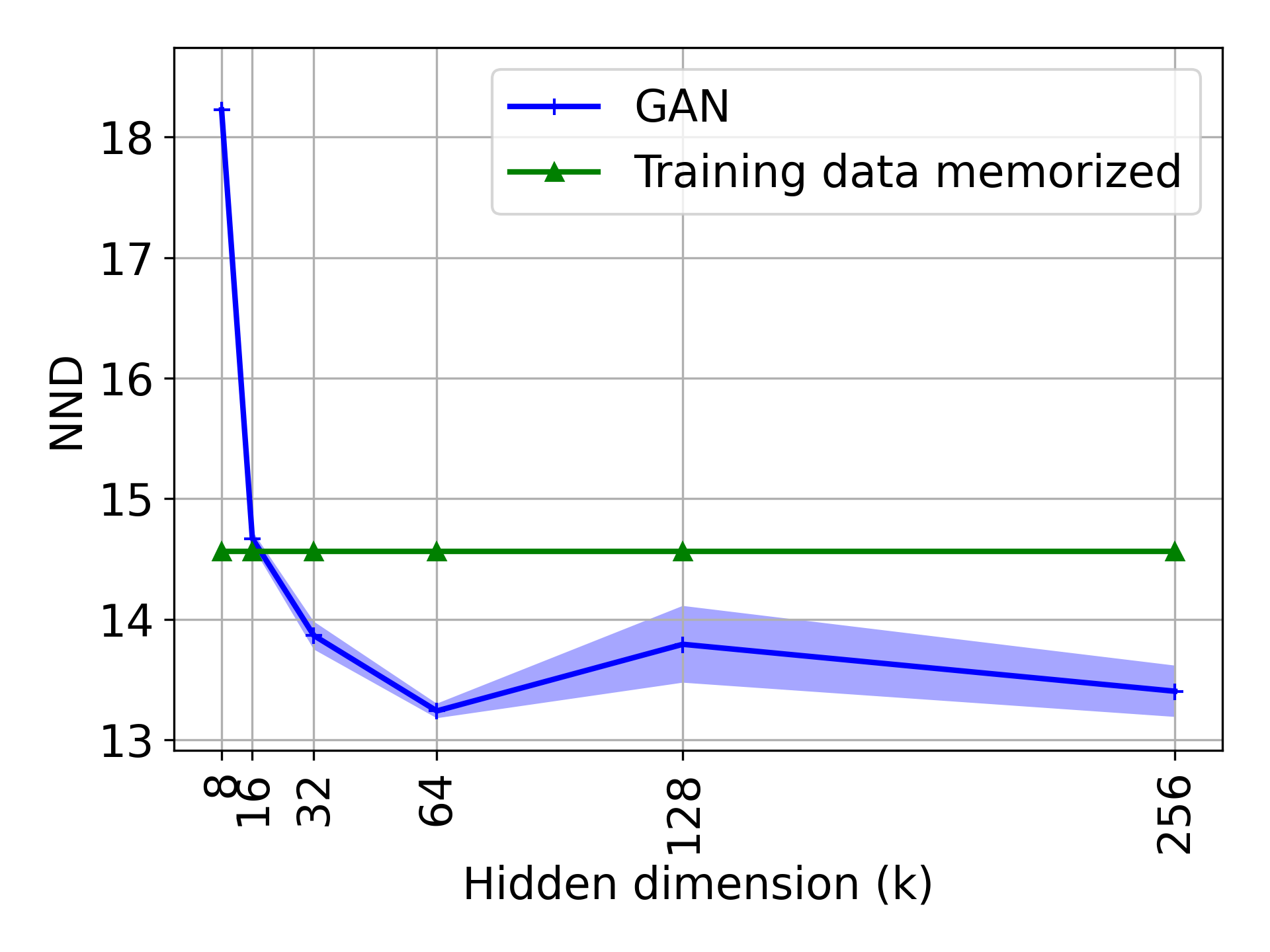}
        \caption{DCGAN}
    \end{subfigure}%
    ~ 
    \begin{subfigure}[t]{0.50\textwidth}
        \centering
        \includegraphics[height=1.9in]{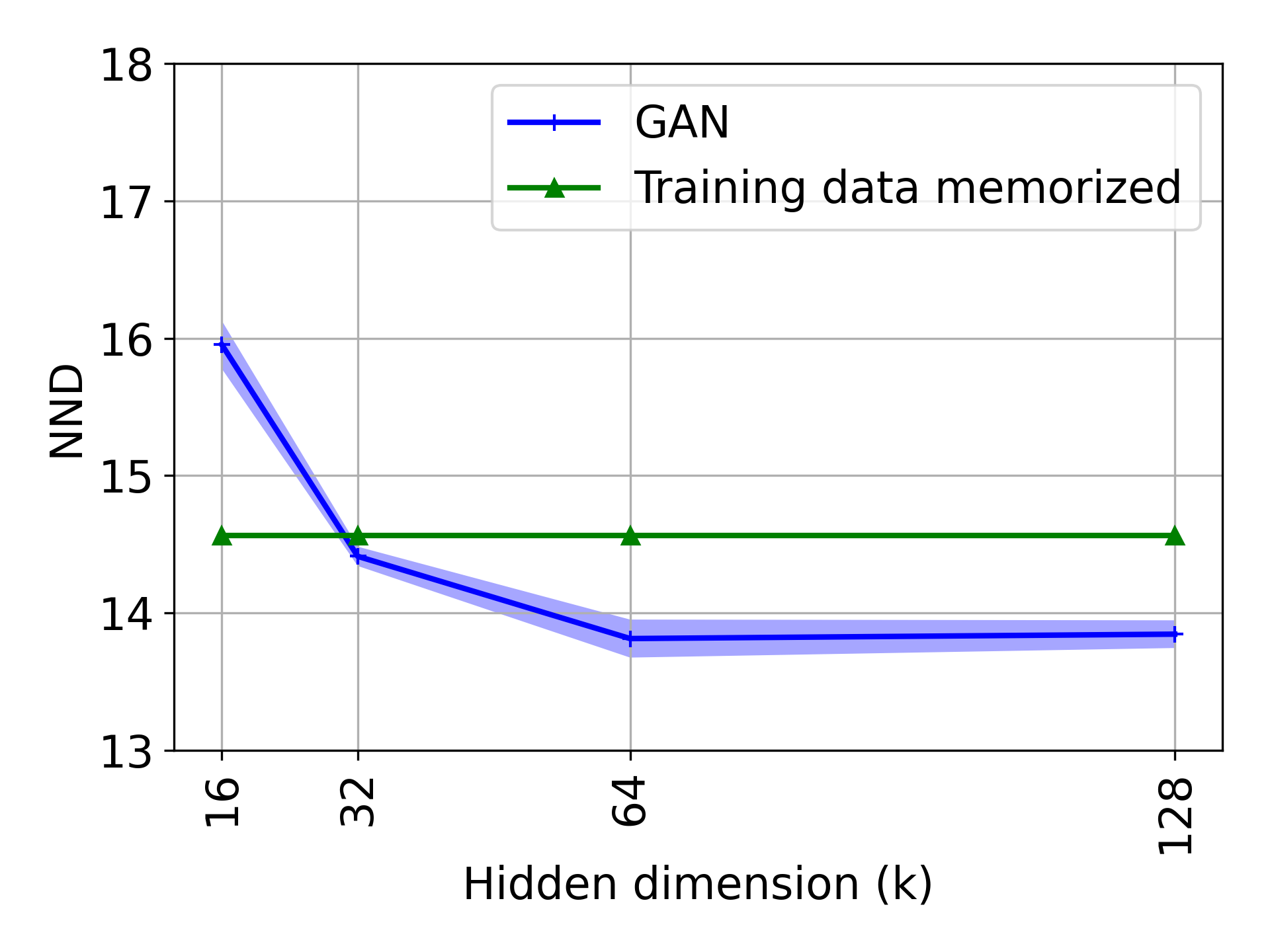}
        \caption{Resnet}
    \end{subfigure}
    \caption{\textbf{Generalization in GANs:} We plot the \textbf{NND scores} as the hidden dimension \textit{k} is varied for DCGAN (shown in (a)) and Resnet (shown in (b)) models. }
    \label{fig:nnd_overparam}
\end{figure*}

In this section, we demonstrate benefits of overparamterization in large GAN models. In particular, we train GANs on two benchmark datasets: CIFAR-10 ($32 \times 32$ resolution) and Celeb-A ($64 \times 64$ resolution). We use two commonly used GAN architectures: DCGAN and Resnet-based GAN. For both of these architectures, we train several models, each with a different number of filters in each layer, denoted by $k$. For simplicity, we refer to $k$ as the hidden dimension. Appendix Fig.~\ref{fig:model_arch} illustrates the architectures used in our experiments. Networks with large $k$ are more overparameterized. 

We use the same value of $k$ for both generator and discriminator networks. This is in line with the design choice made in most recent GAN models~\citep{radford2015dcgan, brock2018large}, where the size of generator and discriminator models are roughly maintained the same.  We train each model till convergence and evaluate the performance of converged models using FID scores. FID scores measure the Frechet distance between feature distributions of real and generated data distributions~\citep{heusel2017gans}. A small FID score indicates high-quality synthesized samples. Each experiment is conducted for $5$ runs, and mean and the variance of FID scores are reported. 

{\bfseries Overparameterization yields better generative models:} In Fig.~\ref{fig:overparam_plots}, we show the plot of FID scores as the hidden dimension ($k$) is varied for DCGAN and Resnet GAN models. We observe a clear trend where the FID scores are high (i.e. poor) for small values of $k$, while they improve as models become more overparameterized. Also, the FID scores saturate beyond $k=64$ for DCGAN models, and $k=128$ for Resnet GAN models. Interestingly, these are the standard values used in the existing model architecures \citep{radford2015dcgan, gulrajani2017WGAN}.This trend is also consistent on MLP GANs trained on MNIST dataset (Fig.~\ref{fig:MLP_overparam_mnist}). We however notice that FID score in MLP GANs increase marginally as $k$ increases from $1024$ to $2048$. This is potentially due to an increased generalization gap in this regime where it offsets potential benefits of over-parameterization

{\bfseries Overparameterization leads to improved convergence of GDA:} In Fig.~\ref{fig:dcgan_iter}, we show the plot of FID scores over training iterations for different values for $k$. We observe that models with larger values of $k$ converge faster and demonstrate a more stable behavior. This agrees with our theoretical results that overparameterized models have a fast rate of convergence.

{\bfseries Generalization gap in GANs:} To study the generalization gap, we compute the FID scores by using (1) the training-set of real data, which we call \emph{FID train}, and (2) a held-out validation set of real data, which we call \emph{FID test}. In Fig.~\ref{fig:overparam_plots}, a plot of FID train \emph{(in blue)} and FID test \emph{(in green)} are shown as the hidden dimension $k$ is varied. We observe that FID test values are consistently higher than the the FID train values. Their gap does not increase with increasing overparameterization.

However, as explained in \citep{gulrajani2018towards}, the FID score has the issue of assigning low values to memorized samples. To alleviate the issue, \citep{gulrajani2018towards, arora2017generalization} proposed Neural Net Divergence (NND) to measure generalization in GANs. In Fig.~\ref{fig:nnd_overparam}, we plot NND scores by varying the hidden dimensions in DCGAN and Resnet GAN trained on CIFAR-10 dataset. We observe that increasing the value of $k$ decreases the NND score. Interestingly, the NND score of memorized samples are higher than most of the GAN models. This indicates that overparameterized models have not been memorizing training samples and produce better generative models.


\section{Conclusion}
In this paper, we perform a systematic study of the importance of overparameterization in training GANs. We first analyze a GAN model with one-hidden layer generator and a linear discriminator optimized using Gradient Descent Ascent (GDA). Under this setup, we prove that with sufficient overparameterization, GDA converges to a global saddle point. Additionally, our result demonstrate that overparameterized models have a fast rate of convergence. We then validate our theoretical findings through extensive experiments on DCGAN and Resnet models trained on CIFAR-10 and Celeb-A datasets. We observe overparameterized models to perform well both in terms of the rate of convergence and the quality of generated samples.

\section{Acknowledgement}
M. Sajedi would like to thank Sarah Dean for introducing \citep{rugh1996linear}. This project was supported in part by NSF CAREER AWARD 1942230, HR00112090132, HR001119S0026, NIST 60NANB20D134 and Simons Fellowship on ``Foundations of Deep Learning.'' M. Soltanolkotabi is supported by the Packard Fellowship in Science and Engineering, a Sloan Research Fellowship in Mathematics, an NSF-CAREER under award 1846369, the Air Force Office of Scientific Research Young Investigator Program (AFOSR-YIP) under award FA9550-18-1-0078, DARPA Learning with Less Labels (LwLL) and FastNICS programs, and NSF-CIF awards 1813877 and 2008443.

\bibliography{iclr2021_conference}

\begin{thebibliography}{40}
\providecommand{\natexlab}[1]{#1}
\providecommand{\url}[1]{\texttt{#1}}
\expandafter\ifx\csname urlstyle\endcsname\relax
  \providecommand{\doi}[1]{doi: #1}\else
  \providecommand{\doi}{doi: \begingroup \urlstyle{rm}\Url}\fi

\bibitem[Adolphs et~al.(2019)Adolphs, Daneshmand, Lucchi, and
  Hofmann]{adolphs2019local}
Leonard Adolphs, Hadi Daneshmand, Aurelien Lucchi, and Thomas Hofmann.
\newblock Local saddle point optimization: A curvature exploitation approach.
\newblock In \emph{The 22nd International Conference on Artificial Intelligence
  and Statistics}, pp.\  486--495, 2019.

\bibitem[Allen-Zhu et~al.(2019)Allen-Zhu, Li, and Song]{zhu2019convergence}
Zeyuan Allen-Zhu, Yuanzhi Li, and Zhao Song.
\newblock A convergence theory for deep learning via over-parameterization.
\newblock In Kamalika Chaudhuri and Ruslan Salakhutdinov (eds.),
  \emph{Proceedings of the 36th International Conference on Machine Learning,
  {ICML} 2019, Long Beach, California, USA, June 9-15, 2019}, volume~97 of
  \emph{Proceedings of Machine Learning Research}, pp.\  242--252, 2019.

\bibitem[An et~al.(2018)An, Wang, Sun, Xu, Dai, and Zhang]{an2018pid}
Wangpeng An, Haoqian Wang, Qingyun Sun, Jun Xu, Qionghai Dai, and Lei Zhang.
\newblock A pid controller approach for stochastic optimization of deep
  networks.
\newblock In \emph{Proceedings of the IEEE Conference on Computer Vision and
  Pattern Recognition}, pp.\  8522--8531, 2018.

\bibitem[Arjovsky et~al.(2017)Arjovsky, Chintala, and Bottou]{arjovsky2017WGAN}
Martin Arjovsky, Soumith Chintala, and L{\'e}on Bottou.
\newblock {W}asserstein generative adversarial networks.
\newblock In \emph{Proceedings of the 34th International Conference on Machine
  Learning, {ICML} 2017, Sydney, Australia, Aug 6-11, 2017}, Proceedings of
  Machine Learning Research, pp.\  214--223, 2017.

\bibitem[Arora et~al.(2017)Arora, Ge, Liang, Ma, and
  Zhang]{arora2017generalization}
Sanjeev Arora, Rong Ge, Yingyu Liang, Tengyu Ma, and Yi~Zhang.
\newblock Generalization and equilibrium in generative adversarial nets (gans).
\newblock \emph{arXiv preprint arXiv:1703.00573}, 2017.

\bibitem[Arrow et~al.(1958)Arrow, Hurwicz, and Uzawa]{arrow1958studies}
Kenneth~J Arrow, Leonid Hurwicz, and Hirofumi Uzawa.
\newblock \emph{Studies in linear and non-linear programming}.
\newblock Cambridge Univ. Press, 1958.

\bibitem[Berard et~al.(2020)Berard, Gidel, Almahairi, Vincent, and
  Lacoste-Julien]{Berard2020A}
Hugo Berard, Gauthier Gidel, Amjad Almahairi, Pascal Vincent, and Simon
  Lacoste-Julien.
\newblock A closer look at the optimization landscapes of generative
  adversarial networks.
\newblock In \emph{International Conference on Learning Representations}, 2020.
\newblock URL \url{https://openreview.net/forum?id=HJeVnCEKwH}.

\bibitem[Brock et~al.(2019)Brock, Donahue, and Simonyan]{brock2018large}
Andrew Brock, Jeff Donahue, and Karen Simonyan.
\newblock Large scale {GAN} training for high fidelity natural image synthesis.
\newblock In \emph{International Conference on Learning Representations}, 2019.

\bibitem[Chizat et~al.(2019)Chizat, Oyallon, and Bach]{chizat2019lazy}
Lenaic Chizat, Edouard Oyallon, and Francis Bach.
\newblock On lazy training in differentiable programming.
\newblock In \emph{Advances in Neural Information Processing Systems}, pp.\
  2937--2947, 2019.

\bibitem[Clark et~al.(2019)Clark, Donahue, and Simonyan]{clark2019adversarial}
Aidan Clark, Jeff Donahue, and Karen Simonyan.
\newblock Adversarial video generation on complex datasets.
\newblock \emph{arXiv preprint arXiv:1907.06571}, 2019.
\newblock URL \url{https://arxiv.org/abs/1907.06571}.

\bibitem[Daskalakis et~al.(2020)Daskalakis, Skoulakis, and
  Zampetakis]{daskalakis2020complexity}
Constantinos Daskalakis, Stratis Skoulakis, and Manolis Zampetakis.
\newblock The complexity of constrained min-max optimization.
\newblock \emph{arXiv preprint arXiv:2009.09623}, 2020.
\newblock URL \url{https://arxiv.org/abs/2009.09623}.

\bibitem[Deshpande et~al.(2018)Deshpande, Zhang, and
  Schwing]{deshpande2018generative}
Ishan Deshpande, Ziyu Zhang, and Alexander~G Schwing.
\newblock Generative modeling using the sliced wasserstein distance.
\newblock In \emph{Proceedings of the IEEE conference on computer vision and
  pattern recognition}, pp.\  3483--3491, 2018.

\bibitem[Dinh et~al.(2017)Dinh, Sohl-Dickstein, and Bengio]{dinh2017realnvp}
Laurent Dinh, Jascha Sohl-Dickstein, and Samy Bengio.
\newblock Density estimation using real nvp.
\newblock In \emph{5th International Conference on Learning Representations,
  {ICLR} 2017, Toulon, France, April 24-26, 2017, Conference Track
  Proceedings}, 2017.

\bibitem[Du et~al.(2019)Du, Zhai, Poczos, and Singh]{du2018gradient}
Simon~S. Du, Xiyu Zhai, Barnabas Poczos, and Aarti Singh.
\newblock Gradient descent provably optimizes over-parameterized neural
  networks.
\newblock In \emph{7th International Conference on Learning Representations,
  {ICLR} 2019, New Orleans, LA, USA, May 6-9, 2019, Conference Track
  Proceedings}, 2019.

\bibitem[Farnia \& Ozdaglar(2020)Farnia and Ozdaglar]{farnia2020gans}
Farzan Farnia and Asuman Ozdaglar.
\newblock Gans may have no nash equilibria.
\newblock \emph{arXiv preprint arXiv:2002.09124}, 2020.

\bibitem[Feizi et~al.(2018)Feizi, Farnia, Ginart, and
  Tse]{feizi2017understanding}
Soheil Feizi, Farzan Farnia, Tony Ginart, and David Tse.
\newblock Understanding {GANs:} the {LQG} setting.
\newblock \emph{arXiv preprint arXiv:1710.10793}, 2018.
\newblock URL \url{https://arxiv.org/abs/1710.10793}.

\bibitem[Goodfellow et~al.(2014)Goodfellow, Pouget-Abadie, Mirza, Xu,
  Warde-Farley, Ozair, Courville, and Bengio]{Goodfellow2014GAN}
Ian Goodfellow, Jean Pouget-Abadie, Mehdi Mirza, Bing Xu, David Warde-Farley,
  Sherjil Ozair, Aaron Courville, and Yoshua Bengio.
\newblock Generative adversarial nets.
\newblock In Z.~Ghahramani, M.~Welling, C.~Cortes, N.~D. Lawrence, and K.~Q.
  Weinberger (eds.), \emph{Advances in Neural Information Processing Systems
  27}, pp.\  2672--2680, 2014.

\bibitem[Gulrajani et~al.(2017)Gulrajani, Ahmed, Arjovsky, Dumoulin, and
  Courville]{gulrajani2017WGAN}
Ishaan Gulrajani, Faruk Ahmed, Martin Arjovsky, Vincent Dumoulin, and Aaron~C
  Courville.
\newblock Improved training of wasserstein gans.
\newblock In I.~Guyon, U.~V. Luxburg, S.~Bengio, H.~Wallach, R.~Fergus,
  S.~Vishwanathan, and R.~Garnett (eds.), \emph{Advances in Neural Information
  Processing Systems 30}, pp.\  5767--5777, 2017.

\bibitem[Gulrajani et~al.(2019)Gulrajani, Raffel, and
  Metz]{gulrajani2018towards}
Ishaan Gulrajani, Colin Raffel, and Luke Metz.
\newblock Towards {GAN} benchmarks which require generalization.
\newblock In \emph{International Conference on Learning Representations}, 2019.
\newblock URL \url{https://openreview.net/forum?id=HkxKH2AcFm}.

\bibitem[Gutman(1979)]{gutman1979uncertain}
Shaul Gutman.
\newblock Uncertain dynamical systems--a lyapunov min-max approach.
\newblock \emph{IEEE Transactions on Automatic Control}, 24\penalty0
  (3):\penalty0 437--443, 1979.

\bibitem[Heusel et~al.(2017)Heusel, Ramsauer, Unterthiner, Nessler, and
  Hochreiter]{heusel2017gans}
Martin Heusel, Hubert Ramsauer, Thomas Unterthiner, Bernhard Nessler, and Sepp
  Hochreiter.
\newblock Gans trained by a two time-scale update rule converge to a local nash
  equilibrium.
\newblock In \emph{Advances in neural information processing systems 30}, pp.\
  6626--6637, 2017.

\bibitem[Karras et~al.(2019)Karras, Laine, and Aila]{karras2019StyleGAN}
Tero Karras, Samuli Laine, and Timo Aila.
\newblock A style-based generator architecture for generative adversarial
  networks.
\newblock In \emph{{IEEE} Conference on Computer Vision and Pattern
  Recognition, {CVPR} 2019, Long Beach, CA, USA, June 16-20, 2019}, pp.\
  4401--4410. Computer Vision Foundation / {IEEE}, 2019.

\bibitem[Kingma \& Welling(2014)Kingma and Welling]{kingma2014VAE}
Diederik~P. Kingma and Max Welling.
\newblock Auto-encoding variational bayes.
\newblock In Yoshua Bengio and Yann LeCun (eds.), \emph{2nd International
  Conference on Learning Representations, {ICLR} 2014, Banff, AB, Canada, April
  14-16, 2014, Conference Track Proceedings}, 2014.

\bibitem[Ledoux(2001)]{ledoux2001concentration}
Michel Ledoux.
\newblock \emph{The concentration of measure phenomenon}.
\newblock Number~89 in Mathematical Surveys and Monographs. American
  Mathematical Soc., 2001.

\bibitem[Lei et~al.(2019)Lei, Lee, Dimakis, and Daskalakis]{lei2019sgd}
Qi~Lei, Jason~D Lee, Alexandros~G Dimakis, and Constantinos Daskalakis.
\newblock Sgd learns one-layer networks in wgans.
\newblock \emph{arXiv preprint arXiv:1910.07030}, 2019.
\newblock URL \url{https://arxiv.org/abs/1910.07030}.

\bibitem[Li et~al.(2017)Li, Chang, Cheng, Yang, and Poczos]{li2017mmdgan}
Chun-Liang Li, Wei-Cheng Chang, Yu~Cheng, Yiming Yang, and Barnabas Poczos.
\newblock Mmd gan: Towards deeper understanding of moment matching network.
\newblock In I.~Guyon, U.~V. Luxburg, S.~Bengio, H.~Wallach, R.~Fergus,
  S.~Vishwanathan, and R.~Garnett (eds.), \emph{Advances in Neural Information
  Processing Systems 30}, pp.\  2203--2213, 2017.

\bibitem[Mescheder et~al.(2017)Mescheder, Nowozin, and
  Geiger]{mescheder2017numerics}
Lars Mescheder, Sebastian Nowozin, and Andreas Geiger.
\newblock The numerics of {GANs}.
\newblock In \emph{Advances in Neural Information Processing Systems 30}, pp.\
  1823--1833, 2017.

\bibitem[Mescheder et~al.(2018)Mescheder, Geiger, and
  Nowozin]{mescheder2018training}
Lars Mescheder, Andreas Geiger, and Sebastian Nowozin.
\newblock Which training methods for gans do actually converge?
\newblock \emph{arXiv preprint arXiv:1801.04406}, 2018.
\newblock URL \url{https://arxiv.org/abs/1801.04406}.

\bibitem[Nagarajan \& Kolter(2017)Nagarajan and Kolter]{nagarajan2017gradient}
Vaishnavh Nagarajan and J~Zico Kolter.
\newblock Gradient descent {GAN} optimization is locally stable.
\newblock In \emph{Advances in Neural Information Processing Systems 30}, pp.\
  5591--5600, 2017.

\bibitem[Oymak \& Soltanolkotabi(2019)Oymak and
  Soltanolkotabi]{oymak2019overparameterized}
Samet Oymak and Mahdi Soltanolkotabi.
\newblock Overparameterized nonlinear learning: Gradient descent takes the
  shortest path?
\newblock In \emph{Proceedings of the 36th International Conference on Machine
  Learning, {ICML} 2019, Long Beach, California, USA, June 9-15, 2019},
  Proceedings of Machine Learning Research, pp.\  4951--4960, 2019.

\bibitem[Oymak \& Soltanolkotabi(2020)Oymak and
  Soltanolkotabi]{oymak2020towards}
Samet Oymak and Mahdi Soltanolkotabi.
\newblock Towards moderate overparameterization: global convergence guarantees
  for training shallow neural networks.
\newblock \emph{IEEE Journal on Selected Areas in Information Theory}, 2020.

\bibitem[Oymak et~al.(2019)Oymak, Fabian, Li, and
  Soltanolkotabi]{oymak2019generalization}
Samet Oymak, Zalan Fabian, Mingchen Li, and Mahdi Soltanolkotabi.
\newblock Generalization guarantees for neural networks via harnessing the
  low-rank structure of the jacobian.
\newblock \emph{arXiv preprint arXiv:1906.05392}, 2019.
\newblock URL \url{https://arxiv.org/abs/1906.05392}.

\bibitem[Radford et~al.(2016)Radford, Metz, and Chintala]{radford2015dcgan}
Alec Radford, Luke Metz, and Soumith Chintala.
\newblock Unsupervised representation learning with deep convolutional
  generative adversarial networks.
\newblock In Yoshua Bengio and Yann LeCun (eds.), \emph{4th International
  Conference on Learning Representations, {ICLR} 2016, San Juan, Puerto Rico,
  May 2-4, 2016, Conference Track Proceedings}, 2016.

\bibitem[Rugh(1996)]{rugh1996linear}
Wilson~J Rugh.
\newblock \emph{Linear system theory}.
\newblock Prentice-Hall, Inc., 1996.

\bibitem[Soltanolkotabi(2019)]{soltanolkotabi2019structured}
Mahdi Soltanolkotabi.
\newblock Structured signal recovery from quadratic measurements: Breaking
  sample complexity barriers via nonconvex optimization.
\newblock \emph{IEEE Transactions on Information Theory}, 65\penalty0
  (4):\penalty0 2374--2400, 2019.

\bibitem[Soltanolkotabi et~al.(2018)Soltanolkotabi, Javanmard, and
  Lee]{soltanolkotabi2018theoretical}
Mahdi Soltanolkotabi, Adel Javanmard, and Jason~D Lee.
\newblock Theoretical insights into the optimization landscape of
  over-parameterized shallow neural networks.
\newblock \emph{IEEE Transactions on Information Theory}, 65\penalty0
  (2):\penalty0 742--769, 2018.

\bibitem[Van~Veen et~al.(2018)Van~Veen, Jalal, Soltanolkotabi, Price,
  Vishwanath, and Dimakis]{van2018compressed}
Dave Van~Veen, Ajil Jalal, Mahdi Soltanolkotabi, Eric Price, Sriram Vishwanath,
  and Alexandros~G Dimakis.
\newblock Compressed sensing with deep image prior and learned regularization.
\newblock \emph{arXiv preprint arXiv:1806.06438}, 2018.
\newblock URL \url{https://arxiv.org/abs/1806.06438}.

\bibitem[Von~Neumann \& Morgenstern(1953)Von~Neumann and
  Morgenstern]{von2007theory}
John Von~Neumann and Oskar Morgenstern.
\newblock \emph{Theory of games and economic behavior}.
\newblock Princeton university press, 1953.

\bibitem[Xu et~al.(2019)Xu, Li, Wei, Zhu, and Zhang]{xu2019understanding}
Kun Xu, Chongxuan Li, Huanshu Wei, Jun Zhu, and Bo~Zhang.
\newblock Understanding and stabilizing gans' training dynamics with control
  theory.
\newblock \emph{arXiv preprint arXiv:1909.13188}, 2019.

\bibitem[Zou \& Gu(2019)Zou and Gu]{zou2019improved}
Difan Zou and Quanquan Gu.
\newblock An improved analysis of training over-parameterized deep neural
  networks.
\newblock In \emph{Advances in Neural Information Processing Systems 32}, pp.\
  2055--2064, 2019.

\end{thebibliography}
\bibliographystyle{iclr2021_conference}


\newpage
\appendix

{\Large \bf Appendix }

\section{Proofs}
In this section, we prove Theorems \ref{minmaxthm} and \ref{minthm}. First, we provide some notations we use throughout the remainder of the paper in Section \ref{notation}. Before proving these specialized results for one hidden layer generators and linear discriminators (Theorems \ref{minmaxthm} and \ref{minthm}), we state and prove a more general result (formal version of Theorem \ref{infmetathm}) on the convergence of GDA on a general class of min-max problems in Section \ref{generalGDA}. Then we state a few preliminary calculations in Section \ref{prelim}. Next, we state some key lemmas in Section \ref{lemmas} and defer their proofs to Appendix \ref{lemmasproofs}. Finally, we prove Theorems \ref{minmaxthm} and \ref{minthm} in Sections \ref{proof of minmaxthm} and \ref{proof of minthm}, respectively. 
\subsection{Notation} \label{notation}
We will use $C$, $c$, $c_1$, etc. to denote positive absolute constants, whose value may change throughout the paper and from line to line. We use $\phi\left(z\right)=\text{ReLU}\left(z\right)=\text{max}\left(0,z\right)$ and its (generalized) derivative $\phi'\left(z\right)=\vct{1}_{\left\lbrace z \geq0\right\rbrace}$ with $\vct{1}$ being the indicator function. $\sigma_{min}\left(\mtx{X}\right)$ and $\sigma_{max}\left(\mtx{X}\right)=\opnorm{\mtx{X}}$ denote the minimum and maximum singular values of matrix $\mtx{X}$. 
For two arbitrary matrices $\mtx{A}$ and $\mtx{B}$, $\mtx{A}\otimes \mtx{B}$ denotes their kronecker product. 
The spectral radius of a matrix $\mtx{A}\in\C^{n\times n}$ is defined as $\rho\left(\mtx{A}\right)=\text{max}\left\lbrace|\lambda_1|,\ldots,|\lambda_n|\right\rbrace $, where $\lambda_i$'s are the eigenvalues of $\mtx{A}$. Throughout the proof we shall assume $\phi:=ReLU$ to avoid unnecessarily long expressions.

\subsection{Proof sketch of the main results}
In this section, we provide a brief overview of our proofs. We focus on the main result in this manuscript, which is about the convergence of GDA (Theorem \ref{minmaxthm}). To do this we study the converge of GDA on the more general min-max problem of the form (see Theorem \ref{metathm} for a formal statement) 
\begin{align}
\label{mahdiloss}
\underset{\vct{\theta}\in\R^n}{\text{min }}\underset{\vct{d}\in\R^m}{\text{max }} h(\vct{\theta},\vct{d}):=\left\langle \vct{d}, f\left(\vct{\theta}\right)-\vct{y}\right\rangle-\frac{\twonorm{\vct{d}}^2}{2}.
\end{align}
In this case the GDA iterates take the form
\begin{align}
\label{gdat}
\begin{cases}
\vct{d}_{t+1}=\left(1-\mu\right)\vct{d}_t + \mu \left(f\left(\vct{\theta}_t\right)-\vct{y}\right)\\
\vct{\theta}_{t+1}=\vct{\theta}_t-\eta {\cal{J}}^T\left(\vct{\theta}_t\right)\vct{d}_t
\end{cases}.
\end{align}
Our proof for global convergence of GDA on this min-max loss consists of the following steps.\\

\noindent \textbf{Step 1: Recasting the GDA updates as a linear time-varying system}\\
In the first step we carry out a series of algebraic manipulations to recast the GDA updates (\eqref{gdat}) into the following form
\begin{equation*}
\begin{bmatrix}
\vct{r}_{t+1}\\\vct{d}_{t+1}
\end{bmatrix}
=
\vct{A}_t
\begin{bmatrix}
\vct{r}_t\\\vct{d}_t
\end{bmatrix},
\end{equation*}
where $\vct{r}_t=f\left(\vct{\theta_t}\right)-\vct{y}$ denotes the residuum and $\vct{A}_t$ denotes a properly defined transition matrix.\\

\noindent \textbf{Step 2: Approximation by a linear time-invariant system}\\
Next, to analyze the behavior of the time-varying dynamical system above we approximate it by the following time-invariant linear dynamical system
\begin{equation*}
\begin{bmatrix}
\widetilde{\vct{r}}_{t+1}\\ \widetilde{\vct{d}}_{t+1}
\end{bmatrix}
=
\begin{bmatrix}
		\mtx{I} &-\eta{\cal{J}}^T\left( \vct{\theta}_0 \right) {\cal{J}}\left( \vct{\theta}_0 \right)\\
		\mu\mtx{I} &\left(1-\mu\right)\mtx{I}
\end{bmatrix}
\begin{bmatrix}
\widetilde{\vct{r}}_t\\ \widetilde{\vct{d}}_t
\end{bmatrix},
\end{equation*}
where $ \vct{\theta}_0$ denotes the initialization. The validity of this approximation is ensured by our assumptions on the Jacobian of the function $f$, which, among others, guarantee that it does not change too much in a sufficiently large neighborhood around the initialization and that the smallest singular value of $ \mathcal{J}\left( \vct{\theta}_0 \right) $ is bounded from below.

\textbf{Step 3: Analysis of time-invariant linear dynamical system}\\
To analyze the time-invariant dynamical system above, we utilize and refine intricate arguments from the control theory literature involving the spectral radius of the fixed transition matrix above to obtain
\begin{equation*}
\twonorm{\begin{bmatrix}
\widetilde{\vct{r}}_{t}\\ \widetilde{\vct{d}}_{t}
\end{bmatrix}}
\lesssim  \left(1-\eta\alpha^2\right)^t
\twonorm{\begin{bmatrix}
\widetilde{\vct{r}}_0\\ \widetilde{\vct{d}}_0
\end{bmatrix}}.
\end{equation*}

\textbf{Step 4:  Completing the proof via a perturbation argument}\\
In the last step of our proof we show that the two sequences $ \begin{bmatrix}
\vct{r}_t\\\vct{d}_t
\end{bmatrix} $ and $ \begin{bmatrix}
\widetilde{\vct{r}}_t\\ \widetilde{\vct{d}}_t
\end{bmatrix}$ will remain close to each other. This is based on a novel perturbation argument. The latter combined with Step 3 allows us to conclude
\begin{equation*}
\twonorm{ \begin{bmatrix}
\vct{r}_t\\\vct{d}_t
\end{bmatrix}  } \lesssim \left(1-\frac{\eta\alpha^2}{2}\right)^t \twonorm{\begin{bmatrix}
\vct{r}_0\\\vct{d}_0
\end{bmatrix}},
\end{equation*}
which finishes the global convergence of GDA on \eqref{mahdiloss} and hence the proof of Theorem \ref{metathm}.

In order to deduce Theorem \ref{minmaxthm} from Theorem \ref{metathm}, we need to check that the Jacobian at the initialization is bounded from below at the origin and that it does not change too quickly in a large enough neighborhood. In order to prove that we will leverage recent ideas from the deep learning theory literature revolving around the neural tangent kernel. This allows us to guarantee that this conditions are indeed met, if the neural network is sufficiently wide and the initialization is chosen large enough.


The second main result of this manuscript, Theorem \ref{minthm}, can be deduced more directly from recent results on overparameterized learning (see \citet{oymak2020towards}). Hence, we have deferred its proof to Section \ref{proof of minthm}.

\subsection{Analysis of GDA: A control theory perspective}\label{section:GDAanalysis}

\label{generalGDA}
In this section we will focus on solving a general min-max optimization problem of the form
\begin{align} \label{minmax_problem}
\underset{\vct{\theta}\in\R^n}{\text{min }}\underset{\vct{d}\in\R^m}{\text{max }} h(\vct{\theta},\vct{d}):=\left\langle \vct{d}, f\left(\vct{\theta}\right)-\vct{y}\right\rangle-\frac{\twonorm{\vct{d}}^2}{2},
\end{align}
where $f:\R^n\rightarrow\R^m$ is a general nonlinear mapping.
In particular, we focus on analyzing the convergence behavior of Gradient Descent/Ascent (GDA) on the above loss, starting from initial estimates $\vct{\theta}_0$ and $\vct{d}_0$. In this case the GDA updates take the following form
\begin{align}\label{gda2}
\begin{cases}
\vct{d}_{t+1}=\left(1-\mu\right)\vct{d}_t + \mu \left(f\left(\vct{\theta}_t\right)-\vct{y}\right)\\
\vct{\theta}_{t+1}=\vct{\theta}_t-\eta {\cal{J}}^T\left(\vct{\theta}_t\right)\vct{d}_t
\end{cases}.
\end{align}
We note that solving the inner maximization problem in \eqref{minmax_problem} would yield 
\begin{align}\label{mainproblem}
\underset{\vct{\theta}\in\R^n}{\text{min }}\frac{1}{2}\twonorm{f\left(\vct{\theta}\right)-\vct{y}}^2.
\end{align}
In this section, our goal is to show that when running the GDA updates of \eqref{gda2}, the norm of the residual vector defined as $\vct{r}_t:=f\left(\vct{\theta_t}\right)-\vct{y}$ goes to zero and hence we reach a global optimum of \eqref{mainproblem} (and in turn \eqref{minmax_problem}).\\

Our proof will build on ideas from control theory and dynamical systems literature. For that, we are first going to rewrite the equations \ref{gda2} in a more convenient way. We define the average Jacobian along the path connecting two points $\vct{x}$,$\vct{y}\in\R^n$ as
\begin{align*}
	{\cal{J}}\left(\vct{y},\vct{x}\right)=\int_{0}^{1}{\cal{J}}\left(\vct{x}+\alpha\left(\vct{y}-\vct{x}\right)\right)d\alpha,
\end{align*}
where ${\cal{J}}\left(\theta\right)\in\R^{m\times n}$ is the Jacobian associated with the nonlinear mapping $f$.
Next, from the fundamental theorem of calculus it follows that
\begin{align}
\vct{r}_{t+1}=f\left(\vct{\theta}_{t+1}\right)-\vct{y}&=f\left(\vct{\theta}_t-\eta{\cal{J}}_t^T\vct{d}_t\right)-\vct{y}\nonumber\\&=f\left(\vct{\theta}_t\right)-\eta{\cal{J}}_{t+1,t}{\cal{J}}_t^T\vct{d}_t-\vct{y}\nonumber\\&=\vct{r}_t-\eta{\cal{J}}_{t+1,t}{\cal{J}}_t^T\vct{d}_t,\label{rt}
\end{align}
where we used the shorthands ${\cal{J}}_t:={\cal{J}}\left(\vct{\theta}_t\right)$ and ${\cal{J}}_{t+1,t}:={\cal{J}}\left(\vct{\theta}_{t+1},\vct{\theta}_t\right)$ for exposition purposes.\\
Next, we combine the updates $\vct{r}_t$ and $\vct{d}_t$ into a state vector of the form $\vct{z}_t:=\begin{bmatrix}
\vct{r}_t\\\vct{d}_t
\end{bmatrix}\in\R^{2m}$. Using this notation the relationship between the state vectors from one iteration to the next takes the form
\begin{align}\label{system}
\vct{z}_{t+1}=
\underset{=: \mtx{A}_t}{\underbrace{
\begin{bmatrix}
	\mtx{I} &-\eta {\cal{J}}_{t+1,t}{\cal{J}}_t^T
	\\\mu\mtx{I} &\left(1-\mu\right)\mtx{I}
\end{bmatrix}
}}\vct{z}_t,
~~t\geq 0,
\end{align}
which resembles a time-varying linear dynamical system with transition matrix $\mtx{A}_t$. Now note that to show convergence of $\vct{r}_t$ to zero it suffices to show convergence of $\vct{z}_t$ to zero. To do this we utilize the following notion of uniform exponential stability, which will be crucial in analyzing the solutions of \eqref{system}.
(See \citet{rugh1996linear} for a comprehensive overview on stability notions in discrete state equations.) 
\begin{definition}
	A linear state equation of the form $\vct{z}_{t+1}=\mtx{A}_t\vct{z}_t$ is called uniformly exponentially stable if for every $t\geq0$ we have $\twonorm{\vct{z}_t}\leq\gamma\lambda^t\twonorm{\vct{z}_0}$, where $\gamma\geq1$ is a finite constant and $0\leq\lambda<1$. 
\end{definition}
Using the above definition to show the convergence of the state vector $\vct{z}_t$ to zero at a geometric rate it suffices to show the state equation \ref{system} is exponentially stable.\footnote{We note that technically the dynamical system \eqref{system} is not linear. However, we still use exponential stability with some abuse of notation to refer to the property that $\twonorm{\vct{z}_t}\leq\gamma\lambda^t\twonorm{\vct{z}_0}$ holds. As we will see in the forth-coming paragraphs, our formal analysis is via a novel perturbation analysis of a linear dynamical system and therefore keeping this terminology is justified.} For that, we are first going to analyze a state equation which results from linearizing the nonlinear function $f\left(\vct{\theta}\right)$ around the initialization $\vct{\theta}_0$. In the next step, we are going to show that the behavior of these two problems are similar, provided we stay close to initialization (which we are also going to prove). Specifically, we consider the linearized problem
\begin{align} \label{linearminmax}
\underset{\widetilde{\vct{\theta}}\in\R^n}{\text{min }}\underset{\widetilde{\vct{d}}\in\R^m}{\text{max }} h_{\text{lin}}\left(\widetilde{\vct{\theta}},\widetilde{\vct{d}}\right):=\left\langle \widetilde{\vct{d}}, f\left(\vct{\theta}_0\right)+{\cal{J}}_0\left(\widetilde{\vct{\theta}}-\vct{\theta}_0\right)-\vct{y}\right\rangle-\frac{\twonorm{\widetilde{\vct{d}}}^2}{2}.
\end{align}
We first analyze GDA on this linearized problem starting from the same initialization as the original problem, i.e. $\widetilde{\vct{\theta}}_0=\vct{\theta}_0$ and $\widetilde{\vct{d}}_0=\vct{d}_0$. The gradient descent update for $\widetilde{\vct{\theta}}_t$ takes the form
\begin{align}
	\widetilde{\vct{\theta}}_{t+1}=\widetilde{\vct{\theta}}_t-\eta{\cal{J}}_0^T\widetilde{\vct{d}}_t,
\end{align}
and the gradient ascent update for $\widetilde{\vct{d}}_t$ takes the form
\begin{align}
\begin{split}\label{dtilde}
\widetilde{\vct{d}}_{t+1}&=\widetilde{\vct{d}}_t+\mu\left(f\left(\vct{\theta}_0\right)+{\cal{J}}_0\left(\widetilde{\vct{\theta}}_t-\vct{\theta}_0\right)-\vct{y}-\widetilde{\vct{d}}_t\right)\\
&=\left(1-\mu\right)\widetilde{\vct{d}}_t+\mu\widetilde{\vct{r}}_t,
\end{split}
\end{align}
where we used the linear residual defined as $\widetilde{\vct{r}}_t=f\left(\vct{\theta}_0\right)+{\cal{J}}_0\left(\widetilde{\vct{\theta}}_t-\vct{\theta}_0\right)-\vct{y}$. Moreover, the residual from one iterate to the next can be written as follows
\begin{align}
\begin{split}\label{rtilde}
\widetilde{\vct{r}}_{t+1}&=f\left(\vct{\theta}_0\right)+{\cal{J}}_0\left(\widetilde{\vct{\theta}}_{t+1}-\vct{\theta}_0\right)-\vct{y}\\
&=f\left(\vct{\theta}_0\right)+{\cal{J}}_0\left(\widetilde{\vct{\theta}}_t-\eta{\cal{J}}_0^T\widetilde{\vct{d}}_t-\vct{\theta}_0\right)-\vct{y}\\
&=\widetilde{\vct{r}}_t-\eta{\cal{J}}_0{\cal{J}}_0^T\widetilde{\vct{d}}_t.
\end{split}
\end{align}
Again, we define a new vector $\widetilde{\vct{z}}_t=\begin{bmatrix}
\widetilde{\vct{r}}_t\\\widetilde{\vct{d}}_t
\end{bmatrix}\in\R^{2m}$ and by putting together equations \ref{dtilde} and \ref{rtilde} we arrive at
\begin{align}\label{linearizedsystem}
\widetilde{\vct{z}}_{t+1}=
\begin{bmatrix}
\mtx{I} &-\eta{\cal{J}}_0{\cal{J}}_0^T
\\\mu\mtx{I} &\left(1-\mu\right)\mtx{I}
\end{bmatrix}
\widetilde{\vct{z}}_t=\mtx{A}\widetilde{\vct{z}}_t,~~t\geq 0,
\end{align}
which is of the form of a linear time-invariant state equation. As a first step in our proof, we are going to show that the linearized state equations are uniformly exponentially stable. First, recall the following well-known lemma, which characterizes uniform exponential stability in terms of the eigenvalues of $\mtx{A}$. 
\begin{lemma}\citep[Theorem 22.11]{rugh1996linear}
	A linear state equation of the form $\widetilde{\vct{z}}_{t+1}=\mtx{A}\widetilde{\vct{z}}_{t}$ with $\mtx{A}$ a fixed matrix is uniformly exponentially stable if and only if all eigenvalues of $\mtx{A}$ have magnitudes strictly less than one, i.e. $\rho\left(\mtx{A}\right)< 1$. In this case, it holds for all $t\ge 0$ and all $\mtx{z}$ that
    \begin{equation*}
        \Vert \mtx{A}^t \mtx{z}  \Vert \le \gamma \rho \left( \mtx{A} \right)^t \Vert \mtx{z} \Vert,
    \end{equation*}
    where $\gamma \ge 1$ is an absolute constant, which only depends on $\mtx{A}$.
\end{lemma}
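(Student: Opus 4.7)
The plan is to prove both directions of the equivalence via the Jordan canonical decomposition $\mtx{A} = \mtx{P}\mtx{J}\mtx{P}^{-1}$, where $\mtx{J}$ is block diagonal with Jordan blocks $\mtx{J}_i$ corresponding to the eigenvalues $\lambda_i$ of $\mtx{A}$. The forward direction (spectral radius strictly less than one implies uniform exponential stability with a geometric rate) is the substantive one; the converse is a short contrapositive argument using an eigenvector.

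For the forward direction, I would observe that $\mtx{A}^t = \mtx{P}\mtx{J}^t\mtx{P}^{-1}$ and analyze each Jordan block separately. Writing $\mtx{J}_i = \lambda_i \mtx{I} + \mtx{N}_i$ with $\mtx{N}_i$ nilpotent of order at most $d_i$ (the block size), the binomial theorem yields
\begin{equation*}
\mtx{J}_i^t \;=\; \sum_{k=0}^{d_i-1} \binom{t}{k} \lambda_i^{t-k} \mtx{N}_i^k,
\end{equation*}
from which $\Vert \mtx{J}_i^t \Vert \le C_i\, t^{d_i-1} \vert \lambda_i \vert^t$. Since $\vert\lambda_i\vert \le \rho(\mtx{A}) < 1$, picking any $\tilde{\rho}$ with $\rho(\mtx{A}) < \tilde{\rho} < 1$, the polynomial factor $t^{d_i-1}$ can be absorbed into a constant because $t^{d_i-1} \vert\lambda_i\vert^t \le c_i\, \tilde{\rho}^t$ uniformly in $t$. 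Taking the maximum over blocks and multiplying by the condition number $\kappa(\mtx{P}) := \Vert \mtx{P}\Vert\,\Vert \mtx{P}^{-1}\Vert$ produces the desired geometric bound $\Vert \mtx{A}^t \vct{z}\Vert \le \gamma\, \tilde{\rho}^t \Vert \vct{z}\Vert$ with $\gamma$ depending only on $\mtx{A}$.

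For the converse, I would argue by contrapositive. If $\rho(\mtx{A}) \ge 1$, choose an eigenvalue $\lambda$ with $\vert\lambda\vert \ge 1$ and a corresponding eigenvector $\vct{v}$ (passing to the real invariant subspace of dimension at most two when $\lambda$ is genuinely complex). Then $\mtx{A}^t \vct{v} = \lambda^t \vct{v}$, so $\Vert \mtx{A}^t \vct{v}\Vert = \vert\lambda\vert^t \Vert \vct{v}\Vert \ge \Vert\vct{v}\Vert$, which cannot be dominated by any expression of the form $\gamma\, \mu^t \Vert\vct{v}\Vert$ with $\mu < 1$ uniformly in $t$, contradicting uniform exponential stability.

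The main subtlety, which I expect to be the one point requiring care, is the gap between the bound $\gamma\, \rho(\mtx{A})^t$ as written in the lemma statement and the bound $\gamma\, \tilde{\rho}^t$ that naturally emerges from the Jordan argument when $\mtx{A}$ has non-trivial Jordan blocks. If $\mtx{A}$ is diagonalizable, then the polynomial factor is absent and the stated bound holds directly with $\gamma = \kappa(\mtx{P})$. Otherwise one must either interpret $\rho(\mtx{A})$ in the statement as any fixed $\tilde{\rho} \in (\rho(\mtx{A}), 1)$, the standard mild abuse of notation in the control-theoretic literature, or invoke Gelfand's formula $\rho(\mtx{A}) = \lim_{t\to\infty} \Vert \mtx{A}^t\Vert^{1/t}$ to conclude the existence of such a $\gamma$ for any admissible $\tilde{\rho}$. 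Either reading suffices for the downstream use of this lemma in analyzing the linearized transition matrix in \eqref{linearizedsystem}.
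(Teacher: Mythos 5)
Your proof is correct in substance and is the standard textbook argument; note, however, that the paper does not prove this lemma at all --- it is imported verbatim as a citation of Rugh's Theorem 22.11, and the only place where the constant $\gamma$ actually matters is handled separately and self-containedly in Lemma \ref{lemstability}, where the specific transition matrix is block-diagonalized into $2\times 2$ blocks $\mtx{C}_i$, shown to be diagonalizable under the step-size condition $\mu/\eta \ge 8\beta^2$, and the condition numbers $\kappa_i$ of the explicit eigenvector matrices are computed to give $\gamma \le 5$. Your Jordan-form argument is more general, and the subtlety you flag is genuine and worth stressing: as literally written, the bound $\Vert \mtx{A}^t \vct{z}\Vert \le \gamma\,\rho(\mtx{A})^t\Vert\vct{z}\Vert$ with $\gamma$ depending only on $\mtx{A}$ is false for a defective eigenvalue of maximal modulus (e.g.\ a single Jordan block with eigenvalue $1/2$ has $\Vert\mtx{A}^t\Vert \gtrsim t\,2^{-t}$), so the correct general statement either replaces $\rho(\mtx{A})$ by an arbitrary $\tilde\rho\in(\rho(\mtx{A}),1)$ or assumes the peripheral eigenvalues are semisimple. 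This is precisely why the paper cannot lean on the generic constant from the cited theorem and instead re-derives the geometric bound with rate exactly $\rho(\mtx{A})\le 1-\eta\alpha^2$ for its diagonalizable matrix: the downstream perturbation argument in Theorem \ref{metathm} needs both the explicit rate and the explicit value $\gamma\le 5$. So your proof and the paper's treatment buy different things --- yours establishes the qualitative equivalence in full generality (with the $\tilde\rho$ correction), while the paper's bespoke computation delivers the quantitative constants the induction actually consumes; your converse via an eigenvector and your absorption of the polynomial factor $t^{d_i-1}$ into $\tilde\rho^t$ are both fine as written.
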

In the next lemma, we prove that under suitable assumptions on ${\cal{J}}_0$ and the step sizes $\mu$ and $\eta$ the state equations \ref{linearizedsystem} indeed fulfill this condition.
\begin{lemma} \label{rho}
	 Assume that $\alpha\leq \sigma_{\text{min}}\left({ \cal{J}}_0 \right)\leq\sigma_{\text{max}}\left({\cal{J}}_0 \right)\leq\beta  $ and consider the matrix $\mtx{A}=\begin{bmatrix}
	\mtx{I} &-\eta{\cal{J}}_0{\cal{J}}_0^T
	\\\mu\mtx{I} &\left(1-\mu\right)\mtx{I}
	\end{bmatrix}$. Suppose that $\frac{\mu}{\eta}\geq4\beta^2$. Then it holds that $\rho\left(\mtx{A}\right)\leq1-\eta\alpha^2$.
\end{lemma}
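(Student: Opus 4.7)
The plan is to exploit the block structure of $\mtx{A}$ to reduce the spectral radius computation to a family of $2\times 2$ eigenvalue problems indexed by the eigenvalues of $\mathcal{J}_0\mathcal{J}_0^T$. First I would diagonalize the symmetric PSD matrix $\mathcal{J}_0\mathcal{J}_0^T=\mtx{U}\Lambda\mtx{U}^T$ with $\Lambda=\operatorname{diag}(\lambda_1,\ldots,\lambda_m)$ and $\alpha^2\le\lambda_i\le\beta^2$ (guaranteed by the singular value hypothesis on $\mathcal{J}_0$). Conjugating $\mtx{A}$ by the orthogonal matrix $\operatorname{diag}(\mtx{U},\mtx{U})$ and then permuting rows/columns so that the two copies of each coordinate are interleaved shows that $\mtx{A}$ is similar to a block diagonal matrix whose $i$-th diagonal block is
\[
A_i=\begin{bmatrix}1 & -\eta\lambda_i\\ \mu & 1-\mu\end{bmatrix},\qquad i=1,\ldots,m.
\]
Hence $\rho(\mtx{A})=\max_i\rho(A_i)$, and it suffices to bound each $\rho(A_i)$.

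Next I would read off the eigenvalues of $A_i$ from its characteristic polynomial $z^2-(2-\mu)z+(1-\mu+\eta\mu\lambda_i)$, whose discriminant is $\mu(\mu-4\eta\lambda_i)$. This is where the standing hypothesis $\mu/\eta\ge 4\beta^2\ge 4\lambda_i$ is used: it forces the discriminant to be non-negative, so both roots are real and given by
\[
z_\pm=\frac{2-\mu}{2}\pm\frac{1}{2}\sqrt{\mu(\mu-4\eta\lambda_i)}.
\]
With the substitution $x:=4\eta\lambda_i/\mu\in[0,1]$ this becomes $z_+=1-\tfrac{\mu}{2}\bigl(1-\sqrt{1-x}\bigr)$, and the elementary inequality $1-\sqrt{1-x}\ge x/2$ on $[0,1]$ (equivalent to $(1-x/2)^2\ge 1-x$) yields $z_+\le 1-\mu x/4=1-\eta\lambda_i\le 1-\eta\alpha^2$, which is the desired bound on the larger root.

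To finish the argument I would bound the smaller root. From $z_++z_-=2-\mu$ and $z_+\le 1$ I get $z_-\ge 1-\mu$, so under the step-size constraint $\mu\le 1$ (which is carried through the GDA update \eqref{gda2} in the regimes of Theorems~\ref{minmaxthm} and~\ref{infmetathm}) both roots are non-negative and $|z_-|=z_-\le z_+\le 1-\eta\alpha^2$. Thus $\rho(A_i)\le 1-\eta\alpha^2$ uniformly in $i$, which gives the claimed bound on $\rho(\mtx{A})$.

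The main obstacle is less a conceptual one and more a matter of making the right parameterization: one has to recognize that $x=4\eta\lambda_i/\mu$ converts the problem to the scalar inequality $1-\sqrt{1-x}\ge x/2$, and one has to verify that the constraint $\mu/\eta\ge 4\beta^2$ is exactly what drives the discriminant non-negative \emph{simultaneously for every} $\lambda_i$. Without non-negativity of the discriminant, the eigenvalues become complex conjugates and the only available bound is $|z_\pm|^2=z_+z_-=1-\mu(1-\eta\lambda_i)$, which gives a strictly worse decay rate and would not yield the clean $1-\eta\alpha^2$ conclusion used in the subsequent perturbation analysis of Section~\ref{section:GDAanalysis}.
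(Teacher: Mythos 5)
Your argument is correct and follows essentially the same route as the paper: both reduce the spectrum of $\mtx{A}$ to the roots of the quadratic $\frac{\left(1-\lambda\right)^2}{\mu}-\left(1-\lambda\right)+\eta s=0$ indexed by the eigenvalues $s\in[\alpha^2,\beta^2]$ of ${\cal{J}}_0{\cal{J}}_0^T$, use $\frac{\mu}{\eta}\geq 4\beta^2$ to make the discriminant non-negative, and bound the larger root by $1-\eta s\le 1-\eta\alpha^2$ (your inequality $1-\sqrt{1-x}\ge x/2$ is the same estimate as the paper's $\sqrt{\mu^2/4-\mu\eta s}\le \mu/2-\eta s$). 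The only cosmetic difference is the treatment of the smaller root: you invoke $\mu\le 1$ (imported from the GDA setting, not stated in the lemma) to make it non-negative, while the paper bounds $|\lambda|$ via a reverse-triangle-inequality step around $1-\frac{\mu}{2}$, which likewise implicitly needs $\mu$ bounded (by $2$); in the regime where the lemma is applied, $0<\mu\le 1$, both are valid.
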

\begin{proof}
	Suppose that $\lambda$ is an eigenvalue of $\mtx{A}$. Hence, there is an eigenvector $\left[ \mtx{x},\mtx{y}\right]^T \ne \mtx{0}$ such that 
	\begin{align*}
	\begin{bmatrix}
	\mtx{I} &-\eta{\cal{J}}_0{\cal{J}}_0^T
	\\\mu\mtx{I} &\left(1-\mu\right)\mtx{I}
	\end{bmatrix}
	\begin{bmatrix}
	\vct{x}\\\vct{y}
	\end{bmatrix}
	=\lambda\begin{bmatrix}
	\vct{x}\\\vct{y}
	\end{bmatrix}
	\end{align*}
	holds. By a direct calculation we observe that this yields the equation
	\begin{align*}
	\eta {\cal{J}}_0{\cal{J}}_0^T\vct{x}=\left(\frac{-\left(1-\lambda\right)^2}{\mu}+\left(1-\lambda\right)\right)\vct{x}.
	\end{align*}
	In particular, $\mtx{x}$ must be an eigenvector of ${\cal{J}}_0{\cal{J}}_0^T$. Denoting the corresponding eigenvalue with $s$, we obtain the identity
	\[ \frac{\left(1-\lambda\right)^2}{\mu}-\left(1-\lambda\right)+\eta s=0. \]
	Hence, we must have
	\[ \lambda \in \left\{ 1-\frac{\mu}{2} + \sqrt{ \frac{\mu^2}{4} -\mu \eta s  }; \ 1-\frac{\mu}{2} - \sqrt{ \frac{\mu^2}{4} -\mu \eta s  } \right\}.   \]
	Note that the square root is indeed well-defined, since
	\[  \frac{\mu^2}{4} -\mu \eta s \ge \mu \eta \beta^2 - \mu \eta s \ge  0, \]
	where in the first inequality we used the assumption $ \frac{\mu}{\eta}\geq4\beta^2 $ and in the second line we used that $ s \le \beta^2 $, which is a consequence of our assumption on the singular values of ${\cal{J}}_0$.
    Hence, it follows by the reverse triangle inequality that
	\begin{align*}
	\abs{\lambda}-\left(1-\frac{\mu}{2}\right)\leq \abs{\lambda - \left(1-\frac{\mu}{2}\right)}=\sqrt{\left(\frac{\mu}{2}\right)^2-\mu\eta s} < \frac{\mu}{2}-\eta s\leq \frac{\mu}{2}-\eta\alpha^2,
	\end{align*}
	where the second inequality is valid as $\frac{\mu}{2}-\eta s\geq 0$ is implied by $\frac{\mu}{2}\geq 2\eta \beta^2> \eta s$. In the last inequality we used the fact that $\alpha^2 \le s $, which is a consequence of our assumption on the singular values of ${\cal{J}}_0$. By rearranging terms, we obtain that $\vert  \lambda \vert <1-\eta\alpha^2$. Since $\lambda$ was an arbitrary eigenvalue of $\mtx{A}$, the result follows.
\end{proof}
Since the last lemma shows that under suitable conditions it holds that $ \rho \left( \mtx{A} \right) < 1$, Lemma \ref{lemstability} yields uniform exponential stability of our state equations. However, this will not be sufficient for our purposes. The reason is that Lemma \ref{lemstability} does not specify the constant $\gamma$ and that in order to deal with the time-varying dynamical system we will need a precise estimate. The next lemma shows that for the state equations \ref{linearizedsystem} we have, under suitable assumptions, $\gamma \le 5$.
\begin{lemma} \label{lemstability}
	Consider the linear, time invariant system of equations 
	\begin{align*}
	\widetilde{\vct{z}}_{t+1}=\begin{bmatrix}
	\mtx{I} &-\eta{\cal{J}}_0{\cal{J}}_0^T
	\\\mu\mtx{I} &\left(1-\mu\right)\mtx{I}
	\end{bmatrix}\widetilde{\vct{z}}_t=\mtx{A}\widetilde{\vct{z}}_t, \quad t\ge 0.
	\end{align*}
     Furthermore, assume that $\alpha\leq \sigma_{\text{min}}\left({ \cal{J}}_0 \right)\leq\sigma_{\text{max}}\left({\cal{J}}_0 \right)\leq\beta  
	$
	and suppose that the condition $\frac{\mu}{\eta}\geq 8\beta^2$ is satisfied.	Then there is a constant $\gamma \le 5$ such that for all $t \ge 0 $ it holds that
	\[\twonorm{\widetilde{\vct{z}}_t}\leq\gamma\left(1-\eta\alpha^2\right)^t\twonorm{\widetilde{\vct{z}}_0}.\] 
\end{lemma}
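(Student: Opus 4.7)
My plan is to diagonalize $\mtx{A}$ via the SVD of ${\cal{J}}_0$, reducing the state equation to $m$ independent two-dimensional subsystems, and then to control the $t$-th power of each resulting $2\times 2$ block by an explicit eigenvector computation.

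Let ${\cal{J}}_0 = U\Sigma V^T$ be the SVD, so that ${\cal{J}}_0{\cal{J}}_0^T = USU^T$ with $S = \mathrm{diag}(\sigma_1^2,\ldots,\sigma_m^2)$ and $U\in\R^{m\times m}$ orthogonal. Conjugating $\mtx{A}$ by the $2m\times 2m$ orthogonal matrix $\mathrm{diag}(U^T,U^T)$ and then applying a coordinate permutation (both of which preserve the spectral norm) turns $\mtx{A}$ into a block-diagonal matrix whose $2\times 2$ blocks are
\[B_i := \begin{pmatrix} 1 & -\eta\sigma_i^2 \\ \mu & 1-\mu \end{pmatrix}, \qquad i=1,\ldots,m.\]
Consequently $\|\mtx{A}^t\|_2 = \max_i \|B_i^t\|_2$, and it suffices to bound $\|B_i^t\|_2$ uniformly in $i$.

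Following the argument of Lemma \ref{rho}, each $B_i$ has real eigenvalues $\lambda_i^\pm = 1-\mu/2\pm u_i$ with $u_i := \sqrt{\mu^2/4-\mu\eta\sigma_i^2}$, and the strengthened hypothesis $\mu/\eta\geq 8\beta^2$ forces $u_i^2 \geq \mu^2/8$. The same reverse-triangle-inequality computation as in Lemma \ref{rho} then yields $\max_\pm |\lambda_i^\pm| \leq 1-\eta\alpha^2$. Next I diagonalize $B_i = V_i\Lambda_i V_i^{-1}$ using the eigenvector matrix
\[V_i = \begin{pmatrix} \mu/2+u_i & \mu/2-u_i \\ \mu & \mu \end{pmatrix}, \qquad \det V_i = 2\mu u_i,\]
and compute the Frobenius norms in closed form: $\|V_i\|_F^2 = 5\mu^2/2+2u_i^2 \leq 3\mu^2$ using $u_i^2\leq\mu^2/4$, while the explicit entries of $V_i^{-1}$ give $\|V_i^{-1}\|_F^2 = (5\mu^2/2+2u_i^2)/(4\mu^2 u_i^2) \leq 6/\mu^2$ using $u_i^2 \geq \mu^2/8$. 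Therefore $\kappa_2(V_i) \leq \|V_i\|_F\|V_i^{-1}\|_F \leq 3\sqrt{2}<5$, and
\[\|B_i^t\|_2 \leq \kappa_2(V_i)\bigl(\max_\pm|\lambda_i^\pm|\bigr)^t \leq 3\sqrt{2}\,(1-\eta\alpha^2)^t,\]
which delivers the conclusion with $\gamma\leq 5$.

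The main obstacle is producing an explicit, dimension-free bound on $\kappa_2(V_i)$. If $u_i$ were permitted to approach zero, the two columns of $V_i$ would become nearly parallel and $\kappa_2(V_i)$ would blow up; the precise role of the strengthened assumption $\mu/\eta\geq 8\beta^2$ (as opposed to the $4\beta^2$ of Lemma \ref{rho}) is to keep $u_i\geq \mu/(2\sqrt{2})$ and thus control the condition number by an absolute constant independent of $\mu$, $\eta$, and the individual singular values. Everything else is a routine algebraic computation once the block decomposition is in place.
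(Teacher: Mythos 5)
Your proposal is correct and follows essentially the same route as the paper's proof: SVD conjugation and permutation to reduce $\mtx{A}$ to the $2\times 2$ blocks, diagonalization of each block, the spectral-radius bound from Lemma \ref{rho}, and a condition-number bound on the eigenvector matrix made uniform by the assumption $\frac{\mu}{\eta}\geq 8\beta^2$. The only (harmless) difference is how the condition number is controlled: you bound $\opnorm{\mtx{V}_i}\opnorm{\mtx{V}_i^{-1}}$ by Frobenius norms using $u_i^2\geq\mu^2/8$ to get $3\sqrt{2}$, whereas the paper computes the eigenvalues of $\mtx{V}_i\mtx{V}_i^T$ exactly and bounds the resulting ratio by $5$.
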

\begin{proof}
	Denote the SVD decomposition of ${\cal{J}}_0$ by $\mtx{W}\mtx{\Sigma}\mtx{V}^T$ and note that
	\begin{align*}
	\begin{bmatrix}
		\mtx{I} &-\eta{\cal{J}}_0{\cal{J}}_0^T\\\mu\mtx{I} &\left(1-\mu\right)\mtx{I}
	\end{bmatrix}
    &=\begin{bmatrix}
       \mtx{W} &\mtx{0}\\\mtx{0} &\mtx{W}
      \end{bmatrix}
      \begin{bmatrix}
      \mtx{I} &-\eta\mtx{\Sigma}\mtx{\Sigma}^T\\
      \mu\mtx{I} &\left(1-\mu\right)\mtx{I}
      \end{bmatrix}
      \begin{bmatrix}
       \mtx{W}^T &\mtx{0}\\\mtx{0} &\mtx{W}^T.
      \end{bmatrix}
     \end{align*}
    This means we can write
    \begin{align*} 
    	\begin{bmatrix}
		\mtx{I} &-\eta{\cal{J}}_0{\cal{J}}_0^T\\\mu\mtx{I} &\left(1-\mu\right)\mtx{I}
	\end{bmatrix}&=\begin{bmatrix}
    \mtx{W}&\mtx{0}\\\mtx{0} &\mtx{W}
    \end{bmatrix}
    \mtx{P}
    \begin{bmatrix}
    \mtx{C}_1 &  &\huge{\text{0}}\\
              &\ddots &\\
               \huge{\text{0}} & &\mtx{C}_m
    \end{bmatrix}
    \mtx{P}^T
    \begin{bmatrix}
    \mtx{W}^T &\mtx{0}\\\mtx{0} &\mtx{W}^T
    \end{bmatrix},
    \end{align*}
where $\mtx{P}$ is a permutation matrix and the matrices $\mtx{C}_i$ are of the form  $\mtx{C}_i=\begin{bmatrix}
1 &-\eta \sigma_i^2 \\ \mu &\left(1-\mu\right)
\end{bmatrix}$, for $1\le i \le m$, where the $\sigma_i$'s denote the singular values of ${\cal{J}}_0$. Using this decomposition we can deduce
\begin{align*}
\twonorm{\widetilde{\vct{z}}_t}&=\twonorm{\mtx{A}^t\widetilde{\vct{z}}_0}\leq\opnorm{\mtx{A}^t}\twonorm{\widetilde{\vct{z}}_0}=\left(\underset{1\leq i\leq m}{\text{max}} \opnorm{\mtx{C}_i^t}\right) \twonorm{\widetilde{\vct{z}}_0}.
\end{align*}
Now suppose that  $\mtx{V}_i\mtx{D}_i\mtx{V}_i^{-1}$ is the eigenvalue decomposition of $\mtx{C}_i$, where the columns of $\mtx{V}_i$ contain the eigenvectors and $\mtx{D}_i$ is a diagonal matrix consisting of the eigenvalues. (Note that it follows from our assumptions on $\mu$ and $\eta$ that the matrix $ \mtx{C}_i $ is diagonalizable.) We have
\begin{align*}
	\opnorm{\mtx{C}_i^t}=\opnorm{\mtx{V}_i\mtx{D}_i^t\mtx{V}_i^{-1}}\leq\opnorm{\mtx{V}_i}\opnorm{\mtx{D}_i^t}\opnorm{\mtx{V}_i^{-1}}=\kappa_i\cdot\rho\left(\mtx{C}_i\right)^t,
\end{align*}
where we defined $\kappa_i:=\opnorm{\mtx{V}_i}\opnorm{\mtx{V}_i^{-1}}$.
From Lemma \ref{rho} we know that the assumption $\frac{\mu}{\eta}\geq4\beta^2$ results in $\rho\left(\mtx{A}\right)\leq 1-\eta\alpha^2$. Therefore, defining $\gamma := \underset{1\leq i \leq m}{\text{max}}\kappa_i$ and noting $\rho\left(\mtx{A}\right)=\underset{1\leq i \leq m}{\text{max}}\rho\left(\mtx{C}_i\right)$, we obtain that
\begin{align*}
\twonorm{\widetilde{\vct{z}}_t}\leq\left(\underset{1\leq i\leq m}{\text{max}} \opnorm{\mtx{C}_i^t}\right)\twonorm{\widetilde{\vct{z}}_0}\leq \gamma\left(1-\eta\alpha^2\right)^t\twonorm{\widetilde{\vct{z}}_0}.
\end{align*}
In order to finish the proof we need to show that $\gamma \le 5$.
For that, note that calculating the eigenvectors of $\mtx{C_i}$ directly reveals that we can represent this matrix as
\begin{align*}
	\mtx{V}_i=\begin{bmatrix}
	\frac{1+\sqrt{1-4\frac{\eta\sigma_i^2}{\mu}}}{2} &\frac{1-\sqrt{1-4\frac{\eta\sigma_i^2}{\mu}}}{2}\\ 1 &1
	\end{bmatrix}.
\end{align*}
Since $\opnorm{\mtx{V}_i}=\sqrt{\lambda_{\text{max}}\left(\mtx{V}_i\mtx{V}_i^T\right)}$ and $\opnorm{\mtx{V}_i^{-1}}=\sqrt{\lambda_{\text{min}}\left(\mtx{V}_i\mtx{V}_i^T\right)}$, we calculate $\mtx{V}_i\mtx{V}_i^T$, which yields
\begin{align*}
\mtx{V}_i\mtx{V}_i^T=\begin{bmatrix}
1-2\frac{\eta\sigma_i^2}{\mu} &1\\1 &2
\end{bmatrix}.
\end{align*}
This representation allows us to directly calculate the two eigenvalues of $\mtx{V}_i \mtx{V}_i^T$, which shows that
\begin{align*}
	\kappa_i&=\sqrt{\frac{\lambda_{\text{max}}\left(\mtx{V}_i\mtx{V}_i^T\right)}{\lambda_{\text{min}}\left(\mtx{V}_i\mtx{V}_i^T\right)}}\\
	&=\sqrt{\frac{3-2\frac{\eta\sigma_i^2}{\mu}+\sqrt{\left(1+2\frac{\eta\sigma_i^2}{\mu}\right)^2+4}}{3-2\frac{\eta\sigma_i^2}{\mu}-\sqrt{\left(1+2\frac{\eta\sigma_i^2}{\mu}\right)^2+4}}}\\
	&=\frac{3-2\frac{\eta\sigma_i^2}{\mu}+\sqrt{\left(1+2\frac{\eta\sigma_i^2}{\mu}\right)^2+4}}{2\sqrt{1-4\frac{\eta\sigma_i^2}{\mu}}}\\
	&\leq \frac{6}{2\sqrt{1-4\frac{\eta\sigma_i^2}{\mu}}}\\
	&<5,
\end{align*}
where the last inequality holds because of $\frac{\eta\sigma_i^2}{\mu}\leq\frac{\eta\beta^2}{\mu}\leq\frac{1}{8}$. Since $ \gamma = \underset{1\le i \le m}{\max} \kappa_i $, this finishes the proof.
\end{proof}
Now that we have shown that the linearized iterates converge to the global optima we turn our attention to showing that the nonlinear iterates \ref{system} are close to its linear counterpart \ref{linearizedsystem}. For that, we make the following assumptions.\\\\
\textbf{Assumption 1:} The singular values of the Jacobian at initialization are bounded from below
\[\sigma_{\text{min}}\left({\cal{J}}\left(\vct{\theta}_0\right)\right)\ge\alpha  \]
for some positive constants $\alpha$ and $\beta$.\\\\
\textbf{Assumption 2:} In a neighborhood with radius $R$ around the initialization, the Jacobian mapping associated with $f$ obeys
\[\opnorm{{\cal{J}}\left(\vct{\theta}\right)}\leq\beta \]
for all $\vct{\theta}\in\mathcal{B}_R\left(\vct{\theta}_0\right)$, where $\mathcal{B}_R\left(\vct{\theta}_0\right):=\{\vct{\theta}\in\R^p:\twonorm{\vct{\theta}-\vct{\theta}_0}\leq R\}$.\\\\
\textbf{Assumption 3:} In a neighborhood with radius $R$ around the initialization, the spectral norm of the Jacobian varies no more than $\epsilon$ in the sense that
\[\opnorm{{\cal{J}}\left(\vct{\theta}\right)-{\cal{J}}\left(\vct{\theta}_0\right)}\leq\epsilon  \]
for all $\vct{\theta}\in\mathcal{B}_R\left(\vct{\theta}_0\right)$.\\\\
With these assumptions in place, we are ready to state the main theorem.
\begin{theorem}\label{metathm} 
	Consider the GDA updates for the min-max optimization problem \ref{minmax_problem}
    \begin{align}\label{gda-updates}
    \begin{bmatrix}
        \vct{d}_{t+1}\\
        \vct{\theta}_{t+1}
    \end{bmatrix}
    = \begin{bmatrix}
         \vct{d}_t + \mu \nabla_{\vct{d}} h(\vct{\theta}_t, \vct{d}_t)\\
        \vct{\theta}_t-\eta\nabla_{\vct{\theta}} h(\vct{\theta}_t, \vct{d}_t)
    \end{bmatrix}
    \end{align}
	and consider the GDA updates of the linearized problem \ref{linearizedsystem}
	\begin{align}
    \begin{bmatrix}
        \widetilde{\vct{d}}_{t+1}\\
        \widetilde{\vct{\theta}}_{t+1}
    \end{bmatrix}
    = \begin{bmatrix}
         \widetilde{\vct{d}}_t + \mu \nabla_{\vct{d}} h_{\text{lin}}(\widetilde{\vct{\theta}}_t,\widetilde{\vct{d}}_t)\\
        \widetilde{\vct{\theta}}_t-\eta\nabla_{\vct{\theta}} h_{\text{lin}}( \widetilde{\vct{\theta}}_t, \widetilde{\vct{d}}_t)
    \end{bmatrix}.
    \end{align}
	Set $\vct{z}_t:=\begin{bmatrix}
\vct{r}_t\\\vct{d}_t
\end{bmatrix} $ and $\widetilde{\vct{z}}_t:=\begin{bmatrix}
\widetilde{\vct{r}}_t\\ \widetilde{\vct{d}}_t
\end{bmatrix}$, where $\vct{r}_t:=f\left(\vct{\theta}_t\right)-\vct{y}$ and $\widetilde{\vct{r}}_t=f\left(\vct{\theta}_0\right)+{\cal{J}}_0\left(\widetilde{\vct{\theta}}_t-\vct{\theta}_0\right)-\vct{y} $ denote the residuals.\\
Assume that the step sizes of the gradient descent ascent updates satisfy $\frac{\mu}{\eta}\geq8\beta^2$ as well as $0< \mu \le 1$. Moreover, assume that the assumptions 1-3 hold for the Jacobian ${\cal
	{J}}\left(\vct{\theta}\right)$ of 
	$f\left(\vct{\theta}\right)$ around the initialization $\vct{\theta}_0\in\R^n$ with parameters $\alpha$, $\beta$, $\epsilon$, and
\begin{align}\label{radius}
R:= 2\gamma\frac{\beta^2}{\alpha^2}\twonorm{\begin{bmatrix}
	{\cal{J}}_0^\dagger &0\\0 &{\cal{J}}_0^\dagger
	\end{bmatrix}\vct{z}_0} + \frac{18 \epsilon\beta^2\gamma^2}{\alpha^4}\twonorm{\vct{z}_0},
\end{align}
which satisfy $4\gamma\beta\epsilon\leq\alpha^2$. Here, $1\le \gamma \le 5$ is a constant, which only depends on $\mu$, $\eta$, and $ {\cal{J}}_0 $. By ${\cal{J}}_0^{\dagger}$ we denote the pseudo-inverse of the Jacobian at initialization $ {\cal{J}}_0$.
Then, assuming the same initialization $\vct{\theta}_0=\widetilde{\vct{\theta}}_0$, $\vct{d}_0=\widetilde{\vct{d}}_0$ (and, hence, $\vct{z}_0=\widetilde{\vct{z}}_0$), the following holds for all iterations $t\geq0$.
\begin{itemize}
	\item 
	$\twonorm{\mtx{z_t}}$ converges to $0$ with a geometric rate, i.e.
	\begin{align}\label{part1}
	\twonorm{\vct{z}_t}\leq\gamma\left(1-\frac{\eta\alpha^2}{2}\right)^t\twonorm{\vct{z}_0}.
	\end{align}
	\item The trajectories of $\mtx{z}_t$ and $\widetilde{ \mtx{z}_t}$ stay close to each other and converge to the same limit, i.e. 
	\begin{align}\label{part2}
	\begin{split}
	\twonorm{\vct{z}_t-\widetilde{\vct{z}}_t}
	&\leq 2\eta\gamma^2\beta\epsilon\cdot t\left(1-\frac{\eta\alpha^2}{2}\right)^{t-1}\twonorm{\vct{z}_0}\\
	&\leq \frac{4\gamma^2\beta  \epsilon}{e\left(15\ln\frac{16}{15}\right)\alpha^2}\twonorm{\vct{z}_0}.
	\end{split}
	\end{align}
	\item The parameters of the original and linearized problems stay close to each other, i.e.
	\begin{align}\label{part3}
		\twonorm{\widetilde{\vct{\theta}}_t-\vct{\theta}_t}\leq
		\frac{9\epsilon\beta^2\gamma^2}{\alpha^4}\twonorm{\vct{z}_0}, 
	\end{align}
	\item The parameters of the original problem stay close to the initialization, i.e.
	\begin{align}\label{part4}
		\twonorm{\vct{\theta}_t-\vct{\theta}_0}\leq \frac{R}{2}.
	\end{align}
\end{itemize}
\end{theorem}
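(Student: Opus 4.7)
The plan is to establish all four claims jointly by strong induction on $t \ge 0$. The base case $t=0$ is immediate: parts (\ref{part2}) and (\ref{part3}) reduce to $0\le 0$, part (\ref{part4}) to $0 \le R/2$, and part (\ref{part1}) to $\twonorm{\vct{z}_0} \le \gamma \twonorm{\vct{z}_0}$ using $\gamma \ge 1$. For the inductive step, assuming all four statements hold for all $\tau \le t$, I would (i) establish a discrete Duhamel-type expression for the deviation between the nonlinear and linearized trajectories, (ii) bound this perturbation term-by-term using Assumptions 1--3 and Lemma \ref{lemstability}, and (iii) use the resulting bound together with Lemma \ref{lemstability} for $\widetilde{\vct{z}}_{t+1}$ to close the geometric estimate on $\twonorm{\vct{z}_{t+1}}$, before deducing parts (\ref{part3}) and (\ref{part4}) by telescoping the parameter updates.

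The central identity for step (i) is obtained by writing the GDA recursion from Equation~(\ref{system}) as $\vct{z}_{t+1} = \mtx{A}_t \vct{z}_t$ and its linearized counterpart as $\widetilde{\vct{z}}_{t+1} = \mtx{A}\widetilde{\vct{z}}_t$, subtracting, and unrolling to obtain
\begin{align*}
\vct{z}_{t+1} - \widetilde{\vct{z}}_{t+1} = \sum_{s=0}^{t} \mtx{A}^{t-s}(\mtx{A}_s - \mtx{A})\vct{z}_s.
\end{align*}
Only the upper-right block of $\mtx{A}_s - \mtx{A}$ is nonzero, equaling $-\eta({\cal{J}}_{s+1,s}{\cal{J}}_s^T - {\cal{J}}_0 {\cal{J}}_0^T)$. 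Using the splitting ${\cal{J}}_{s+1,s}{\cal{J}}_s^T - {\cal{J}}_0{\cal{J}}_0^T = ({\cal{J}}_{s+1,s} - {\cal{J}}_0){\cal{J}}_s^T + {\cal{J}}_0({\cal{J}}_s - {\cal{J}}_0)^T$ together with Assumptions~2 and 3, which apply because the inductive hypothesis for (\ref{part4}) forces $\vct{\theta}_s \in \mathcal{B}_R(\vct{\theta}_0)$, I obtain $\opnorm{\mtx{A}_s - \mtx{A}} \le 2\eta\beta\epsilon$. Combining with Lemma~\ref{lemstability} applied to $\mtx{A}^{t-s}$ and the induction hypothesis on $\twonorm{\vct{z}_s}$, a geometric-sum estimate using $(1-\eta\alpha^2) \le (1-\eta\alpha^2/2)$ yields (\ref{part2}); the uniform-in-$t$ second line follows by maximizing $t(1-\eta\alpha^2/2)^{t-1}$ over $t \ge 1$.

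To upgrade to (\ref{part1}) at step $t+1$, I would apply the triangle inequality with Lemma~\ref{lemstability} for $\widetilde{\vct{z}}_{t+1}$ and the just-proved bound (\ref{part2}). The key algebraic step is showing that the perturbation term can be absorbed into the slack $\gamma\left[(1-\eta\alpha^2/2)^{t+1} - (1-\eta\alpha^2)^{t+1}\right]$ between the two decay rates; this uses the elementary inequality $(1-\eta\alpha^2/2)^{t+1} - (1-\eta\alpha^2)^{t+1} \ge (t+1)(\eta\alpha^2/2)(1-\eta\alpha^2)^{t}$ together with the condition $4\gamma\beta\epsilon \le \alpha^2$. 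For (\ref{part3}) and (\ref{part4}), I would telescope $\vct{\theta}_{t+1} - \vct{\theta}_0 = -\eta\sum_{s=0}^{t} {\cal{J}}_s^T \vct{d}_s$ and its linearized analogue, rewriting the latter in closed form via $\mathcal{J}_0^\dagger(\widetilde{\vct{r}}_{t+1} - \widetilde{\vct{r}}_0)$ (which explains the appearance of the pseudo-inverse in the definition~(\ref{radius}) of $R$), and then controlling each term using $\twonorm{\vct{d}_s} \le \twonorm{\vct{z}_s}$, Assumption~2, and the geometric sum $\sum_s (1-\eta\alpha^2/2)^s \le 2/(\eta\alpha^2)$. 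The deviation $\twonorm{\widetilde{\vct{\theta}}_t - \vct{\theta}_t}$ in (\ref{part3}) is treated analogously, with an extra invocation of the perturbation bound (\ref{part2}).

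The main obstacle is the intertwined nature of the four claims: the perturbation bound (\ref{part2}) requires Assumptions~1--3 to apply at every $\vct{\theta}_s$ with $s \le t$, which needs (\ref{part4}), yet (\ref{part4}) is itself a consequence of the geometric decay in (\ref{part1}), which in turn relies on (\ref{part2}). Carrying the induction so that each part is advanced one step at a time in the correct order, while tracking the precise constants (in particular $\gamma \le 5$ from Lemma~\ref{lemstability}) and verifying the delicate comparison between the rates $(1-\eta\alpha^2)^t$ and $(1-\eta\alpha^2/2)^t$, is the most technical portion of the argument; everything else is driven by the Duhamel decomposition and the structural properties of $\mtx{A}$ already exploited in Lemma~\ref{lemstability}.
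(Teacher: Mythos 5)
Your overall architecture --- the joint induction, the Duhamel-type decomposition $\vct{z}_{t+1}-\widetilde{\vct{z}}_{t+1}=\sum_{s=0}^{t}\mtx{A}^{t-s}\left(\mtx{A}_s-\mtx{A}\right)\vct{z}_s$, the bound $\opnorm{\mtx{A}_s-\mtx{A}}\le 2\eta\beta\epsilon$ via Assumptions 2--3, and the telescoping treatment of \eqref{part3} and \eqref{part4} --- matches the paper. The genuine gap is in how you close \eqref{part1}: you prove \eqref{part2} first (inserting the inductive bound \eqref{part1} on $\twonorm{\vct{z}_s}$, $s\le t$, inside the sum) and then try to recover \eqref{part1} at $t+1$ by the triangle inequality, absorbing the perturbation into the slack between the two rates. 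Quantitatively this fails: your perturbation bound $2\eta\gamma^2\beta\epsilon\,(t+1)\left(1-\frac{\eta\alpha^2}{2}\right)^{t}\twonorm{\vct{z}_0}$ decays at the slow rate $1-\frac{\eta\alpha^2}{2}$, while your lower bound on the slack, $\gamma(t+1)\frac{\eta\alpha^2}{2}\left(1-\eta\alpha^2\right)^{t}\twonorm{\vct{z}_0}$, decays at the fast rate $1-\eta\alpha^2$, so the absorption would require
\begin{align*}
4\gamma\beta\epsilon\;\le\;\alpha^2\left(\frac{1-\eta\alpha^2}{1-\frac{\eta\alpha^2}{2}}\right)^{t},
\end{align*}
whose right-hand side tends to zero geometrically in $t$; the assumption $4\gamma\beta\epsilon\le\alpha^2$ therefore only covers the first few iterations. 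Sharpening the geometric sum does not rescue this route either: using $\sum_{s=0}^{t}(1-\eta\alpha^2)^{t-s}\left(1-\frac{\eta\alpha^2}{2}\right)^{s}\le \frac{2}{\eta\alpha^2}\left(1-\frac{\eta\alpha^2}{2}\right)^{t+1}$ gives the perturbation bound $\frac{4\gamma^2\beta\epsilon}{\alpha^2}\left(1-\frac{\eta\alpha^2}{2}\right)^{t+1}\twonorm{\vct{z}_0}\le\gamma\left(1-\frac{\eta\alpha^2}{2}\right)^{t+1}\twonorm{\vct{z}_0}$ at best, and the triangle inequality then produces the prefactor $2\gamma$ rather than $\gamma$, so the induction hypothesis \eqref{part1} cannot be restored at step $t+1$.

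The paper avoids this by establishing \eqref{part1} \emph{before} \eqref{part2}, and without ever substituting the inductive geometric bound inside the Duhamel sum: from $\vct{z}_t=\mtx{A}^t\vct{z}_0+\sum_{i=0}^{t-1}\mtx{A}^{t-1-i}\vct{\Delta}_i$ it keeps the self-referential inequality $\twonorm{\vct{z}_t}\le\gamma\lambda^t\twonorm{\vct{z}_0}+\sum_{i=0}^{t-1}2\eta\gamma\beta\epsilon\,\lambda^{t-1-i}\twonorm{\vct{z}_i}$ with $\lambda=1-\eta\alpha^2$, and applies the discrete Gr\"onwall-type comparison of Lemma \ref{usefullemma} to $\phi_t=\twonorm{\vct{z}_t}/\lambda^t$. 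This yields the multiplicative degradation $\twonorm{\vct{z}_t}\le\gamma\left(1+\frac{2\eta\gamma\beta\epsilon}{\lambda}\right)^t\lambda^t\twonorm{\vct{z}_0}$, and $4\gamma\beta\epsilon\le\alpha^2$ converts the rate to $1-\frac{\eta\alpha^2}{2}$ while keeping the prefactor exactly $\gamma$; \eqref{part2} is then derived afterwards essentially as you describe. You would need to reorganize your inductive step in this way (or strengthen the induction hypothesis) for the argument to close. A secondary omission: at $s=t$ the block ${\cal{J}}_{t+1,t}$ of $\mtx{A}_t$ involves the new iterate, so before invoking Assumption 3 you must first show that it lies in $\mathcal{B}_R\left(\vct{\theta}_0\right)$; the paper does this through a separate one-step displacement estimate ($\twonorm{\vct{\theta}_t-\vct{\theta}_{t-1}}\le R/2$, its Part I, which relies on the bounds \eqref{j^tdtilde} and \eqref{jdagger}), a step your ordering remark acknowledges but does not supply.
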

Theorem \ref{metathm} will be the main ingredient in the proof of Theorem \ref{minmaxthm}. However, as discussed in Section \ref{gensec} we believe that this meta theorem can be used to deal with a much richer class of generators and discriminators.
\subsubsection{Proof of Theorem \ref{metathm}}
We will prove the statements in the theorem by induction. The base case for $\tau=0$ is trivial. Now assume that the equations \eqref{part1} to \eqref{part4} hold for $\tau=0,\ldots,t-1$. Our goal is to show that they hold for iteration $t$ as well.\\

\noindent \textbf{Part I:} First, we are going to show that $\vct{\theta}_t\in\mathcal{B}_R\left(\vct{\theta}_0\right)$. Note that by the triangle inequality and the induction assumption we have that
	\begin{align*}
	\twonorm{\vct{\theta}_t-\vct{\theta}_0}&\leq\twonorm{\vct{\theta}_t-\vct{\theta}_{t-1}}+\twonorm{\vct{\theta}_{t-1}-\theta_0}\\
	&\leq\twonorm{\vct{\theta}_t-\vct{\theta}_{t-1}} +\frac{R}{2}.
	\end{align*}
	Hence, in order to prove the claim it remains to show that $\twonorm{\vct{\theta}_t-\vct{\theta}_{t-1}}\leq \frac{R}{2}$. For that, we compute
	\begin{align*}
	\frac{1}{\eta}\twonorm{\vct{\theta}_t-\vct{\theta}_{t-1}}&=\twonorm{{\cal{J}}^T\left(\vct{\theta}_{t-1}\right)\vct{d}_{t-1}}\\
	&\leq\twonorm{{\cal{J}}^T\left(\vct{\theta}_{t-1}\right)\widetilde{\vct{d}}_{t-1}} +\opnorm{{\cal{J}}^T\left(\vct{\theta}_{t-1}\right)}\twonorm{\vct{d}_{t-1}-\widetilde{\vct{d}}_{t-1}}\\
	&\leq\twonorm{{\cal{J}}_0^T\widetilde{\vct{d}}_{t-1}}+\opnorm{{\cal{J}}\left(\vct{\theta}_{t-1}\right)-{\cal{J}}_0}\twonorm{\widetilde{\vct{d}}_{t-1}}+\opnorm{{\cal{J}}^T\left(\vct{\theta}_{t-1}\right)}\twonorm{\vct{d}_{t-1}-\widetilde{\vct{d}}_{t-1}}\\
	&\overset{\left(i\right)}{\leq} \gamma \twonorm{\begin{bmatrix}
		{\cal{J}}_0^T &0\\0 &{\cal{J}}_0^T
		\end{bmatrix}\vct{z}_0} +\epsilon\cdot\gamma\twonorm{\vct{z}_0}+\frac{4\beta^2\epsilon\gamma^2}{e\left(15\ln\frac{16}{15}\right)\alpha^2}\twonorm{\vct{z}_0}\\
	&\overset{\left(ii\right)}{\leq}\gamma\beta^2\twonorm{\begin{bmatrix}
		{\cal{J}}_0^\dagger &0\\0 &{\cal{J}}_0^\dagger
		\end{bmatrix}\vct{z}_0}+\frac{3\beta^2\epsilon\gamma^2}{\alpha^2}\twonorm{\vct{z}_0},
	\end{align*}
	where $\gamma \le 5$ is a constant. Let us verify the last two inequalities. Inequality $\left(ii\right)$ holds because $1\leq\gamma$, $1\leq \frac{\beta^2}{\alpha^2}$, and 
	\begin{align}
	\twonorm{\begin{bmatrix}
		{\cal{J}}_0^T &0\\0 &{\cal{J}}_0^T
		\end{bmatrix}\vct{z}_0}&=\twonorm{\begin{bmatrix}
		\mtx{V}\mtx{\Sigma}^T\mtx{W}^T &0\\0 &\mtx{V}\mtx{\Sigma}^T\mtx{W}^T
		\end{bmatrix}\vct{z}_0}\nonumber\\&=\sqrt{\sum_{i=1}^{n}\sigma_i^2\left(\left\langle\vct{w}_i,\vct{r}_0\right\rangle^2 + \left\langle\vct{w}_i,\vct{d}_0\right\rangle^2\right)}\nonumber\\&\leq\beta^2\sqrt{\sum_{i=1}^{n}\frac{1}{\sigma_i^2}\left(\left\langle\vct{w}_i,\vct{r}_0\right\rangle^2 + \left\langle\vct{w}_i,\vct{d}_0\right\rangle^2\right)}=\beta^2\twonorm{\begin{bmatrix}
		{\cal{J}}_0^\dagger &0\\0 &{\cal{J}}_0^\dagger
		\end{bmatrix}\vct{z}_0}\label{jdagger}.
	\end{align} 
	Also $\left(i\right)$ follows from assumptions 1-3, $\twonorm{\vct{d}_{t-1}-\widetilde{\vct{d}}_{t-1}}\leq\twonorm{\vct{z}_{t-1}-\widetilde{\vct{z}}_{t-1}}$ together with induction assumption \eqref{part2}, $\twonorm{\widetilde{\vct{d}}_{t-1}}\leq\twonorm{\widetilde{\vct{z}}_{t-1}}\leq\twonorm{\vct{z}_0}$, and
	\begin{align}
	\twonorm{{\cal{J}}_0^T\widetilde{\vct{d}}_{t-1}}
	&\leq\twonorm{\begin{bmatrix}
		{\cal{J}}_0^T\widetilde{\vct{r}}_{t-1}\\
		{\cal{J}}_0^T\widetilde{\vct{d}}_{t-1}
		\end{bmatrix}}\nonumber\\
	&=\twonorm{\begin{bmatrix}
		\mtx{I} &-\eta{\cal{J}}_0^T{\cal{J}}_0\\
		\mu\mtx{I} &\left(1-\mu\right)\mtx{I}
		\end{bmatrix}\begin{bmatrix}
		{\cal{J}}_0^T\widetilde{\vct{r}}_{t-2}\\
		{\cal{J}}_0^T\widetilde{\vct{d}}_{t-2}
		\end{bmatrix}}\nonumber\\
	&=\twonorm{\begin{bmatrix}
		\mtx{I} &-\eta{\cal{J}}_0^T{\cal{J}}_0\\
		\mu\mtx{I} &\left(1-\mu\right)\mtx{I}
		\end{bmatrix}^{t-1}\begin{bmatrix}
		{\cal{J}}_0^T\widetilde{\vct{r}}_{0}\\
		{\cal{J}}_0^T\widetilde{\vct{d}}_{0}
		\end{bmatrix}}\leq \gamma\left(1-\eta\alpha^2\right)^{t-1}\twonorm{\begin{bmatrix}
		{\cal{J}}_0^T &0\\0 &{\cal{J}}_0^T
		\end{bmatrix}\vct{z}_0}\label{j^tdtilde},
	\end{align}
	where in the last inequality we applied Lemma \ref{lemstability}. Finally, by using $\eta\leq\frac{1}{8\beta^2}$ we arrive at
	\begin{align*}
	\twonorm{\vct{\theta}_t-\vct{\theta}_{t-1}}
	&\leq \gamma\eta\beta^2\twonorm{\begin{bmatrix}
		{\cal{J}}_0^\dagger &0\\0 &{\cal{J}}_0^\dagger
		\end{bmatrix}\vct{z}_0} + \frac{3\eta\beta^2\epsilon\gamma^2}{\alpha^2}\twonorm{\vct{z}_0}\\
	&\leq \frac{\gamma}{8}\twonorm{\begin{bmatrix}
		{\cal{J}}_0^\dagger &0\\0 &{\cal{J}}_0^\dagger
		\end{bmatrix}\vct{z}_0}+\frac{3\epsilon\gamma^2}{8\alpha^2}\twonorm{\vct{z}_0}\\&\leq \frac{R}{2},
	\end{align*} 
	where the last line is directly due to inequality (\ref{radius}), $\gamma \le 5$, and $\alpha \le \beta $. Hence, we have established $\vct{\theta}_t\in\mathcal{B}_R\left(\vct{\theta}_0\right)$.\\\\
	\textbf{Part II:} In Lemma \ref{lemstability} we showed that the time invariant system of state equations $\widetilde{\vct{z}}_{t+1}=\mtx{A}\widetilde{\vct{z}}_t$ is uniformly exponentially stable, i.e. $\twonorm{ \widetilde{ \vct{z}_t} }$ goes down to zero exponentially fast. Now by using the assumption that the Jacobian remains close to the Jacobian at the initialization ${\cal{J}}_0$, we aim to show the exponential stability of the time variant system of the state equations \ref{system}. For that, we compute
	\begin{align*}
	\vct{z}_t=\mtx{A}_{t-1}\vct{z}_{t-1}
	&=\begin{bmatrix}
	\mtx{I} &-\eta {\cal{J}}_{t,t-1}{\cal{J}}_{t-1}^T
	\\\mu\mtx{I} &\left(1-\mu\right)\mtx{I}
	\end{bmatrix}
	\vct{z}_{t-1}\\
	&=\begin{bmatrix}
	\mtx{I} &-\eta{\cal{J}}_0{\cal{J}}_0^T
	\\\mu\mtx{I} &\left(1-\mu\right)\mtx{I}
	\end{bmatrix}
	\vct{z}_{t-1}+\begin{bmatrix}
	\eta\left({\cal{J}}_0{\cal{J}}_0^T-{\cal{J}}_{t,t-1}{\cal{J}}_{t-1}^T\right)\vct{d}_{t-1}\\\vct{0}
	\end{bmatrix}\\
	&=:\mtx{A}\vct{z}_{t-1}+\vct{\Delta}_{t-1}.
	\end{align*}
	Now set $\lambda:=1-\eta\alpha^2$. By induction, we obtain the relation $\vct{z}_t =\mtx{A}^t\vct{z}_0+\sum_{i=0}^{t-1}\mtx{A}^{t-1-i}\vct{\Delta}_{i}$. Hence, 
	\begin{align}
	\twonorm{\vct{z}_t}&=\twonorm{\mtx{A}^t\vct{z}_0+\sum_{i=0}^{t-1}\mtx{A}^{t-1-i}\vct{\Delta}_{i}}\nonumber \\
	&\leq \twonorm{\mtx{A}^t\vct{z}_0} + \sum_{i=0}^{t-1} \twonorm{\mtx{A}^{t-1-i}\vct{\Delta}_{i}}\nonumber \\
	&\leq \gamma \lambda^t \twonorm{\vct{z}_0}+\sum_{i=0}^{t-1} \gamma \lambda^{t-1-i}\opnorm{\eta\left({\cal{J}}_0{\cal{J}}_0^T-{\cal{J}}_{i+1,i}{\cal{J}}_{i}^T\right)}\twonorm{\vct{d}_i} \nonumber
	\\&\leq\gamma\lambda^t\twonorm{\vct{z}_0}+\sum_{i=0}^{t-1}\eta\gamma\lambda^{t-1-i}\left(2\beta\epsilon\right)\twonorm{\vct{z}_i} \label{z_t}.
	\end{align}
	The second inequality holds because of Lemma \ref{lemstability}. The last inequality holds because by combining our assumptions 1 to 3 with $\vct{\theta}_t\in\mathcal{B}_R\left(\vct{\theta}_0\right)$ and the induction assumption \ref{part4} for $0\leq i\leq t-1$, we have that
	\begin{align}
	\opnorm{{\cal{J}}_0{\cal{J}}_0^T-{\cal{J}}_{i+1,i}{\cal{J}}_{i}^T}&=\opnorm{{\cal{J}}_0{\cal{J}}_0^T-{\cal{J}}_0{\cal{J}}_{i}^T+{\cal{J}}_0{\cal{J}}_{i}^T-{\cal{J}}_{i+1,i}{\cal{J}}_{i}^T}\nonumber\\
	&\leq\opnorm{{\cal{J}}_0}\opnorm{{\cal{J}}_0-{\cal{J}}_{i}}
	+\opnorm{{\cal{J}}_0-{\cal{J}}_{i+1,i}}\opnorm{{\cal{J}}_{i}}\nonumber\\
	&\leq \beta\opnorm{{\cal{J}}_0-{\cal{J}}_i}+\beta\opnorm{{\cal{J}}_0-{\cal{J}}_{i+1,i}}\nonumber\\&\leq 2\beta\epsilon\label{jj^t}.
	\end{align}
	In order to deal with inequality \ref{z_t}, we will rely on the following lemma.
	\begin{lemma}\label{usefullemma}\citep[Lemma 24.5]{rugh1996linear}
	Consider two real sequences $p\left(t\right)$ and $\phi\left(t\right)$, where $p\left(t\right)\geq0$ for all $t\geq0$ and
	\begin{align*}
	\phi\left(t\right)\leq
	\begin{cases}
	\psi, &\text{if}\ t=0\\
	\psi+\eta\displaystyle\sum_{i=0}^{t-1}p\left(i\right)\phi\left(i\right), &\text{if}\ t\geq1
	\end{cases}
	\end{align*}
where $\eta$ and $\psi$ are constants with $\eta\geq0$. Then for all $t\geq 1$ we have
\begin{align*}
\phi\left(t\right)\leq\psi\displaystyle\prod_{i=0}^{t-1}\left(1+\eta \cdot p\left(i\right)\right).
\end{align*}
\end{lemma}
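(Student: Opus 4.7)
This statement is a discrete Gr\"onwall-type inequality, and I would prove it by induction on $t$. The base case $t=1$ is immediate: from the hypothesis $\phi(1) \le \psi + \eta p(0)\phi(0) \le \psi + \eta p(0)\psi = \psi(1+\eta p(0))$, which is the claimed product for $t=1$ (using the convention that the empty product equals $1$).

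For the inductive step, suppose the conclusion holds for all $s$ with $1 \le s \le t-1$; the case $s=0$ also holds trivially since $\phi(0) \le \psi$. Substituting into the hypothesis and using nonnegativity of $p(\cdot)$ gives
\begin{align*}
\phi(t) \;\le\; \psi + \eta \sum_{i=0}^{t-1} p(i)\,\phi(i) \;\le\; \psi + \eta \sum_{i=0}^{t-1} p(i) \,\psi \prod_{j=0}^{i-1}\bigl(1+\eta\, p(j)\bigr).
\end{align*}
The plan is then to close the induction via the purely algebraic identity
\begin{align*}
\prod_{i=0}^{t-1}\bigl(1+\eta\, p(i)\bigr) \;=\; 1 + \sum_{i=0}^{t-1} \eta\, p(i) \prod_{j=0}^{i-1}\bigl(1+\eta\, p(j)\bigr),
\end{align*}
which I would itself verify by a short induction on $t$: the inductive step follows from
$\prod_{i=0}^{t-1}(1+\eta p(i)) - \prod_{i=0}^{t-2}(1+\eta p(i)) = \eta\, p(t-1)\prod_{i=0}^{t-2}(1+\eta p(i))$, which telescopes. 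Multiplying the identity by $\psi$ and comparing with the displayed bound immediately yields $\phi(t) \le \psi \prod_{i=0}^{t-1}(1+\eta p(i))$, completing the induction.

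There is no real obstacle here; the only point requiring care is handling the $i=0$ term in the sum correctly under the empty-product convention, and tracking nonnegativity of $p(i)$ (needed so that the inductive bound on $\phi(i)$ can be substituted inside the sum without reversing the inequality). Since the result is stated as a citation from \citet{rugh1996linear}, I would present the argument compactly, essentially as the two inductions above, and would not attempt any sharper or more general version.
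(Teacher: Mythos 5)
Your proof is correct: the induction on $t$, combined with the telescoping identity $\prod_{i=0}^{t-1}(1+\eta\,p(i)) = 1+\sum_{i=0}^{t-1}\eta\,p(i)\prod_{j=0}^{i-1}(1+\eta\,p(j))$ and the use of $\eta\,p(i)\ge 0$ to substitute the inductive bounds inside the sum, is the standard discrete Gr\"onwall argument. The paper itself offers no proof of this statement (it is quoted as Lemma 24.5 of Rugh, 1996), and your argument is in substance the textbook proof, so there is nothing to reconcile.
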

	Now we define $\phi_t=\frac{\twonorm{\vct{z}_t}}{\lambda^t}$ and rewrite inequality \ref{z_t} as
	\begin{align*}
	\phi_t\leq\gamma\phi_0 +\sum_{i=0}^{t-1} \frac{2\eta\gamma\beta\epsilon}{\lambda}\phi_i.
	\end{align*}
	Hence, Lemma \ref{usefullemma} yields that
	\begin{align*}
	\phi_t&\leq\gamma\phi_0\prod_{i=0}^{t-1}\left(1+\frac{2\eta\gamma\beta\epsilon}{\lambda}\right)\\
	&=\gamma\phi_0\left(1+\frac{2\eta\gamma\beta\epsilon}{\lambda}\right)^t\\
	&\overset{\left(i\right)}{\leq}\gamma\phi_0\left(1+\frac{\eta\alpha^2}{2\lambda}\right)^t\\
	&\overset{\left(ii\right)}{=}\gamma\phi_0\left(\frac{1-\frac{\eta\alpha^2}{2}}{1-\eta\alpha^2}\right)^t,
	\end{align*}
	where $\left(i\right)$ follows from $4\gamma\beta\epsilon\leq\alpha^2$ and $\left(ii\right)$ holds by inserting $\lambda=1-\eta\alpha^2$. Inserting the definition of $\phi_0$ and $\phi_t$ we obtain that
	\begin{align*}
	\twonorm{\vct{z}_t}\leq\gamma\left(1-\frac{\eta\alpha^2}{2}\right)^t\twonorm{\vct{z}_0}.
	\end{align*}
    This completes the proof of Part II.\\\\
	\textbf{Part III:} In this part, our aim is to show that the error vector $\vct{e}_t:=\vct{z}_t-\widetilde{\vct{z}}_t$ obeys inequality \ref{part2}.
	First, note that
	\begin{align*}
	\vct{e}_t&=\vct{z}_t-\widetilde{\vct{z}}_t\\&\overset{\left(*\right)}{=}\left(\mtx{A}\vct{z}_{t-1}+\vct{\Delta}_{t-1}\right)-\mtx{A}\widetilde{\vct{z}}_{t-1}\\
	&=\mtx{A}\vct{e}_{t-1}+\vct{\Delta}_{t-1},
	\end{align*}
	where in $\left(*\right)$ we used the same notation as in Part II for $\vct{\Delta}_{t-1}$. Using a recursive argument as well as $\mtx{e}_0 =0$ we obtain that
	\begin{align*}
	\twonorm{\vct{e}_t}&=\twonorm{\sum_{i=0}^{t-1}\mtx{A}^{t-1-i}\vct{\Delta}_i}\\
	&\leq\sum_{i=0}^{t-1} \eta\gamma\left(1-\eta\alpha^2\right)^{t-1-i} \twonorm{\vct{\Delta}_i}\\
	&\overset{\left(i\right)}{\leq}\sum_{i=0}^{t-1}\eta\gamma\left(1-\eta\alpha^2\right)^{t-1-i}\opnorm{{\cal{J}}_0{\cal{J}}_0^T-{\cal{J}}_{i+1,i}{\cal{J}}_{i}^T}\twonorm{\vct{d}_i}\\
	&\overset{\left(ii\right)}{\leq}\sum_{i=0}^{t-1}2\eta\beta\epsilon\gamma\left(1-\eta\alpha^2\right)^{t-1-i}\twonorm{\vct{z}_i}.
	\end{align*}
	The first inequality follows from the triangle inequality and Lemma \ref{lemstability}. Inequality $\left(i\right)$ follows from the definition of $\vct{\Delta}_i$. 
	Inequality $(ii)$ follows from inequality \ref{jj^t}.
	Setting $c:=2\eta\beta\epsilon$ we continue
	\begin{align*}
	\twonorm{\vct{e}_t}&\leq\sum_{i=0}^{t-1}c\gamma\left(1-\eta\alpha^2\right)^{t-i-1}\twonorm{\vct{z}_i}\\
	&\overset{\left(iii\right)}{\leq}\sum_{i=0}^{t-1}c\gamma^2\left(1-\eta\alpha^2\right)^{t-i-1}\left(1-\frac{\eta\alpha^2}{2}\right)^i\twonorm{\vct{z}_0}\\
	&\overset{\left(iv\right)}{\leq}\sum_{i=0}^{t-1}c\gamma^2\left(1-\frac{\eta\alpha^2}{2}\right)^{t-1}\twonorm{\vct{z}_0}\\
	&=2\eta\gamma^2\beta\epsilon \cdot t\left(1-\frac{\eta\alpha^2}{2}\right)^{t-1}\twonorm{\vct{z}_0}.
	\end{align*}
	Here $\left(iii\right)$ holds because of our induction hypothesis \ref{part1} and $\left(iv\right)$ follows simply from $1-\eta\alpha^2\leq1-\frac{\eta\alpha^2}{2}$. This shows the first part of \eqref{part2} for iteration t. Finally, to derive the second part of \eqref{part2} we observe that for all $t\geq0$ and $0<x\leq\frac{1}{16}$ we have $t\left(1-x\right)^{t-1}\leq\frac{1}{e\left(15\ln\frac{16}{15}\right)x}$. Since $\frac{\eta\alpha^2}{2}\leq\frac{\mu\alpha^2}{16\beta^2}\leq\frac{1}{16}$ we can use this estimate, which yields\\
	\begin{align*}
	\twonorm{\vct{e}_t}&\leq2\eta\gamma^2\beta\epsilon\cdot t\left(1-\frac{\eta\alpha^2}{2}\right)^{t-1}\twonorm{\vct{z}_0}\\
	&\leq\frac{4\gamma^2\beta\epsilon}{e\left(15\ln\frac{16}{15}\right)\alpha^2}\twonorm{\vct{z}_0}.
	\end{align*}
	Hence, we have shown \eqref{part2}.\\
	
	\textbf{Part IV:} In this part, we aim to show that the parameters of the original and linearized problems are close. For that, we compute that
	\begin{align*}
	\frac{1}{\eta}\twonorm{\vct{\theta}_t-\widetilde{\vct{\theta}}_t}&=\twonorm{\sum_{i=0}^{t-1}\nabla_{\vct{\theta}} h\left(\vct{\theta}_i,\vct{d}_i\right)-
		\nabla_{\vct{\theta}} h_{\text{lin}}\left(\vct{\theta}_i,\vct{d}_i\right)}\\
	&=\twonorm{\sum_{i=0}^{t-1}{\cal{J}}^T\left(\vct{\theta}_{i}\right)\vct{d}_{i}-{\cal{J}}_0^T\widetilde{\vct{d}}_{i}}\\
	&\leq\sum_{i=0}^{t-1}\twonorm{\left({\cal{J}}^T\left(\vct{\theta}_{i}\right)-{\cal{J}}_0^T\right)\widetilde{\vct{d}}_{i}}+\sum_{i=0}^{t-1}\twonorm{{\cal{J}}^T\left(\vct{\theta}_{i}\right)\left(\vct{d}_{i}-\widetilde{\vct{d}}_{i}\right)}\\
	&\overset{\left(i\right)}{\leq}\sum_{i=0}^{t-1}\epsilon\twonorm{\widetilde{\vct{z}}_{i}}+\beta \sum_{i=0}^{t-1} \twonorm{\vct{e}_{i}}\\
	&\overset{\left(ii\right)}{\leq}\gamma\epsilon\sum_{i=0}^{t-1}\left(1-\eta\alpha^2\right)^{i}\twonorm{\vct{z}_0}+2\eta\gamma^2\beta^2\epsilon\sum_{i=0}^{t-1}i\left(1-\frac{\eta\alpha^2}{2}\right)^{i-1}\twonorm{\vct{z}_0}.
	\end{align*}
	Here $\left(i\right)$ follows from assumptions 2 and 3, and $\left(ii\right)$ holds because of Lemma \ref{lemstability} and our induction hypothesis \ref{part2}. Hence, using the formula $ \sum_{i=0}^{t} ix^i = \frac{x \left( 1 + tx^{t+1} - \left( t+1 \right) x^t  \right)}{\left( x-1 \right)^2} $ we obtain that
	\begin{align*}
	\frac{1}{\eta}\twonorm{\vct{\theta}_t-\widetilde{\vct{\theta}}_t}&\leq\gamma\epsilon\twonorm{\vct{z}_0}\left(\frac{1-\left(1-\eta\alpha^2\right)^t}{\eta\alpha^2}+2\eta\beta^2\gamma\frac{1-t\left(1-\frac{\eta\alpha^2}{2}\right)^{t-1}+\left(t-1\right)\left(1-\frac{\eta\alpha^2}{2}\right)^t}{\left(\frac{\eta\alpha^2}{2}\right)^2}\right)\\
	&\leq\gamma\epsilon\twonorm{\vct{z}_0}\left(\frac{1}{\eta\alpha^2}+2\eta\beta^2\gamma\frac{1}{\left(\frac{\eta\alpha^2}{2}\right)^2}\right)\\
	&\overset{\left(iii\right)}{\leq}\gamma\epsilon\twonorm{\vct{z}_0}\left(\frac{\beta^2\gamma}{\eta\alpha^4}+\frac{8\beta^2\gamma}{\eta\alpha^4}\right)\\
	&=\frac{9\epsilon\beta^2\gamma^2}{\eta\alpha^4}\twonorm{\vct{z}_0},
	\end{align*}
	where $\left(iii\right)$ holds due to $1\leq\gamma$ and $1\leq\frac{\beta^2}{\alpha^2}$. Hence, we have established inequality \ref{part3} for iteration $t$.\\
	
	\textbf{Part V:} In this part, we are going to prove \eqref{part4} for iteration $t$. First, it follows from the triangle inequality that
	\begin{align*}
	\twonorm{\vct{\theta}_t-\vct{\theta}_0}
	&\leq\twonorm{\widetilde{\vct{\theta}}_t-\vct{\theta}_0}+\twonorm{\vct{\theta}_t-\widetilde{\vct{\theta}}_t}\\&\leq\twonorm{\widetilde{\vct{\theta}}_t-\vct{\theta}_0}+\frac{9\epsilon\beta^2\gamma^2}{\alpha^4}\twonorm{\vct{z}_0},
	\end{align*}
	where in the second inequality we have used Part IV. Now we bound $\twonorm{\widetilde{\vct{\theta}}_t-\vct{\theta}_0}$ from above as follows
	\begin{align*}
	\twonorm{\widetilde{\vct{\theta}}_t-\vct{\theta}_0}&=\eta\twonorm{\sum_{i=0}^{t-1}{\cal{J}}_0^T\widetilde{\vct{d}}_i}\\
	&\leq\eta\sum_{i=0}^{t-1}\twonorm{{\cal{J}}_0^T\widetilde{\vct{d}}_i}\\
	&\overset{\left(i\right)}{\leq}\eta\gamma\sum_{i=0}^{t-1}\left(1-\eta\alpha^2\right)^i\twonorm{\begin{bmatrix}
		{\cal{J}}_0^T &0\\0 &{\cal{J}}_0^T
		\end{bmatrix}\vct{z}_0}\\
	&=\eta\gamma\frac{1-\left(1-\eta\alpha^2\right)^t}{\eta\alpha^2}\twonorm{\begin{bmatrix}
		{\cal{J}}_0^T &0\\0 &{\cal{J}}_0^T
		\end{bmatrix}\vct{z}_0}\\
	&\overset{\left(ii\right)}{\leq}\gamma\frac{\beta^2}{\alpha^2}\twonorm{\begin{bmatrix}
		{\cal{J}}_0^\dagger &0\\0 &{\cal{J}}_0^\dagger
		\end{bmatrix}\vct{z}_0},
	\end{align*}
	where $\left(i\right)$ holds by \eqref{j^tdtilde} and $\left(ii\right)$ holds by \eqref{jdagger}. Hence, it follows from the definition of $R$ (\eqref{radius}) that
	\begin{align*}
	\twonorm{\vct{\theta}_t-\vct{\theta}_0}&\leq \gamma\frac{\beta^2}{\alpha^2}\twonorm{\begin{bmatrix}
		{\cal{J}}_0^\dagger &0\\0 &{\cal{J}}_0^\dagger
		\end{bmatrix}\vct{z}_0} + \frac{9\epsilon\beta^2\gamma^2}{\alpha^4}\twonorm{\vct{z}_0}\\
	&=\frac{R}{2}.
	\end{align*}
	This completes the proof.

\subsection{preliminaries for proofs of results with one-hidden layer generator and linear discriminator} \label{prelim}
In this section, we gather some preliminary results that will be useful in proving the main results i.e.~Theorems \ref{minmaxthm} and \ref{minthm}. We begin by noting that Theorem \ref{minmaxthm} is an instance of Theorem \ref{metathm} with $f\left(\mtx{W}\right)=\frac{1}{n}\sum_{i=1}^{n}\mtx{V}\cdot\phi\left(\mtx{W}\vct{z}_i\right)$. We thus begin this section by noting that $f\left(\mtx{W}\right)$ can be rewritten as follows
\[f\left(\mtx{W}\right)=\mtx{V}\cdot\begin{bmatrix}
\frac{1}{n}\sum_{i=1}^{n}\phi\left(\vct{w}_1^T\vct{z}_i\right)\\.\\.\\.\\\frac{1}{n}\sum_{i=1}^{n}\phi\left(\vct{w}_k^T\vct{z}_i\right)
\end{bmatrix}.\]
Furthermore, the Jacobian of this mapping $f\left(\mtx{W}\right)$ takes the form
\begin{equation}\label{jacobian}
	{\cal{J}}(\mtx{W})= \frac{1}{n}\sum_{i=1}^{n}\left(\mtx{V}\cdot \text{diag}\left(\phi'\left(\mtx{W}\vct{z}_i\right)\right)\right)\otimes\vct{z}_i^T.
\end{equation}
To characterize the spectral properties of this Jacobian it will be convenient to write down the expression for ${\cal{J}}(\mtx{W}){\cal{J}}(\mtx{W})^T$ which has a compact form
\begin{align*}
{\cal{J}}(\mtx{W}){\cal{J}}(\mtx{W})^T&\overset{(i)}{=}\frac{1}{n^2}\sum_{i,j=1}^{n}\Big( \left(\mtx{V}\cdot \text{diag}\left(\phi'\left(\mtx{W}\vct{z}_i\right)\right)\right)\otimes\vct{z}_i^T\Big)
\Big(\text{diag}\left(\phi'\left(\mtx{W}\vct{z}_j\right)\right)\mtx{V}^T\otimes \vct{z}_j\Big)\\
&\overset{(ii)}{=}\frac{1}{n^2}\sum_{i,j=1}^{n}\Big(\mtx{V}\text{diag}\left( \phi'\left(\mtx{W}\vct{z}_i\right)\right)\text{diag}\left(\phi'\left(\mtx{W}\vct{z}_j\right)\right)\mtx{V}^T\Big)\otimes\Big(\vct{z}_i^T\vct{z}_j\Big)\\
&=\frac{1}{n^2}\mtx{V}\underset{\ell=1,\ldots,k}{\text{diag}} \left(\twonorm{\sum_{i=1}^{n}\vct{z}_i\phi'\left(\vct{w}_\ell^T\vct{z}_i\right)}^2\right) \mtx{V}^T\\
&=\frac{1}{n^2}\mtx{V}\cdot\mtx{D}^2\cdot\mtx{V}^T,
\end{align*}
where $\mtx{D}$ is a diagonal matrix with entries
\begin{equation}
D_{\ell\ell}=\twonorm{\sum_{i=1}^{n}\vct{z}_i\phi'(\vct{w}_\ell^T\vct{z}_i)}=\twonorm{\mtx{Z}^T\phi'(\mtx{Z}\vct{w}_{\ell})},
\end{equation}
and $\mtx{Z}\in\R^{n\times d}$ contains the $\vct{z}_i$'s in its rows. Note that we used simple properties of kronecker product in (i) and (ii), namely $(A\otimes B)^T=A^T\otimes B^T$ and $(A\otimes B)(C\otimes D)=(AC)\otimes (BD)$.
The next lemma establishes concentration of the diagonal entries of matrix $\mtx{D}^2$ around their mean, which will be used in the future lemmas regarding the spectrum of the Jacobian mapping. The proof is deferred to Appendix \ref{A1}.
\begin{lemma} \label{expected jacobian}
	Suppose $\vct{w}\in\R^d$ is a fixed vector,   $\vct{z}_1,\vct{z}_2,\cdots,\vct{z}_n\in\R^d$ are distributed as $\mathcal{N}(0,\sigma_{z}^2\mtx{I}_d)$ and constitute the rows of $\mtx{Z}\in\R^{n\times d}$. Then for any $0\leq\delta\leq\frac{3}{2}$ the random variable $D=\twonorm{\mtx{Z}^T\phi'\left(\mtx{Z}\vct{w}\right)}$ satisfies
	\begin{equation*}
		\left(1-\delta\right)\E\left(D^2\right)\leq D^2\leq \left(1+\delta\right)\E\left(D^2\right)
	\end{equation*}
	with probability at least $1-2\left(e^{-\frac{n\delta^2}{18}}+e^{-\frac{d\delta^2}{54}} + e^{-c_1n\delta}\right)$ where $c_1$ is a positive constant. Moreover we have
	\begin{equation*}
		\E\left(D^2\right)=\sigma_{z}^2\left(\frac{nd}{2}+\frac{n(n-1)}{2\pi}  \right).
	\end{equation*}
Furthermore, using the above equation we have
    \begin{equation*}
	\E\left[{\cal{J}}(\mtx{W}){\cal{J}}(\mtx{W})^T\right]=\frac{\sigma_{z}^2\left(d+\frac{n-1}{\pi}\right)}{2n} \mtx{V}\mtx{V}^T.
	\end{equation*}
\end{lemma}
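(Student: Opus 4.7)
The plan is to reduce everything to three independent, standard scalar concentration bounds by rotating the Gaussian and exploiting the fact that the sign pattern $s_i := \phi'(\vct{w}^T\vct{z}_i) = \vct{1}_{\vct{w}^T\vct{z}_i\geq 0}$ depends only on the projection of $\vct{z}_i$ onto $\vct{w}$, while the orthogonal components $\tilde{\vct{z}}_i$ are independent of it. More concretely, by rotational invariance I may assume $\vct{w} = \|\vct{w}\|\vct{e}_1$ and split $\vct{z}_i = (z_{i,1}, \tilde{\vct{z}}_i)$ with $\tilde{\vct{z}}_i \in \R^{d-1}$ independent of $z_{i,1}$. Setting $S := \sum_i s_i$ and $B := \sum_i s_i z_{i,1} = \sum_i \text{ReLU}(z_{i,1})$, one obtains the decomposition $D^2 = B^2 + Q$, where $Q := \|\sum_i s_i \tilde{\vct{z}}_i\|^2$. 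The crucial structural fact is that conditionally on $(s_1,\ldots,s_n)$, the vector $\sum_i s_i \tilde{\vct{z}}_i$ is Gaussian with covariance $S\sigma_z^2 \mtx{I}_{d-1}$, so $Q \mid (s_i) \;\overset{d}{=}\; S\sigma_z^2 \cdot \chi^2_{d-1}$ with the $\chi^2$ variable independent of $S$.

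For the expectation, the half-normal moments $\E[\text{ReLU}(z_{i,1})] = \sigma_z/\sqrt{2\pi}$ and $\E[\text{ReLU}(z_{i,1})^2] = \sigma_z^2/2$ give $\E[B^2] = \Var(B) + (\E B)^2 = n\sigma_z^2/2 + n(n-1)\sigma_z^2/(2\pi)$, while the tower property yields $\E[Q] = \sigma_z^2(d-1)\E[S] = n(d-1)\sigma_z^2/2$. Summing produces exactly $\E[D^2] = \sigma_z^2\bigl(nd/2 + n(n-1)/(2\pi)\bigr)$. The final claim for $\E[\mathcal{J}\mathcal{J}^T]$ then drops out immediately from the identity ${\cal{J}}{\cal{J}}^T = \frac{1}{n^2}\mtx{V}\mtx{D}^2\mtx{V}^T$ derived above, since $D_{\ell\ell}^2$ depends only on $\vct{w}_\ell$ and the preceding expectation computation did not use any structural property of $\vct{w}$ beyond rotational invariance, so $\E[\mtx{D}^2] = \sigma_z^2\bigl(nd/2 + n(n-1)/(2\pi)\bigr)\mtx{I}_k$.

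For the concentration half I would then invoke three independent standard inequalities and union bound. First, since $\text{ReLU}(z_{i,1})$ is $O(\sigma_z)$-subgaussian, Hoeffding/Bernstein gives $|B - n\sigma_z/\sqrt{2\pi}|$ small; converting this additive bound into a multiplicative bound on $B^2$ around $n^2\sigma_z^2/(2\pi)$ produces the $e^{-c_1 n\delta}$ factor (the Bernstein large-deviation regime explains the linear-in-$\delta$ exponent). Second, $S \sim \mathrm{Bin}(n,1/2)$, and multiplicative Chernoff yields $|S - n/2|\le \delta n/2$ with failure probability at most $2e^{-n\delta^2/18}$. Third, Laurent–Massart applied to $\chi^2_{d-1}$ gives $|\chi^2_{d-1}-(d-1)|\le \delta(d-1)$ with failure probability at most $2e^{-d\delta^2/54}$. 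Multiplying the multiplicative bounds for $S$ and $\chi^2_{d-1}$ controls $Q$ up to a factor $(1\pm\delta)^2$, and combining with the multiplicative bound on $B^2$ and the explicit values of $\E[B^2]$ and $\E[Q]$ yields the desired two-sided bound on $D^2$.

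The main obstacle I anticipate is the bookkeeping required to turn additive subgaussian concentration of $B$ into a \emph{multiplicative} bound on $B^2$ at the same scale as $\E[B^2]$, and then to combine all three error terms so that they collapse into a single factor $(1\pm\delta)$ multiplying $\E[D^2] = \E[B^2]+\E[Q]$ rather than producing cross terms or leaving uncontrolled lower-order pieces (in particular, absorbing the variance correction $n\sigma_z^2(1/2-1/(2\pi))$ in $\E[B^2]$). Matching the precise constants $18$, $54$, and $c_1$ in the stated probability bound will require tracking these inequalities carefully, but no conceptually new ingredient is needed beyond what is sketched above.
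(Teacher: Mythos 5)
Your proposal is essentially the paper's own argument: the paper also splits $D^2$ into the component along $\vct{w}$ and the orthogonal one, writes the orthogonal part (your $Q$) in distribution as $\twonorm{\vct{g}}^2\twonorm{\vct{u}}^2$ with $\vct{g}$ a $(d-1)$-dimensional Gaussian and $\vct{u}$ an independent Bernoulli$(1/2)$ vector — exactly your $S\sigma_z^2\chi^2_{d-1}$ factorization — then applies binomial Chernoff and $\chi^2$ concentration, computes the means via half-normal moments, and union-bounds, so the decomposition, structural facts, and expectation computation coincide. The one place you deviate is the parallel term: the paper bounds $B=(\sum_i \mathrm{ReLU}(g_i))^2$ by treating it as a sub-exponential random variable (square of a sub-Gaussian sum with $\psi_2$-norm $\sim\sqrt{n}\sigma_z$) and applying the $\psi_1$ tail directly to $B-\E B$, which is precisely what produces the $e^{-c_1 n\delta}$ term; your additive-concentration-then-square route at the relevant scale $|B^{1/2}-\E B^{1/2}|\lesssim \delta n\sigma_z$ sits in the quadratic regime of Bernstein, so it yields an $e^{-c n\delta^2}$ tail rather than a linear-in-$\delta$ exponent, and together with the $\Var$-correction between $(\E\sum_i\mathrm{ReLU}(g_i))^2$ and $\E[B]$ that you flag, it recovers the lemma only after folding that term into the $\delta^2$ exponents with possibly different absolute constants — a cosmetic weakening, not a gap in the argument.
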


\subsection{Lemmas regarding the initial misfit and the spectrum of the Jacobian} \label{lemmas}
In this section, we state some lemmas regarding the spectrum of the Jacobian mapping and the initial misfit, and defer their proofs to Appendix \ref{lemmasproofs}. First, we state a result on the minimum singular value of the Jacobian mapping at initialization.
\begin{lemma}(\textbf{Minimum singular value of the Jacobian at initialization}) \label{sigma_min}
	Consider our GAN model with a linear discriminator and the generator being a one hidden layer neural network of the form $\vct{z}\rightarrowtail\mtx{V}\phi(\mtx{W}z)$, where we have n independent data points $\vct{z}_1,\vct{z}_2,\cdots,\vct{z}_n\in\R^d$ distributed as $\mathcal{N}(0,\sigma_{z}^2\mtx{I}_d)$ and aggregated as the rows of a matrix $\mtx{Z}\in\R^{n\times d}$, and $\mtx{V}\in\R^{m\times k}$ has i.i.d $\mathcal{N}(0,\sigma_{v}^2)$ entries. We also assume that $\mtx{W}_0\in\R^{k\times d}$ has i.i.d $\mathcal{N}(0,\sigma_{w}^2)$ entries and all entries of $\mtx{W}_0$, $\mtx{V}$, and $\mtx{Z}$ are independent.
	Then the Jacobian matrix at the initialization point obeys
	\begin{equation*}
	\begin{split}
	\sigma_{min}\left({\cal{J}}(\mtx{W}_0)\right) \geq \left( \sqrt{\left(1-\delta \right)^2k-\left(1+\delta\right)^2} - \sqrt{m}\left(1+\eta\right)\left(1+\delta\right) \right)\sigma_{v}\sigma_{z}\sqrt{\frac{d+\frac{n-1}{\pi}}{2n}},~~~0\leq\delta\leq\frac{3}{2}
	\end{split}
	\end{equation*}
	with probability at least $1 - 3e^{-\frac{\eta^2m}{8}} -2k\cdot\left(e^{-\frac{n\delta^2}{18}}+e^{-\frac{d\delta^2}{54}} + e^{-c_1n\delta}\right)$, where $c_1$ is a positive constant.
\end{lemma}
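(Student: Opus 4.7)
My plan is to reduce the problem, via the identity $\mathcal{J}(\mtx{W}_0)\mathcal{J}(\mtx{W}_0)^\top=\frac{1}{n^2}\mtx{V}\mtx{D}^2\mtx{V}^\top$ already derived in Section~\ref{prelim}, to lower bounding $\sigma_{\min}(\mtx{V}\mtx{D})$ for the $m\times k$ matrix $\mtx{V}\mtx{D}$ in the overparameterized regime $k\geq m$. Since $\mtx{V}$ is Gaussian and independent of $(\mtx{W}_0,\mtx{Z})$, the diagonal matrix $\mtx{D}$ is independent of $\mtx{V}$, so after conditioning on $\mtx{D}$ the task becomes a Gaussian matrix bound in which $\mtx{D}$ plays the role of column scalings that must be shown to be nearly constant.

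\textbf{Step 1 (concentration of the diagonal).} Apply Lemma~\ref{expected jacobian} to each row $\vct{w}_\ell$ of $\mtx{W}_0$ with parameter $\delta$ and union bound over $\ell=1,\dots,k$. With probability at least $1-2k(e^{-n\delta^2/18}+e^{-d\delta^2/54}+e^{-c_1 n\delta})$ every diagonal entry satisfies $(1-\delta)\E(D^2)\le D_{\ell\ell}^2\le (1+\delta)\E(D^2)$. In particular $\min_\ell D_{\ell\ell}\ge \sqrt{(1-\delta)\E(D^2)}$ and $\|\mtx{D}\|_{\text{op}}\le \sqrt{(1+\delta)\E(D^2)}$.

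\textbf{Step 2 (Gaussian matrix bound conditional on $\mtx{D}$).} Write $\mtx{V}=\sigma_v\widetilde{\mtx{V}}$ with $\widetilde{\mtx{V}}$ standard Gaussian. For any fixed unit $\vct{u}\in\R^m$, the vector $\vct{g}:=\widetilde{\mtx{V}}^\top\vct{u}\sim\mathcal{N}(\vct{0},\mtx{I}_k)$ is independent of $\mtx{D}$, and $\|\mtx{D}\widetilde{\mtx{V}}^\top\vct{u}\|=\|\mtx{D}\vct{g}\|$. The map $\vct{g}\mapsto\|\mtx{D}\vct{g}\|$ is $\|\mtx{D}\|_{\text{op}}$-Lipschitz, so Gaussian concentration together with the Gaussian Poincar\'e bound $(\E\|\mtx{D}\vct{g}\|)^2\ge \E\|\mtx{D}\vct{g}\|^2-\|\mtx{D}\|_{\text{op}}^2=\sum_\ell D_{\ell\ell}^2-\|\mtx{D}\|_{\text{op}}^2$ yields, on the event of Step~1, a high-probability lower bound of the form $\|\mtx{D}\vct{g}\|\ge \sqrt{\E(D^2)}\sqrt{(1-\delta)^2 k-(1+\delta)^2}$. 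To pass from a single direction to the whole sphere, I would take an $\eta$-net $\mathcal{N}_\eta\subset S^{m-1}$ of cardinality at most $(3/\eta)^m$, union-bound the above over $\mathcal{N}_\eta$, and absorb the discretization error via $\sigma_{\min}(\mtx{V}\mtx{D})\ge \min_{\vct{u}\in\mathcal{N}_\eta}\|\mtx{D}\widetilde{\mtx{V}}^\top\vct{u}\|\sigma_v-\eta\|\mtx{V}\mtx{D}\|_{\text{op}}$, controlling $\|\mtx{V}\mtx{D}\|_{\text{op}}\le\sqrt{(1+\delta)\E(D^2)}\|\mtx{V}\|_{\text{op}}$ by the standard Gaussian operator-norm bound $\|\mtx{V}\|_{\text{op}}\le\sigma_v(\sqrt{k}+(1+\eta)\sqrt{m})$ available with probability at least $1-2e^{-\eta^2 m/2}$. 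This produces the subtracted term $\sqrt{m}(1+\eta)(1+\delta)$ in the claim. Finally, assembling Steps~1--2, dividing by $n$ (since $\sigma_{\min}(\mathcal{J})=\frac{1}{n}\sigma_{\min}(\mtx{V}\mtx{D})$), substituting $\E(D^2)=\sigma_z^2 n(d+(n-1)/\pi)/2$, and unioning the failure events yields the stated inequality and probability.

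\textbf{Main obstacle.} The delicate part is Step~2: extracting the precise factor $\sqrt{(1-\delta)^2 k-(1+\delta)^2}$ requires invoking both the two-sided control on $D_{\ell\ell}^2$ from Step~1 and a lower bound on $\E\|\mtx{D}\vct{g}\|$ that is sharper than Jensen's inequality (Jensen would lose a constant factor that scales with $k$). Using the Gaussian Poincar\'e inequality to relate $\E\|\mtx{D}\vct{g}\|$ to $\sqrt{\E\|\mtx{D}\vct{g}\|^2}$ produces the clean subtraction by $(1+\delta)$, and carefully tracking an extra $(1-\delta)$ factor through the Lipschitz concentration tail upgrades this to $(1+\delta)^2$ and $(1-\delta)^2$ in the final form. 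The remaining work is routine bookkeeping of the failure probabilities over the three sources of randomness: the $k$ rows of $\mtx{W}_0$ (Step~1), the Gaussian concentration inside the net (Step~2), and the operator norm of $\mtx{V}$ used in the discretization transfer.
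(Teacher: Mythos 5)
Your Step 1 and your use of the Gaussian Poincar\'e inequality to get $\E\twonorm{\mtx{D}\vct{g}}\ge\sqrt{\twonorm{\vct{d}}^2-\infnorm{\vct{d}}^2}$ coincide with the paper's argument (the paper's \eqref{maxbound} and \eqref{l2bound_1} are exactly your union-bounded application of Lemma \ref{expected jacobian}). The genuine gap is in Step 2: you replace the uniformity-over-the-sphere step by an $\eta$-net plus union bound, and that substitution cannot produce the bound as stated. First, the discretization transfer $\sigma_{\min}(\mtx{V}\mtx{D})\ge\min_{\vct{u}\in\mathcal{N}_\eta}\twonorm{\mtx{D}\widetilde{\mtx{V}}^T\vct{u}}\sigma_v-\eta\opnorm{\mtx{V}\mtx{D}}$ injects a loss $\eta\opnorm{\mtx{V}\mtx{D}}\approx\eta\,\sigma_v\sqrt{(1+\delta)\E(D^2)}\,(\sqrt{k}+2\sqrt{m})$, i.e.\ a term of order $\eta\sqrt{k}$ — the same order as the main term $\sqrt{(1-\delta)^2k-(1+\delta)^2}$ — whereas the lemma's subtracted term is only of order $\sqrt{m}$; with the values actually used downstream ($\delta=\tfrac12$, $\eta=\tfrac13$) this eats roughly $0.41\sqrt{k}$ out of a main term of roughly $0.5\sqrt{k}$, so the stated inequality does not follow. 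Second, the union bound over a net of cardinality $(3/\eta)^m$ forces the per-direction Gaussian deviation to be at least $\opnorm{\mtx{D}}\sqrt{2m\log(3/\eta)}$, which inflates the $\sqrt{m}$ term by a $\sqrt{\log(3/\eta)}$ factor and yields a failure probability that is not of the claimed form $3e^{-\eta^2m/8}$. Your bookkeeping also attributes the factor $(1+\eta)$ to the operator-norm bound on $\mtx{V}$ used in the transfer, but in the statement that factor multiplies the \emph{deviation} term, not a norm estimate, and your scheme would produce two separate, larger losses instead.

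The missing ingredient is a uniform Gaussian comparison bound that avoids discretization altogether: the paper conditions on $\mtx{D}$ and invokes a Gordon-type lemma (Lemma \ref{gordon}, from \citet{soltanolkotabi2019structured}) stating that simultaneously for all $\vct{u}\in S^{m-1}$, $\bigl|\twonorm{\mtx{D}\mtx{V}^T\vct{u}}-\sigma_v b_k(\vct{d})\bigr|\le\sigma_v\left(\infnorm{\vct{d}}\,\omega(S^{m-1})+\eta'\right)$ with probability $1-3e^{-\eta'^2/(8\infnorm{\vct{d}}^2)}$, where $\omega(S^{m-1})\le\sqrt{m}$. Choosing $\eta'=\eta\infnorm{\vct{d}}\sqrt{m}$ gives exactly the subtracted term $\sqrt{m}(1+\eta)(1+\delta)$ after inserting your Step 1 bounds on $\twonorm{\vct{d}}$ and $\infnorm{\vct{d}}$, and the probability $1-3e^{-\eta^2m/8}$, with no $\eta\sqrt{k}$ loss and no logarithmic factor. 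If you only need a bound sufficient for the downstream use (where $k\ge C\,m$ and only the order $\sigma_v\sigma_z\sqrt{k}\sqrt{\frac{d+\frac{n-1}{\pi}}{2n}}$ matters), your net argument can be pushed through with degraded constants, but as a proof of Lemma \ref{sigma_min} as stated it falls short; you should swap the net step for the Gordon-type uniform deviation inequality.
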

Next lemma helps us bound the spectral norm of the Jacobian at initialization, which will be used later to derive upper bounds on Jacobian at every point near initialization.
\begin{lemma}(\textbf{spectral norm of the Jacobian at initialization})
	\label{spectral norm}
	Following the setup of previous lemma, the operator norm of the Jacobian matrix at initialization point $\mtx{W}_0\in\R^{k\times d}$ satisfies
	\begin{equation*}
	\opnorm{{\cal{J}}\left(\mtx{W}_0\right)}\leq\left(1+\delta\right)\sigma_{v}\sigma_{z}\left(\sqrt{k}+2\sqrt{m}\right)\sqrt{\frac{d+\frac{n-1}{\pi}}{2n}},~~~~~~0\leq\delta\leq\frac{3}{2}
	\end{equation*}
with probability at least $1-e^{-\frac{m}{2}}-k\cdot\left(e^{-\frac{n\delta^2}{18}}+e^{-\frac{d\delta^2}{54}} + e^{-c_1n\delta}\right)$, with $c_1$ a positive constant.
\end{lemma}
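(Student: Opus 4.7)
The plan is to reduce the spectral norm of $\mathcal{J}(\mtx{W}_0)$ to the product of two independent pieces via the factorization already derived in Section~\ref{prelim}, then bound each piece separately. Recall from the preliminary calculation that
\begin{equation*}
\mathcal{J}(\mtx{W}_0)\mathcal{J}(\mtx{W}_0)^T=\frac{1}{n^2}\mtx{V}\mtx{D}^2\mtx{V}^T,
\end{equation*}
where $\mtx{D}$ is the diagonal matrix with entries $D_{\ell\ell}=\twonorm{\mtx{Z}^T\phi'(\mtx{Z}\vct{w}_{\ell,0})}$. Therefore
\begin{equation*}
\opnorm{\mathcal{J}(\mtx{W}_0)}=\frac{1}{n}\opnorm{\mtx{V}\mtx{D}}\leq\frac{1}{n}\opnorm{\mtx{V}}\cdot\opnorm{\mtx{D}},
\end{equation*}
and so it suffices to produce high-probability upper bounds on $\opnorm{\mtx{V}}$ and on $\opnorm{\mtx{D}}=\max_{\ell}D_{\ell\ell}$ separately and combine them. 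Independence of $\mtx{V}$ from $(\mtx{Z},\mtx{W}_0)$ means the two tail bounds multiply cleanly.

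For the $\mtx{V}$ factor, I would invoke the standard Davidson--Szarek (or Gordon) bound for a Gaussian matrix with i.i.d.\ $\mathcal{N}(0,\sigma_v^2)$ entries, which gives
\begin{equation*}
\mathbb{P}\bigl(\opnorm{\mtx{V}}\geq\sigma_v(\sqrt{k}+\sqrt{m})+\sigma_v t\bigr)\leq e^{-t^2/2}.
\end{equation*}
Choosing $t=\sqrt{m}$ yields $\opnorm{\mtx{V}}\leq\sigma_v(\sqrt{k}+2\sqrt{m})$ with probability at least $1-e^{-m/2}$.

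For the $\mtx{D}$ factor, I would invoke Lemma~\ref{expected jacobian} for each fixed row direction $\vct{w}_{\ell,0}$: that lemma, applied conditionally on $\mtx{W}_0$ (so that the $\vct{w}_{\ell,0}$ is treated as the fixed vector $\vct{w}$ in the lemma), gives
\begin{equation*}
D_{\ell\ell}^{2}\leq(1+\delta)\sigma_z^{2}\Bigl(\tfrac{nd}{2}+\tfrac{n(n-1)}{2\pi}\Bigr)
\end{equation*}
with probability at least $1-2(e^{-n\delta^2/18}+e^{-d\delta^2/54}+e^{-c_1 n\delta})$. A union bound over $\ell=1,\dots,k$ then yields
\begin{equation*}
\opnorm{\mtx{D}}\leq\sqrt{1+\delta}\,\sigma_z\sqrt{\tfrac{n(d+(n-1)/\pi)}{2}}
\end{equation*}
with probability at least $1-2k(e^{-n\delta^2/18}+e^{-d\delta^2/54}+e^{-c_1 n\delta})$.

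Combining the two bounds through $\opnorm{\mathcal{J}(\mtx{W}_0)}\leq\frac{1}{n}\opnorm{\mtx{V}}\opnorm{\mtx{D}}$ and using $\sqrt{1+\delta}\leq 1+\delta$ for $\delta\geq 0$ produces
\begin{equation*}
\opnorm{\mathcal{J}(\mtx{W}_0)}\leq(1+\delta)\sigma_v\sigma_z(\sqrt{k}+2\sqrt{m})\sqrt{\tfrac{d+(n-1)/\pi}{2n}},
\end{equation*}
and a final union bound over the two failure events yields the stated probability. There is no serious obstacle here: the only subtlety is to apply Lemma~\ref{expected jacobian} conditionally on $\mtx{W}_0$ so that each fixed row $\vct{w}_{\ell,0}$ plays the role of $\vct{w}$, which is legitimate since the lemma's conclusion holds pointwise in the fixed-vector parameter and $\mtx{Z}$ is independent of $\mtx{W}_0$. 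The constants $2$ in front of $\sqrt{m}$ and $\sqrt{1+\delta}\mapsto 1+\delta$ are mild slack that absorbs the choice $t=\sqrt{m}$ and keeps the bound in the clean form stated.
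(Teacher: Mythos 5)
Your proposal follows essentially the same route as the paper's proof: factor $\mathcal{J}(\mtx{W}_0)\mathcal{J}(\mtx{W}_0)^T=\frac{1}{n^2}\mtx{V}\mtx{D}^2\mtx{V}^T$, bound $\opnorm{\mtx{V}}\leq\sigma_v(\sqrt{k}+2\sqrt{m})$ by the standard Gaussian operator-norm bound, bound $\opnorm{\mtx{D}}=\max_\ell D_{\ell\ell}$ via Lemma~\ref{expected jacobian} and a union bound over the $k$ rows, and multiply. The only cosmetic difference is that you use the two-sided form of Lemma~\ref{expected jacobian}, which gives failure probability $2k(\cdot)$ rather than the stated $k(\cdot)$; using only the upper-tail part (as the paper does) recovers the stated constant.
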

The next lemma is adapted from \citet{van2018compressed} and allows us to bound the variations in the Jacobian matrix around initialization.
\begin{lemma}(\textbf{single-sample Jacobian perturbation}) \label{perturbation 1}
	Let $\mtx{V}\in \R^{m\times k} $ be a matrix with i.i.d. $\mathcal{N}\left(0,\sigma_{v}^2\right)$ entries, $\mtx{W}\in \R^{k\times d}$, and define the Jacobian mapping ${\cal{J}}(\mtx{W};z)=\left(\mtx{V}\text{diag}\left(\phi'\left(\mtx{W}z\right)\right)\right)\otimes z^T$. Then, by taking $\mtx{W}_0$ to be a random matrix with i.i.d. $\mathcal{N}\left(0,\sigma_{w}^2\right)$ entries, we have
	\[\opnorm{{\cal{J}}(\mtx{W};z)-{\cal{J}}(\mtx{W}_0;z)}\leq \sigma_{v}\twonorm{z}\left(2\sqrt{m}+\sqrt{6\left(\frac{2kR}{\sigma_{w}}\right)^\frac{2}{3}\text{log}\left(\frac{k}{3\left(\frac{2kR}{\sigma_{w}}\right)^\frac{2}{3}}\right)}\right)   \]
	for all $\mtx{W}\in\R^{k\times d}$ obeying $\opnorm{\mtx{W}-\mtx{W}_0}\leq R$ with probability at least $1-e^{-\frac{m}{2}}-e^{-\frac{\left(\frac{2kR}{\sigma_{w}}\right)^\frac{2}{3}}{6}}$.
\end{lemma}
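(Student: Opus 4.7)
The plan is to factor the Jacobian difference via the Kronecker product identity and then reduce to bounding the spectral norm of a random Gaussian submatrix of $\mtx{V}$ whose column set depends on $\mtx{W}$ but is always contained in a well-controlled random set.

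First I would write
\begin{equation*}
{\cal{J}}(\mtx{W};z)-{\cal{J}}(\mtx{W}_0;z) \;=\; \bigl(\mtx{V}\,\mtx{D}(\mtx{W},\mtx{W}_0;z)\bigr)\otimes z^T,
\end{equation*}
where $\mtx{D}(\mtx{W},\mtx{W}_0;z):=\text{diag}\bigl(\phi'(\mtx{W}z)-\phi'(\mtx{W}_0z)\bigr)$ is diagonal with entries in $\{-1,0,1\}$, nonzero exactly on the sign-flip set $S(\mtx{W}):=\{i:\text{sgn}(\vct{w}_i^Tz)\neq\text{sgn}(\vct{w}_{0,i}^Tz)\}$. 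Using $\opnorm{A\otimes B}=\opnorm{A}\opnorm{B}$ and $\opnorm{z^T}=\twonorm{z}$, the claim reduces to showing $\opnorm{\mtx{V}\mtx{D}}=\opnorm{\mtx{V}_{S(\mtx{W})}}$ is small uniformly over the ball $\{\mtx{W}:\opnorm{\mtx{W}-\mtx{W}_0}\le R\}$.

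Next I would bound $|S(\mtx{W})|$ uniformly in the ball. Since $\opnorm{\mtx{W}-\mtx{W}_0}\le R$ forces $\twonorm{\vct{w}_i-\vct{w}_{0,i}}\le R$ for every row $i$, a sign flip at $i$ requires $|\vct{w}_{0,i}^Tz|\le|(\vct{w}_i-\vct{w}_{0,i})^Tz|\le R\twonorm{z}$. Thus $S(\mtx{W})\subseteq S^*:=\{i:|\vct{w}_{0,i}^Tz|\le R\twonorm{z}\}$ for every admissible $\mtx{W}$. Conditional on $z$, the indicators $\mathbf{1}\{i\in S^*\}$ are i.i.d.\ Bernoulli with parameter $p=\mathbb{P}\bigl(|\mathcal{N}(0,\sigma_w^2)|\le R\bigr)\le\sqrt{2/\pi}\,R/\sigma_w\le R/\sigma_w$, so a multiplicative Chernoff bound yields $|S^*|\le s:=\bigl(2kR/\sigma_w\bigr)^{2/3}$ with probability at least $1-e^{-s/6}$, once the exponent is tracked carefully. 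This handles the first failure term in the statement.

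Then, independently of the randomness of $\mtx{W}_0$ and $z$, I would control $\opnorm{\mtx{V}_{T}}$ uniformly over all column subsets $T\subseteq[k]$ with $|T|\le s$. For a fixed $T$ of size $s$, standard Gaussian matrix concentration gives $\opnorm{\mtx{V}_T}\le\sigma_v(\sqrt{m}+\sqrt{s}+t)$ with probability at least $1-e^{-t^2/2}$. Union-bounding over $\binom{k}{s}\le(ek/s)^s$ choices of $T$ and choosing $t$ to absorb the combinatorial factor $\sqrt{2s\log(ek/s)}$ (reserving $\sqrt{m}$ to give the residual Gaussian tail $e^{-m/2}$), one collapses $\sqrt{s}+\sqrt{m}+\sqrt{2s\log(ek/s)}$ into $2\sqrt{m}+\sqrt{6s\log(k/(3s))}$ by elementary inequalities. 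Combining with step two and multiplying by $\twonorm{z}$ yields the claimed bound with the stated failure probability $e^{-m/2}+e^{-s/6}$.

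The main obstacle is the last step: carrying out the union bound over $\binom{k}{s}$ column-subsets in a way that produces exactly the closed-form expression $\sqrt{6s\log(k/(3s))}$ rather than the loose $\sqrt{s}+\sqrt{2s\log(ek/s)}$, and simultaneously routing the Chernoff bound on $|S^*|$ so that the exceptional probabilities combine cleanly into the two-term bound in the lemma. The conceptual content, however, lies entirely in the observation that the sign-flip set is uniformly contained in the $\mtx{W}$-independent set $S^*$, which turns a hard uniform-in-$\mtx{W}$ problem into a union bound over subsets of a random set of known size.
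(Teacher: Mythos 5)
Your reduction to bounding $\opnorm{\mtx{V}\mtx{D}}$ with $\mtx{D}$ supported on the sign-flip set, and your final step of union-bounding $\opnorm{\mtx{V}_T}$ over column subsets $T$ of size at most $s$, are both sound and are indeed how the bound $\sigma_v\left(2\sqrt{m}+\sqrt{6s\log(k/(3s))}\right)$ is obtained in the literature the paper cites (the paper itself does not reprove this lemma; it adapts it from its reference). The genuine gap is in your bound on the flip-set cardinality. From $\opnorm{\mtx{W}-\mtx{W}_0}\leq R$ you only extract the row-wise bounds $\twonorm{\vct{w}_i-\vct{w}_{0,i}}\leq R$, giving the containment $S(\mtx{W})\subseteq S^*=\{i:\abs{\vct{w}_{0,i}^T z}\leq R\twonorm{z}\}$. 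But $\abs{S^*}$ is Binomial with mean $\asymp\min\left(k,\,kR/\sigma_w\right)$, and no Chernoff bound can force it below $s=(2kR/\sigma_w)^{2/3}$: whenever $kR/\sigma_w\gtrsim 1$ (which is exactly the regime in which the lemma is invoked in the proof of Theorem 2.1, where $R\asymp\sigma_w\sqrt{dm}$ up to constants, so $kR/\sigma_w\gg(kR/\sigma_w)^{2/3}$), the event $\abs{S^*}\leq s$ has probability tending to zero, not one. Concentration only pins $\abs{S^*}$ near its mean, which is far larger than $s$; your claimed step "$\abs{S^*}\leq(2kR/\sigma_w)^{2/3}$ with probability $1-e^{-s/6}$" is therefore false outside the trivial regime $2kR/\sigma_w\lesssim 1$, which is neither assumed in the statement nor satisfied where the lemma is used.

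The $2/3$ exponent cannot be recovered from row-wise perturbation bounds; it comes from using the spectral-norm constraint collectively over the flip set. Concretely, if $S$ is the flip set with $\abs{S}=s$, then $\sum_{i\in S}\abs{\vct{w}_{0,i}^T z}^2\leq\sum_{i\in S}\abs{(\vct{w}_i-\vct{w}_{0,i})^T z}^2\leq\opnorm{\mtx{W}-\mtx{W}_0}^2\twonorm{z}^2\leq R^2\twonorm{z}^2$, i.e., the perturbation budget $R^2\twonorm{z}^2$ is shared across all flipped rows rather than available to each row separately. One then lower-bounds the left-hand side via the order statistics of the i.i.d.\ variables $\abs{\vct{w}_{0,i}^T z}$ (the $j$-th smallest is $\gtrsim\sigma_w\twonorm{z}\,j/(2k)$ with the Binomial/Chernoff failure probability of the form $e^{-s/6}$ — this is where that term in the lemma actually originates), so the sum of the $s$ smallest squares is $\gtrsim\sigma_w^2\twonorm{z}^2 s^3/k^2$, forcing $s\lesssim(kR/\sigma_w)^{2/3}$. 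Replacing your Step 2 by this argument, and keeping your Step 3 essentially as is, yields the stated bound and the two-term failure probability; as written, however, your proof does not establish the lemma.
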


Our final key lemma bounds the initial misfit $f(\mtx{W}_0)-\vct{y}:=\frac{1}{n}\sum_{i=1}^{n}\mtx{V}\phi\left(\mtx{W}_0\vct{z_i}\right)-\vct{\bar{x}}$.
\begin{lemma}(\textbf{Initial misfit}) \label{initial misfit}
	Consider our GAN model with a linear discriminator and the generator being a one hidden layer neural network of the form $\vct{z}\rightarrowtail\mtx{V}\phi(\mtx{W}z)$, where we have n independent data points $\vct{z}_1,\vct{z}_2,\cdots,\vct{z}_n\in\R^d$ distributed as $\mathcal{N}(0,\sigma_{z}^2\mtx{I}_d)$ and aggregated as the rows of a matrix $\mtx{Z}\in\R^{n\times d}$, and $\mtx{V}\in\R^{m\times k}$ has i.i.d $\mathcal{N}(0,\sigma_{v}^2)$ entries. We also assume that the initial $\mtx{W}_0\in\R^{k\times d}$ has i.i.d $\mathcal{N}(0,\sigma_{w}^2)$ entries. Then the following event 
	\[ \twonorm{\frac{1}{n}\sum_{i=1}^{n}\mtx{V}\phi\left(\mtx{W}_0\vct{z_i}\right)-\vct{\bar{x}}}\leq (1+\delta)\frac{1}{\sqrt{2\pi}}\sigma_{v}\sigma_{w}\sigma_{z}\sqrt{kdm} +\twonorm{\vct{\bar{x}}},~~~0\leq\delta\leq 3 \]
	holds with probability at least $1 -\left(k\cdot e^{-c_2n\left(\delta/27\right)^2}+e^{-\frac{\left(\delta/9\right)^2m}{2}}+e^{-\frac{\left(\delta/3 \right)^2kd }{2}}\right) $, with $c_2$ a fixed constant.
\end{lemma}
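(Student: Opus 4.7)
My plan is to first apply the triangle inequality to peel off $\twonorm{\vct{\bar{x}}}$, reducing the problem to bounding $\twonorm{\mtx{V}\vct{u}}$, where $\vct{u}:=\frac{1}{n}\sum_{i=1}^n \phi(\mtx{W}_0 \vct{z}_i)\in\R^k$. The independence structure of the problem then suggests a layered conditioning argument in three stages: (i) bound $\twonorm{\mtx{V}\vct{u}}$ given $\vct{u}$, (ii) bound each coordinate $u_\ell$ given $\vct{w}_\ell$ and then aggregate, and (iii) bound $\fronorm{\mtx{W}_0}$. Each of the three probability penalties $e^{-(\delta/9)^2 m/2}$, $k\cdot e^{-c_2 n(\delta/27)^2}$, and $e^{-(\delta/3)^2 kd/2}$ in the statement corresponds cleanly to one of these three steps.

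For stage (i), since $\mtx{V}$ is independent of $(\mtx{W}_0,\{\vct{z}_i\})$ and hence of $\vct{u}$, conditionally on $\vct{u}$ the random vector $\mtx{V}\vct{u}$ is distributed as $\mathcal{N}(\vct{0},\sigma_v^2\twonorm{\vct{u}}^2\mtx{I}_m)$; a standard $\chi^2$ concentration bound yields $\twonorm{\mtx{V}\vct{u}}\leq(1+\delta_1)\sigma_v\sqrt{m}\,\twonorm{\vct{u}}$ with probability at least $1-e^{-\delta_1^2 m/2}$. For stage (ii), I decompose $\twonorm{\vct{u}}^2=\sum_{\ell=1}^k u_\ell^2$ with $u_\ell=\frac{1}{n}\sum_i \phi(\vct{w}_\ell^T\vct{z}_i)$. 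Conditional on the $\ell$-th row $\vct{w}_\ell$ of $\mtx{W}_0$, the summands are i.i.d., each bounded in magnitude by $|\vct{w}_\ell^T\vct{z}_i|$ and therefore sub-Gaussian with proxy $\sigma_z\twonorm{\vct{w}_\ell}$, with mean $\sigma_z\twonorm{\vct{w}_\ell}/\sqrt{2\pi}$ (the standard rectified-Gaussian mean). A sub-Gaussian Hoeffding-type inequality then produces $u_\ell\leq (1+\delta_2)\sigma_z\twonorm{\vct{w}_\ell}/\sqrt{2\pi}$ with probability at least $1-e^{-c_2 n\delta_2^2}$; a union bound over the $k$ rows costs a factor of $k$ and gives $\twonorm{\vct{u}}^2\leq \frac{(1+\delta_2)^2\sigma_z^2}{2\pi}\fronorm{\mtx{W}_0}^2$. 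For stage (iii), $\fronorm{\mtx{W}_0}^2/\sigma_w^2\sim\chi^2_{kd}$, so $\fronorm{\mtx{W}_0}\leq(1+\delta_3)\sigma_w\sqrt{kd}$ with probability at least $1-e^{-\delta_3^2 kd/2}$.

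Combining the three estimates through the triangle inequality gives
\begin{equation*}
\twonorm{\tfrac{1}{n}\sum_{i=1}^n \mtx{V}\phi(\mtx{W}_0\vct{z}_i)-\vct{\bar{x}}}\;\leq\;(1+\delta_1)(1+\delta_2)(1+\delta_3)\,\tfrac{1}{\sqrt{2\pi}}\sigma_v\sigma_w\sigma_z\sqrt{kdm}+\twonorm{\vct{\bar{x}}}.
\end{equation*}
Allocating $\delta_1=\delta/9$, $\delta_2=\delta/27$, $\delta_3=\delta/3$ (matching the exponents in the stated probability) and verifying $(1+\delta_1)(1+\delta_2)(1+\delta_3)\leq 1+\delta$ on the range $\delta\in[0,3]$ by direct expansion closes the argument via a final union bound over the three failure events.

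The only step that requires actual care is stage (ii), because $\phi(\vct{w}_\ell^T\vct{z}_i)$ is a \emph{non-centered} nonlinear function of a Gaussian, so one cannot directly apply a centered Hoeffding bound. The clean way I would handle this is to write $\phi(x)=\tfrac{1}{2}(x+|x|)$, observe that $|G|-\E|G|$ inherits the sub-Gaussian proxy of $G$ (since $x\mapsto |x|$ is $1$-Lipschitz, so Gaussian concentration for Lipschitz functions applies), and then apply the standard sub-Gaussian tail bound to the centered sum $u_\ell-\E[u_\ell\mid\vct{w}_\ell]$. Everything else is routine Gaussian and $\chi^2$ concentration glued together by the sequential conditioning above.
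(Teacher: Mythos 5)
Your proposal is correct and follows essentially the same route as the paper's proof: reduce via the triangle inequality, use independence/rotation invariance so that $\mtx{V}\vct{u}$ behaves as $\twonorm{\vct{u}}$ times an $m$-dimensional Gaussian, concentrate each coordinate $u_\ell$ around its conditional mean $\sigma_z\twonorm{\vct{w}_\ell}/\sqrt{2\pi}$ via a sub-Gaussian bound with a union bound over the $k$ rows, bound $\fronorm{\mtx{W}_0}$ by $\chi^2$ concentration, and allocate $\delta/9,\delta/27,\delta/3$ to match the stated probabilities. Your explicit handling of the non-centered ReLU sum (via $\phi(x)=\tfrac12(x+|x|)$ and Lipschitz Gaussian concentration) is a slightly more careful justification of the sub-Gaussian step the paper simply asserts, but the argument is the same.
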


\subsection{Proof of Theorem \ref{minmaxthm}} \label{proof of minmaxthm}
In this section, we prove Theorem \ref{minmaxthm} by using our general meta Theorem \ref{metathm}. To do this we need to check that Assumptions 1-3 are satisfied with high probability. Specifically, in our case the parameter $\vct{\theta}$ is the matrix $\mtx{W}$ and the non-linear mapping $f$ is given by  $f\left(\mtx{W}\right)=\frac{1}{n}\sum_{i=1}^{n}\mtx{V}\phi\left(\mtx{W}\vct{z_i}\right)$. We note that in our result $\vct{d}_0=\vct{0}$ and thus $\twonorm{\vct{z}_0}=\twonorm{\vct{r}_0}$, which simplifies our analysis.\\
To prove Assumption 1 note that by setting $\delta=\frac{1}{2}$ and $\eta=\frac{1}{3}$ in Lemma \ref{sigma_min}, we have
\begin{align*}
	\sigma_{\text{min}}\left({\cal{J}}\left(\mtx{W}_0\right)\right)
	&\geq \sigma_{v}\sigma_{z}\left(\frac{1}{2}\sqrt{k-9}-2\sqrt{m}\right)\sqrt{\frac{d+\frac{n-1}{\pi}}{2n}}\\ &=:\alpha.
\end{align*}
This holds with probability at least $1-3e^{-\frac{m}{72}}-4k\cdot e^{-c\cdot n}-2k\cdot e^{-\frac{d}{216}}$, concluding the proof of Assumption 1. 
Next, by setting $\delta=\frac{1}{2}$ in Lemma \ref{spectral norm} we have 
\[ \opnorm{{\cal{J}}\left(\mtx{W}_0\right)}\le\zeta:= \frac{3}{2}\sigma_{v}\sigma_{z}\left(\sqrt{k}+2\sqrt{m}\right)\sqrt{\frac{d+\frac{n-1}{\pi}}{2n}} \]
with probability at least $1-e^{-\frac{m}{2}}-2k\cdot e^{-c\cdot n}-k\cdot e^{-\frac{d}{216}}$. Now to bound spectral norm of Jacobian at $\mtx{W}$ where $\opnorm{\mtx{W}-\mtx{W}_0}\le R$ (the value of R is defined in the proof of assumption 3 below), we use triangle inequality to get
\[ \opnorm{{\cal{J}}\left(\mtx{W}\right)}\le \opnorm{{\cal{J}}\left(\mtx{W}_0\right)} + \opnorm{{\cal{J}}\left(\mtx{W}\right) - {\cal{J}}\left(\mtx{W}_0\right) }.   \] 
This last inequality together with assumption 3, which we will prove below, yields
\[ \opnorm{{\cal{J}}\left(\mtx{W}\right)}\le \opnorm{{\cal{J}}\left(\mtx{W}_0\right)} +\epsilon\le \opnorm{{\cal{J}}\left(\mtx{W}_0\right)} +\frac{\alpha^2}{4\gamma\beta}\le \opnorm{{\cal{J}}\left(\mtx{W}_0\right)} +\frac{\opnorm{{\cal{J}}\left(\mtx{W}_0\right)}^2}{4\beta}  .\]
Therefore by choosing $\beta=2\zeta$ we arrive at 
\begin{align*}
    \opnorm{{\cal{J}}\left(\mtx{W}\right)}&\le \opnorm{{\cal{J}}\left(\mtx{W}_0\right)} +\frac{\opnorm{{\cal{J}}\left(\mtx{W}_0\right)}^2}{4\beta}\\
    &= \opnorm{{\cal{J}}\left(\mtx{W}_0\right)} +
    \frac{\opnorm{{\cal{J}}\left(\mtx{W}_0\right)}^2}{8\zeta}\\
    &\le \opnorm{{\cal{J}}\left(\mtx{W}_0\right)} +
    \frac{\opnorm{{\cal{J}}\left(\mtx{W}_0\right)}^2}{8\opnorm{{\cal{J}}\left(\mtx{W}_0\right)}}\\
    &\le 2\opnorm{{\cal{J}}\left(\mtx{W}_0\right)}\\
    &\le 2\zeta=\beta,
\end{align*}
establishing that assumption 2 holds with
$$\beta=3\sigma_{v}\sigma_{z}\left(\sqrt{k}+2\sqrt{m}\right)\sqrt{\frac{d+\frac{n-1}{\pi}}{2n}}$$
with probability at least $1-e^{-\frac{m}{2}}-2k\cdot e^{-c\cdot n}-k\cdot e^{-\frac{d}{216}}$.

Finally to show that Assumption 3 holds, we use the single-sample Jacobian perturbation result of Lemma \ref{perturbation 1} combined with the triangle inequality to conclude that
\begin{align}
\opnorm{{\cal{J}}\left(\mtx{W}\right)-{\cal{J}}\left(\mtx{W}_0\right)}&=
\opnorm{\frac{1}{n}\left(\sum_{i=1}^{n}{\cal{J}}\left(\mtx{W};z_i\right)-{\cal{J}}\left(\mtx{W}_0;z_i\right)\right)}\nonumber\\&\leq \frac{1}{n}\sum_{i=1}^{n}\opnorm{{\cal{J}}\left(\mtx{W};z_i\right)-{\cal{J}}\left(\mtx{W}_0;z_i\right)}
\nonumber\\&\leq\frac{\sigma_{v}}{n}\left(\sum_{i=1}^{n}\twonorm{z_i}\right) \left(2\sqrt{m}+\sqrt{6\left(\frac{2kR}{\sigma_{w}}\right)^\frac{2}{3}\text{log}\left(\frac{k}{3\left(\frac{2kR}{\sigma_{w}}\right)^\frac{2}{3}}\right)}\right)
\nonumber\\
&\overset{(i)}{\leq}\sigma_{v}\frac{\fronorm{\mtx{Z}}}{\sqrt{n}} \left(2\sqrt{m}+\sqrt{6\left(\frac{2kR}{\sigma_{w}}\right)^\frac{2}{3}\text{log}\left(\frac{k}{3\left(\frac{2kR}{\sigma_{w}}\right)^\frac{2}{3}}\right)}\right)
\nonumber\\
&\overset{(ii)}{\leq}\frac{5}{4}\sigma_{v}\sigma_{z}\sqrt{d}\left(2\sqrt{m}+\sqrt{6\left(\frac{2kR}{\sigma_{w}}\right)^\frac{2}{3}\text{log}\left(\frac{k}{3\left(\frac{2kR}{\sigma_{w}}\right)^\frac{2}{3}}\right)}\right)\label{assump3},
\end{align}
where $\left(i\right)$ holds by Cauchy–Schwarz inequality, and $\left(ii\right)$ holds because for a Gaussian matrix $\mtx{Z}\in\R^{n\times d}$ with $\mathcal{N}(0,\sigma_{z}^2)$ entries the following holds
\begin{align*}
\mathbb{P}\left(\fronorm{\mtx{Z}}\leq\frac{5}{4}\sigma_{z}\sqrt{nd}\right)\geq\mathbb{P}\left(\fronorm{\mtx{Z}}^2\leq\frac{3}{2}\sigma_{z}^2nd\right)\geq 1-e^{-\frac{nd}{24}}.    
\end{align*}

Now we set $\epsilon=\frac{\alpha^2}{4\gamma\beta}$ and show that Assumption 3 holds with this choice of $\epsilon$ and with radius $\widetilde{R}$, whose value will be defined later in the proof. First, note that
\begin{align*}
		\epsilon&=\frac{\alpha^2}{4\gamma\beta}\\&=\frac{\sigma_{v}^2\sigma_{z}^2\left(\frac{1}{2}\sqrt{k-9}-2\sqrt{m}\right)^2\left(\frac{d+\frac{n-1}{\pi}}{2n}\right)}{12\gamma\sigma_{v}\sigma_{z}\left(\sqrt{k}+2\sqrt{m}\right)\sqrt{\frac{d+\frac{n-1}{\pi}}{2n}}}\\
	&\overset{\left(i\right)}{\geq}\frac{\sigma_{v}\sigma_{z}\left(\frac{1}{8}\sqrt{k}\right)^2\cdot\sqrt{\frac{1}{4\pi}}}{60\left(3\sqrt{k}\right)}\\
	&\geq\frac{\sigma_{v}\sigma_{z}\sqrt{k}}{42000},
\end{align*}
where $\left(i\right)$ holds by assuming $k\geq C\cdot m$ with $C$ being a large positive constant. Combining the last inequality with \eqref{assump3}, we observe that a sufficient condition for assumption 3 to hold is
\begin{align*}
\frac{5}{4}\sigma_{v}\sigma_{z}\sqrt{d}\left(2\sqrt{m}+\sqrt{6\left(\frac{2kR}{\sigma_{w}}\right)^\frac{2}{3}\text{log}\left(\frac{k}{3\left(\frac{2kR}{\sigma_{w}}\right)^\frac{2}{3}}\right)}\right)\leq\frac{\sigma_{v}\sigma_{z}\sqrt{k}}{42000},
\end{align*}
which is equivalent to
\begin{align*}
105000\sqrt{md}+52500\cdot \sqrt{d}\cdot\sqrt{6\left(\frac{2kR}{\sigma_{w}}\right)^\frac{2}{3}\text{log}\left(\frac{k}{3\left(\frac{2kR}{\sigma_{w}}\right)^\frac{2}{3}}\right)}\leq\sqrt{k}.
\end{align*}
Now the first term in the L.H.S. is upper bounded by $\frac{1}{2}\sqrt{k}$ if $k\geq\left(210000\right)^2md$, and for the second term we need
\begin{align*}
105000\cdot \sqrt{d}\cdot\sqrt{6\left(\frac{2kR}{\sigma_{w}}\right)^\frac{2}{3}\text{log}\left(\frac{k}{3\left(\frac{2kR}{\sigma_{w}}\right)^\frac{2}{3}}\right)}\leq\sqrt{k},
\end{align*}
which by defining $x=3d\left(\frac{2R}{\sigma_{w}\sqrt{k}}\right)^\frac{2}{3}$ is equivalent to 
\begin{align*}
	x\log\frac{d}{x}\leq\frac{1}{2\cdot105000^2}.
\end{align*}
This last inequality holds for $x\leq\frac{c}{\log d}$ with $c<1$ a sufficiently small positive constant, which translates into
\begin{align}\label{Radius2}
	R\leq c\frac{\sigma_{w}\sqrt{k}}{\left(d\log d\right)^\frac{3}{2}}.
\end{align}
So far we have shown that Assumption 3 holds with $\epsilon=\frac{\alpha^2}{4\gamma\beta}$ and with radius $\widetilde{R}$ defined as $\widetilde{R}:=c\frac{\sigma_{w}\sqrt{k}}{\left(d\log d\right)^\frac{3}{2}}$, and we conclude that it holds for any radius $R$ less than $\widetilde{R}$ as well. Now we work with the definition of $R$ in \eqref{radius} to show that $R\leq \widetilde{R}$:
\begin{align*}
	\frac{R}{2}&=\gamma\frac{\beta^2}{\alpha^2}\twonorm{\begin{bmatrix}
		{\cal{J}}_0^\dagger &0\\0 &{\cal{J}}_0^\dagger
		\end{bmatrix}\vct{z}_0} + \frac{9\epsilon\beta^2\gamma^2}{\alpha^4}\twonorm{\vct{z}_0}\\
	&\overset{\left(i\right)}{\leq}\gamma\frac{\beta^2}{\alpha^3}\twonorm{\vct{r}_0}+\frac{9\frac{\alpha^2}{4\gamma\beta}\beta^2\gamma^2}{\alpha^4}\twonorm{\vct{r}_0}\\
	&=\gamma\twonorm{\vct{r}_0}\left(\frac{\beta^2}{\alpha^3}+\frac{9}{4}\frac{\beta}{\alpha^2}\right)\\
	&\overset{\left(ii\right)}{\leq}20\frac{\beta^2}{\alpha^3}\twonorm{\vct{r}_0}\\
	&= 20\frac{\left(3\sigma_{v}\sigma_{z}\left(\sqrt{k}+2\sqrt{m}\right)\sqrt{\frac{d+\frac{n-1}{\pi}}{2n}}\right)^2}{\left(\sigma_{v}\sigma_{z}\left(\frac{1}{2}\sqrt{k-9}-2\sqrt{m}\right)\sqrt{\frac{d+\frac{n-1}{\pi}}{2n}}\right)^3}\twonorm{\vct{r}_0}\\
	&\overset{\left(iii\right)}{\leq}C\cdot\frac{1}{\sigma_{v}\sigma_{z}\sqrt{k}}\cdot\left(\frac{2}{3}\sigma_{v}\sigma_{w}\sigma_{z}\sqrt{k\cdot d\cdot m} +\twonorm{\vct{\bar{x}}} \right)
\end{align*}
where $\left(i\right)$ holds because $\opnorm{{\cal{J}}_0^\dagger}\leq\frac{1}{\alpha}$ and $4\gamma\beta\epsilon=\alpha^2$, $\left(ii\right)$ holds as $1\leq\frac{\beta}{\alpha}$ and as we substitute $\gamma=5$ from Lemma \ref{lemstability}, and $\left(iii\right)$ follows from $k\geq C\cdot m$ and using $\delta=\frac{1}{3}$ in Lemma $\ref{initial misfit}$. Now a sufficient condition for \eqref{Radius2} to hold is that
\begin{align*}
\frac{1}{\sigma_{v}\sigma_{z}\sqrt{k}}\cdot\left(\frac{2}{3}\sigma_{v}\sigma_{w}\sigma_{z}\sqrt{k\cdot d\cdot m} +\twonorm{\vct{\bar{x}}} \right)\leq c\frac{\sigma_{w}\sqrt{k}}{\left(d\log d\right)^\frac{3}{2}},
\end{align*}
which is equivalent to
\begin{align*}
\frac{2}{3}\sigma_{v}\sigma_{w}\sigma_{z}\cdot\left(d\log d\right)^\frac{3}{2}\sqrt{k\cdot d\cdot m}+\left(d\log d\right)^\frac{3}{2}\twonorm{\vct{\bar{x}}}\leq c\cdot k\sigma_{v}\sigma_{w}\sigma_{z}.
\end{align*}
Finally, this inequality is satisfied by assuming $k\geq C\cdot md^4\log\left(d\right)^3$ and setting
$\sigma_{v}\sigma_{w}\sigma_{z}\geq \frac{\twonorm{\vct{\bar{x}}}}{md^{\frac{5}{2}}\log{d}^{\frac{3}{2}}}$.
This shows that assumption 3 holds with probability at least $1-ne^{-\frac{m}{2}}-ne^{-c\cdot md^3\log\left(d\right)^2}-k\cdot e^{-c\cdot n}-e^{-\frac{m}{1500}}-e^{-\frac{kd}{162}}$, concluding the proof of Theorem \ref{minmaxthm}.

\subsection{Proof of Theorem \ref{minthm}}  \label{proof of minthm}
Consider a nonlinear least-squares optimization problem of the form
$$\underset{\theta\in\R^p}{\text{min }}\mathcal{L}{\left(\theta\right)}:= \frac{1}{2}\twonorm{f\left(\theta\right)-y}^2 $$
with $f:\R^p\mapsto\R^m$ and $y\in\R^m$. Suppose the Jacobian mapping associated with $f$ satisfies the following three assumptions.\\\\
%
%
\textbf{Assumption 1} \label{as1} We assume $\sigma_{min}\left({\cal{J}}\left(\theta_0\right)\right)\geq2\alpha$ for a fixed point $\theta_0\in\R^{p}$.\\\\
\textbf{Assumption 2} \label{as2} Let $\|\cdot\|$ be a norm dominated by the Frobenius norm i.e. $\|\theta\|\leq\fronorm{\theta}$ holds for all $\theta\in\R^{p}$. Fix a point $\theta_0$ and a number $R>0$. For any $\theta$ satisfying $\|\theta-\theta_0\|\leq R$, we have $\|{\cal{J}}\left(\theta\right)-{\cal{J}}\left(\theta_0\right)\|\leq\frac{\alpha}{3}$.\\\\
\textbf{Assumption 3} \label{as3} We assume for all $\theta\in\R^p$ obeying $\|\theta-\theta_0\|\leq R$, we have $\opnorm{{\cal{J}}\left(\theta\right)}\leq\beta$. \\\\
With these assumptions in place we are now ready to state the following result from \citet{oymak2020towards}:
\begin{theorem} \label{non-smooth}
	Given $\theta_0\in\R^p$, suppose assumptions 1, 2, and 3 hold with
	\begin{align}\label{Radius2.2}
		R=\frac{3\twonorm{f\left(\theta_0\right)-y}}{\alpha} .
	\end{align}
	 Then, using a learning rate $\eta\leq\frac{1}{3\beta^2}$, all gradient descent updates obey
	 \begin{equation}
	 \begin{split}
	 \twonorm{f\left(\theta_{\tau}\right)-y}\leq \left(1-\eta\alpha^2\right)^{\tau}\twonorm{f\left(\theta_0\right)-y}.
	 \end{split}
	 \end{equation}
\end{theorem}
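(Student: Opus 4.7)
The plan is to prove Theorem \ref{non-smooth} by strong induction on $\tau$, showing simultaneously that the iterate $\vct{\theta}_\tau$ stays in the ball $\mathcal{B}_R(\vct{\theta}_0)$ and that the residual $\vct{r}_\tau := f(\vct{\theta}_\tau)-\vct{y}$ satisfies $\twonorm{\vct{r}_\tau}\le(1-\eta\alpha^2)^\tau\twonorm{\vct{r}_0}$. The base case is trivial. For the inductive step, the first move is to use the fundamental theorem of calculus to rewrite the residual recursion exactly as
\[
\vct{r}_{\tau+1}=\vct{r}_\tau-\eta\,{\cal{J}}_{\tau+1,\tau}\,{\cal{J}}(\vct{\theta}_\tau)^T \vct{r}_\tau,\qquad {\cal{J}}_{\tau+1,\tau}:=\int_0^1 {\cal{J}}\!\left(\vct{\theta}_\tau+s(\vct{\theta}_{\tau+1}-\vct{\theta}_\tau)\right)\!ds.
\]
Since the ball is convex, assumptions 2 and 3 applied along this segment give $\|{\cal{J}}(\vct{\theta}_\tau)-{\cal{J}}_0\|\le\alpha/3$, $\|{\cal{J}}_{\tau+1,\tau}-{\cal{J}}_0\|\le\alpha/3$, and $\opnorm{{\cal{J}}(\vct{\theta}_\tau)},\opnorm{{\cal{J}}_{\tau+1,\tau}}\le\beta$. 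In particular $\sigma_{\min}({\cal{J}}(\vct{\theta}_\tau))\ge 2\alpha-\alpha/3=5\alpha/3$, so for any $\vct{r}$,
\[
\twonorm{{\cal{J}}(\vct{\theta}_\tau)^T\vct{r}}\ge \tfrac{5\alpha}{3}\twonorm{\vct{r}},
\]
a bound that will do the heavy lifting.

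The central estimate is the descent inequality. Expanding $\twonorm{\vct{r}_{\tau+1}}^2$ and using the inner-product expression
$\langle {\cal{J}}_{\tau+1,\tau}^T\vct{r}_\tau,{\cal{J}}_\tau^T\vct{r}_\tau\rangle=\twonorm{{\cal{J}}_\tau^T\vct{r}_\tau}^2+\langle({\cal{J}}_{\tau+1,\tau}-{\cal{J}}_\tau)^T\vct{r}_\tau,{\cal{J}}_\tau^T\vct{r}_\tau\rangle$ together with Cauchy--Schwarz and the perturbation bound $\|{\cal{J}}_{\tau+1,\tau}-{\cal{J}}_\tau\|\le 2\alpha/3$ yields
\[
\langle {\cal{J}}_{\tau+1,\tau}^T\vct{r}_\tau,{\cal{J}}_\tau^T\vct{r}_\tau\rangle\ge \twonorm{{\cal{J}}_\tau^T\vct{r}_\tau}^2-\tfrac{2\alpha}{3}\twonorm{\vct{r}_\tau}\twonorm{{\cal{J}}_\tau^T\vct{r}_\tau}\ge \tfrac{3}{5}\twonorm{{\cal{J}}_\tau^T\vct{r}_\tau}^2,
\]
where the last step absorbs $\twonorm{\vct{r}_\tau}$ using $\twonorm{{\cal{J}}_\tau^T\vct{r}_\tau}\ge(5\alpha/3)\twonorm{\vct{r}_\tau}$. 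Combining this with $\eta^2\opnorm{{\cal{J}}_{\tau+1,\tau}}^2\le\eta/3$ (the choice $\eta\le 1/(3\beta^2)$) gives the quadratic descent
\[
\twonorm{\vct{r}_{\tau+1}}^2\le \twonorm{\vct{r}_\tau}^2 -\tfrac{13\eta}{15}\twonorm{{\cal{J}}_\tau^T \vct{r}_\tau}^2\le\Big(1-\tfrac{65}{27}\eta\alpha^2\Big)\twonorm{\vct{r}_\tau}^2\le(1-\eta\alpha^2)^2\twonorm{\vct{r}_\tau}^2,
\]
which is the claimed rate.

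The remaining task is to verify $\vct{\theta}_{\tau+1}\in\mathcal{B}_R(\vct{\theta}_0)$, and this is where the analysis is most delicate. The naive estimate $\twonorm{\vct{\theta}_{s+1}-\vct{\theta}_s}\le\eta\beta\twonorm{\vct{r}_s}$, summed geometrically, only yields $\beta\twonorm{\vct{r}_0}/\alpha^2$, which fails unless $\beta\lesssim\alpha$. The fix is to rearrange the descent inequality as $\twonorm{{\cal{J}}_s^T\vct{r}_s}^2\le \tfrac{15}{13\eta}(\twonorm{\vct{r}_s}^2-\twonorm{\vct{r}_{s+1}}^2)\le\tfrac{30}{13\eta}\twonorm{\vct{r}_s}(\twonorm{\vct{r}_s}-\twonorm{\vct{r}_{s+1}})$ and then apply Cauchy--Schwarz:
\[
\sum_{s=0}^{\tau}\twonorm{{\cal{J}}_s^T \vct{r}_s}\le \sqrt{\tfrac{30}{13\eta}}\sqrt{\Big(\sum_s\twonorm{\vct{r}_s}\Big)\Big(\sum_s(\twonorm{\vct{r}_s}-\twonorm{\vct{r}_{s+1}})\Big)}.
\]
The first factor is bounded telescopically by $\twonorm{\vct{r}_0}$ and the second geometrically by $\twonorm{\vct{r}_0}/(\eta\alpha^2)$, giving $\twonorm{\vct{\theta}_{\tau+1}-\vct{\theta}_0}\le\eta\sum_s\twonorm{{\cal{J}}_s^T\vct{r}_s}\le\sqrt{30/13}\,\twonorm{\vct{r}_0}/\alpha<3\twonorm{\vct{r}_0}/\alpha=R$, closing the induction.

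The hardest part is Step 2: the non-symmetry of ${\cal{J}}_{\tau+1,\tau}{\cal{J}}_\tau^T$ means that an operator-norm bound on $\mtx{I}-\eta {\cal{J}}_{\tau+1,\tau}{\cal{J}}_\tau^T$ is too crude to recover the rate $1-\eta\alpha^2$ when $\beta\gg\alpha$; one must instead exploit the specific direction $\vct{r}_\tau$ through the quadratic expansion and absorb the cross-term into the squared descent $\twonorm{{\cal{J}}_\tau^T\vct{r}_\tau}^2$. A secondary technicality is the usual bootstrapping needed to legitimize applying the Jacobian bounds along the segment $[\vct{\theta}_\tau,\vct{\theta}_{\tau+1}]$ before we have formally placed $\vct{\theta}_{\tau+1}$ in $\mathcal{B}_R$; this is handled by a standard continuity argument (or equivalently, by working with an infinitesimally enlarged ball and taking limits).
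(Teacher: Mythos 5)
Your proof is correct, and it is worth noting that the paper itself does not prove Theorem \ref{non-smooth} at all: it is imported verbatim from \citet{oymak2020towards} and only its hypotheses are verified for the GAN setting. Your argument is a self-contained reconstruction of essentially the standard overparameterized-least-squares analysis behind that citation: the residual recursion through the averaged Jacobian ${\cal{J}}_{\tau+1,\tau}$, Weyl-type perturbation giving $\sigma_{\min}({\cal{J}}_\tau)\ge 5\alpha/3$ and $\opnorm{{\cal{J}}_{\tau+1,\tau}-{\cal{J}}_\tau}\le 2\alpha/3$, absorption of the cross term into $\tfrac{3}{5}\twonorm{{\cal{J}}_\tau^T\vct{r}_\tau}^2$, the resulting descent coefficient $\tfrac{13}{15}\eta$ and rate $1-\tfrac{65}{27}\eta\alpha^2\le(1-\eta\alpha^2)^2$, and the Cauchy--Schwarz/telescoping device that replaces the naive $\beta\twonorm{\vct{r}_0}/\alpha^2$ bound by $\sqrt{30/13}\,\twonorm{\vct{r}_0}/\alpha<R$. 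I checked the constants and they all work; the only slip is cosmetic (you swap the labels of the two factors in the Cauchy--Schwarz step, but the product is what matters, and your identification of the cross-term absorption as the crux is exactly right). What your route buys is a proof the paper does not supply, with visible slack in the radius constant $3$; what the paper buys by citing is brevity.

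The one place to tighten is the bootstrapping you flag at the end: the descent inequality at step $\tau$ uses bounds on ${\cal{J}}_{\tau+1,\tau}$, i.e.\ containment of the segment $[\vct{\theta}_\tau,\vct{\theta}_{\tau+1}]$ in $\mathcal{B}_R(\vct{\theta}_0)$, while your certificate that $\vct{\theta}_{\tau+1}\in\mathcal{B}_R(\vct{\theta}_0)$ sums descent inequalities up to and including step $\tau$. An appeal to ``continuity'' is a bit loose for a discrete iteration, but your own estimates already close the loop: by the induction hypothesis $\twonorm{\vct{\theta}_\tau-\vct{\theta}_0}\le\sqrt{30/13}\,\twonorm{\vct{r}_0}/\alpha$ (this uses only the descent inequalities for $s\le\tau-1$), and the crude single-step bound $\twonorm{\vct{\theta}_{\tau+1}-\vct{\theta}_\tau}\le\eta\opnorm{{\cal{J}}_\tau}\twonorm{\vct{r}_\tau}\le\eta\beta\twonorm{\vct{r}_0}\le\twonorm{\vct{r}_0}/(6\alpha)$ (using $\eta\le 1/(3\beta^2)$ and $\beta\ge 2\alpha$) places the entire segment inside $\mathcal{B}_R$ since $\sqrt{30/13}+\tfrac16<3$. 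With the segment legitimately inside the ball, the averaged-Jacobian bounds and hence the descent inequality at step $\tau$ follow, and the sharper Cauchy--Schwarz bound then restores the stronger distance estimate for the next round. So this is a two-line patch rather than a gap, but it should be written out instead of invoked as a standard continuity argument.
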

We are going to apply this theorem in our case where the parameter is $\mtx{W}$, the nonlinear mapping is $f\left(\mtx{W}\right)=\frac{1}{n}\sum_{i=1}^{n}\mtx{V}\phi\left(\mtx{W}\vct{z_i}\right)$ with $\phi=ReLU$, and the norm $\|\cdot\|$ set to the operator norm.\\\\
Similar to previous part, by using Lemma \ref{sigma_min} we conclude that with probability at least $1-3e^{-\frac{m}{72}}-4k\cdot e^{-c\cdot n}-2k\cdot e^{-\frac{d}{216}}$, assumption 1 is satisfied with 
\begin{align*}
2\alpha:=\sigma_{v}\sigma_{z}\left(\frac{1}{2}\sqrt{k-9}-2\sqrt{m}\right)\sqrt{\frac{d+\frac{n-1}{\pi}}{2n}}.
\end{align*}
Next we show that assumption 2 is valid for $\alpha$ as defined in the previous line and for radius $\widetilde{R}$ defined later. First we note that 
\[ \frac{\alpha}{3}\geq c\cdot\sigma_{v}\sigma_{z}\cdot\sqrt{k}, \]
where the inequality holds by assuming $K\geq C\cdot m$ for a sufficiently large positive constant $C$. 
Now by using \eqref{assump3} assumption 2 holds if 
\[\frac{5}{4}\sigma_{v}\sigma_{z}\sqrt{d}\left(2\sqrt{m}+\sqrt{6\left(\frac{2kR}{\sigma_{w}}\right)^\frac{2}{3}\text{log}\left(\frac{k}{3\left(\frac{2kR}{\sigma_{w}}\right)^\frac{2}{3}}\right)}\right)\leq c\cdot\sigma_{v}\sigma_{z}\cdot\sqrt{k}, \]
which is equivalent to
\[C\sqrt{md}+C\sqrt{d}\cdot\sqrt{6\left(\frac{2kR}{\sigma_{w}}\right)^\frac{2}{3}\text{log}\left(\frac{k}{3\left(\frac{2kR}{\sigma_{w}}\right)^\frac{2}{3}}\right)}\leq\sqrt{k}.\]
The first term in the L.H.S. of the inequality above is upper bounded by $\frac{1}{2}\sqrt{k}$ if $k\geq C\cdot md$. For upper bounding the second term it is sufficient to show that
\[ C\sqrt{d}\sqrt{6\left(\frac{2kR}{\sigma_{w}}\right)^\frac{2}{3}\text{log}\left(\frac{k}{3\cdot \left(\frac{2kR}{\sigma_{w}}\right)^{\frac{2}{3}}}\right)}\leq\sqrt{k},  \]
which by defining $x=3d\left(\frac{2R}{\sigma_{w}\sqrt{k}}\right)^\frac{2}{3}$ is equivalent to $x\cdot\text{log}\left(\frac{d}{x}\right)\leq C$. Now this last inequality holds if we have $x\leq \frac{c}{\text{log}\left(d\right)}$ for a sufficiently small constant $c$, which by rearranging terms amounts to showing that $R\leq c\cdot\frac{\sigma_{w}\sqrt{k}}{\left(d\cdot\text{log}\left(d\right)\right)^{\frac{3}{2}}}$. Hence up to this point, we have shown that assumption 2 holds with radius $\widetilde{R}:=c\cdot\frac{\sigma_{w}\sqrt{k}}{\left(d\cdot\text{log}\left(d\right)\right)^{\frac{3}{2}}}$, and this implies that it holds for all values of $R$ less than $\widetilde{R}$. Therefore, we work with the definition of $R$ in \eqref{Radius2.2} to show that $R\leq \widetilde{R}$ as follows:
\begin{align*}
R&=\frac{3\twonorm{f\left(\theta_0\right)-y}}{\alpha}\\&\overset{\left(i\right)}{\leq} \frac{3}{\alpha}\left(\frac{2}{3}\sigma_{v}\sigma_{w}\sigma_{z}\sqrt{k\cdot d\cdot m} +\twonorm{\vct{\bar{x}}} \right)
\\&=\frac{2\left(2\sigma_{v}\sigma_{w}\sigma_{z}\sqrt{k\cdot m\cdot d}+3\twonorm{\bar{\vct{x}}}\right)}{\sigma_{v}\sigma_{z}\left(\frac{1}{2}\sqrt{k-9}-2\sqrt{m}\right)\sqrt{\frac{d+\frac{n-1}{\pi}}{2n}}},
\end{align*}
where in $\left(i\right)$ we used Lemma \ref{initial misfit} with $\delta=\frac{1}{3}$. Hence for showing $R\leq\widetilde{R}$ it suffices to show that 
\[\frac{2\left(2\sigma_{v}\sigma_{w}\sigma_{z}\sqrt{k\cdot m\cdot d}+3\twonorm{\bar{\vct{x}}}\right)}{\sigma_{v}\sigma_{z}\left(\frac{1}{2}\sqrt{k-9}-2\sqrt{m}\right)\sqrt{\frac{d+\frac{n-1}{\pi}}{2n}}}\leq c\cdot\frac{\sigma_{w}\sqrt{k}}{\left(d\cdot\text{log}\left(d\right)\right)^{\frac{3}{2}}}, \]
which by assuming $k\geq C\cdot m$ simplifies to
\[\sigma_{v}\sigma_{w}\sigma_{z}\left(d\cdot\text{log}\left(d\right)\right)^{\frac{3}{2}}\sqrt{k\cdot m\cdot d}+\left(d\cdot\text{log}\left(d\right)\right)^{\frac{3}{2}}\twonorm{\vct{\bar{x}}}\leq C\cdot k\cdot \sigma_{v}\sigma_{w}\sigma_{z}.\]
Now this last inequality holds if $k\geq C\cdot m d^4\log\left(d\right)^3$ and by setting $\sigma_{v}\sigma_{w}\sigma_{z}\geq \frac{\twonorm{\vct{\bar{x}}}}{md^{\frac{5}{2}}\log{d}^{\frac{3}{2}}}$. 
Therefore Assumption 2 holds for radius $R$ defined in \eqref{Radius2.2} with probability at least $1-ne^{-\frac{m}{2}}-ne^{-c\cdot md^3\log{\left(d\right)}^2}-k\cdot e^{-c\cdot n}-e^{-\frac{m}{1500}}-e^{-\frac{kd}{162}}$. 

Finally to show assumption 3 holds, we note that for all $\mtx{W}$ satisfying $\opnorm{\mtx{W}-\mtx{W}_0}\le R$, where the value of $R$ is defined in \eqref{Radius2.2}, it holds that
\begin{align*}
\opnorm{{\mathcal{J}}\left(\mtx{W}\right)}&\le \opnorm{{\mathcal{J}}\left(\mtx{W}_0\right)}+\opnorm{{\mathcal{J}}\left(\mtx{W}\right)-{\mathcal{J}}\left(\mtx{W}_0\right)}\\
&\le \opnorm{{\mathcal{J}}\left(\mtx{W}_0\right)} + \frac{\alpha}{3}\\
&\le \opnorm{{\mathcal{J}}\left(\mtx{W}_0\right)} + \frac{\sigma_{\text{min}}\left({\mathcal{J}}\left(\mtx{W}_0\right)\right)}{6}\\
&\le 2\opnorm{{\mathcal{J}}\left(\mtx{W}_0\right)}\\
&\le 3\sigma_{v}\sigma_{z}\left(\sqrt{k}+2\sqrt{m}\right)\sqrt{\frac{d+\frac{n-1}{\pi}}{2n}},
\end{align*}
where the last inequality holds by using lemma \ref{spectral norm}, hence establishing that assumption 3 holds with
\[\beta=3\sigma_{v}\sigma_{z}\left(\sqrt{k}+2\sqrt{m}\right)\sqrt{\frac{d+\frac{n-1}{\pi}}{2n}} \]
with probability at least $1-e^{-\frac{m}{2}}-2k\cdot e^{-c\cdot n}-k\cdot e^{-\frac{d}{216}}$, finishing the proof of Theorem \ref{minthm}.

\section{Proofs of the auxiliary lemmas} \label{lemmasproofs}
In this section, we first provide a proof of Lemma \ref{expected jacobian} and next go over the proofs of the key lemmas stated in Section \ref{lemmas}.
\subsection{Proof of Lemma \ref{expected jacobian}} \label{A1}
Recall that 
\begin{align*}
	{\cal{J}}(\mtx{W}){\cal{J}}(\mtx{W})^T&=\frac{1}{n^2}\sum_{i,j=1}^{n}\left(\mtx{V}diag(\phi'(\mtx{W}\vct{z}_i))\text{diag}(\phi'(\mtx{W}\vct{z}_j)\mtx{V}^T\right)(\vct{z}_i^T\vct{z}_j)\\&=\frac{1}{n^2}\mtx{V}\underset{\ell=1,\ldots,k}{\text{diag}} \left(\twonorm{\sum_{i=1}^{n}\vct{z}_i\phi'(\vct{w}_\ell^T\vct{z}_i)}^2\right) \mtx{V}^T =\frac{1}{n^2}\mtx{V}\cdot\mtx{D}^2\cdot\mtx{V}^T, 
\end{align*}
where $\mtx{D}$ is a diagonal matrix with entries
\begin{equation}\label{D_ellell}
	D_{\ell\ell}=\twonorm{\sum_{i=1}^{n}\vct{z}_i\phi'(\vct{w}_\ell^T\vct{z}_i)}=\twonorm{\mtx{Z}^T\phi'(\mtx{Z}\vct{w}_{\ell})}.
\end{equation}
The matrix $\mtx{Z}\in\R^{n\times d}$ contains the $\vct{z}_i$'s in its rows. In order to proceed we are going to analyze the entries of the diagonal matrix $\mtx{D}^2$. We observe that
\begin{equation*}
\begin{split}
	   \twonorm{\mtx{Z}^T\phi'(\mtx{Z}\vct{w})}^2&=\underbrace{\twonorm{(\mtx{I}-\frac{\vct{w}\vct{w}^T}{||\vct{w}||^2})\mtx{Z}^T\phi'(\mtx{Z}\vct{w})}^2}_A +\underbrace{\twonorm{\frac{\vct{w}\vct{w}^T}{||\vct{w}||^2}\mtx{Z}^T\phi'(\mtx{Z}\vct{w})}^2}_B.
\end{split}
\end{equation*}
First, we compute the expectation of $A$. We observe that
\[ A=\twonorm{\sum_{i=1}^{n}(\mtx{I}-\frac{\vct{w}\vct{w}^T}{||\vct{w}||^2})\vct{z}_i\phi'(\vct{w}^T\vct{z}_i)}^2. \]
Conditioned on $\vct{w}$,  $\left(\mtx{I}-\frac{\vct{w}\vct{w}^T}{\|\vct{w}\|^2}\right)\vct{z}_i$ is distributed as 
$\mathcal{N}\left(0,\sigma_{z}^2\left(\mtx{I}-\frac{\vct{w}\vct{w}^T}{\|\vct{w}\|^2}\right)\right)$ and $\vct{w}^T\vct{z}_i$ has distribution $\mathcal{N}\left(0,\sigma_{z}^2 \|\vct{w}\|^2\right)$. Moreover, these two random variables are independent, because $\vct{w}$ is in the null space of $\mtx{I}-\frac{\vct{w}\vct{w}^T}{||\vct{w}||^2}$. This observation yields
\begin{equation*}
\begin{split}
\E(A)&=\E\twonorm{\sum_{i=1}^{n}(\mtx{I}-\frac{\vct{w}\vct{w}^T}{||\vct{w}||^2})\vct{z}_i\phi'(\vct{w}^T\vct{z}_i)}^2\\
&=\sum_{i=1}^{n}\sum_{j=1}^{n}\E\left(\left\langle (\mtx{I}-\frac{\vct{w}\vct{w}^T}{||\vct{w}||^2})\vct{z}_i,(\mtx{I}-\frac{\vct{w}\vct{w}^T}{||\vct{w}||^2})\vct{z}_j \right\rangle \phi'(\vct{w}^T\vct{z}_i) \phi'(\vct{w}^T\vct{z}_j)\right)\\
&=\sum_{i=1}^{n}\E\left(\twonorm{(\mtx{I}-\frac{\vct{w}\vct{w}^T}{||\vct{w}||^2})\vct{z}_i}^2\right)  \E\left(\left[\phi'(\vct{w}^T\vct{z}_i)\right]^2\right)\\
&=\sum_{i=1}^{n}\frac{1}{2}(d-1)\sigma_{z}^2=\frac{n}{2}(d-1)\sigma_{z}^2.
\end{split}
\end{equation*}
Next we show that $A$ is concentrated around its mean. Because $\left(\mtx{I}-\frac{\vct{w}\vct{w}^T}{||\vct{w}||^2}\right)\vct{z}_i$ is independent from $\vct{w}^T\vct{z}_i$, we use $\vct{z'}_i$ as an independent copy of $\vct{z}_i$. Hence we can write
\begin{equation*}
	\begin{split}
	A&=\twonorm{(\mtx{I}-\frac{\vct{w}\vct{w}^T}{||\vct{w}||^2})\mtx{Z}^T\phi'(\mtx{Z}\vct{w})}^2\\
	 &=\twonorm{(\mtx{I}-\frac{\vct{w}\vct{w}^T}{||\vct{w}||^2})\mtx{Z}^T\phi'(\mtx{Z'}\vct{w})}^2\\
	 &=\twonorm{\sum_{i=1}^{n}(\mtx{I}-\frac{\vct{w}\vct{w}^T}{||\vct{w}||^2})\vct{z}_i\phi'\left(\vct{w}^T\vct{z'}_i\right)}^2=\twonorm{\sum_{i=1}^{n}\vct{g}_iu_i}^2,
	\end{split}
\end{equation*}
where $\vct{g}_i=\left(\mtx{I}-\frac{\vct{w}\vct{w}^T}{\|\vct{w}\|^2}\right)\vct{z}_i\sim\mathcal{N}\left(0,\sigma_{z}^2\left(\mtx{I}-\frac{\vct{w}\vct{w}^T}{\|\vct{w}\|^2}\right)\right)$ and $u_i=\phi'\left(\vct{w}^T\vct{z'}_i\right)\sim bern(\frac{1}{2})$,\footnote{Here, $bern(\frac{1}{2})$ means that the random variable takes values $0$ and $1$ each with probability $1/2$.} and these are all independent from each other. Note that $\twonorm{\sum_{i=1}^{n}\vct{g}_iu_i}^2$ has the same distribution as $\twonorm{\vct{g}}^2\cdot\twonorm{\vct{u}}^2$, where $\vct{g}\sim\mathcal{N}\left(0,\sigma_{z}^2\left(\mtx{I}-\frac{\vct{w}\vct{w}^T}{\|\vct{w}\|^2}\right)\right)$ and $\vct{u}$ is a vector with entries $u_i$. Note that for the norm of $\vct{u}$, the event
$$ \frac{n}{2}\left(1-\delta\right)\leq \twonorm{\vct{u}}^2 \leq \frac{n}{2}\left(1+\delta\right)$$ holds with probability at least $1-2e^{-\frac{n\delta^2}{2}}$. 
Recall that for $\vct{g}\sim\mathcal{N}(0,\sigma^2\mtx{I}_d)$ and $0<\delta\leq \frac{1}{2}$ we have
\begin{equation}\label{Gaussian norm}
\begin{split}
\mathbb{P}\left(\twonorm{\vct{g}}^2\geq(1+\delta)\E\left(\twonorm{\vct{g}}^2\right) \right)\leq e^{-\frac{d\delta^2}{6}},\\
\mathbb{P}\left(\twonorm{\vct{g}}^2\leq(1-\delta)\E\left(\twonorm{\vct{g}}^2\right) \right)\leq e^{-\frac{d\delta^2}{4}}.\\
\end{split}
\end{equation}
By applying the union bound and noting that $\E\left(\twonorm{\vct{g}}^2\right)=\left(d-1\right)\sigma_{z}^2$, for $0<\delta\leq\frac{3}{2}$, we obtain that the event
\[ |\twonorm{\vct{g}}^2\twonorm{\vct{u}}^2- \frac{n}{2}(d-1)\sigma_{z}^2 |\leq\delta \frac{n}{2}(d-1)\sigma_{z}^2    \]
holds with probability at least $1-2e^{-\frac{n\delta^2}{18}}-2e^{-\frac{d\delta^2}{54}}$.
\\In order to analyze $B$, we first note that
\begin{equation*}
\begin{split}
B=\twonorm{\frac{\vct{w}\vct{w}^T}{||\vct{w}||^2}\mtx{Z}^T\phi'(\mtx{Z}\vct{w})}^2&=\left|\frac{\vct{w}^T}{||\vct{w}||}\mtx{Z}^T\phi'(\mtx{Z}\vct{w})\right|^2\\
&=\left|\left\langle \mtx{Z}\frac{\vct{w}}{||\vct{w}||},\phi'(\mtx{Z}\vct{w}) \right\rangle \right|^2 \\
&= \Big| \left\langle \vct{g},\phi'(||\vct{w}||\vct{g}) \right\rangle \Big|^2\\
&=\left|\sum_{i=1}^{n}\vct{g}_i \cdot \mathbbm{1}_{(\vct{g}_i\geq0)} \right|^2 =\left(\sum_{i=1}^{n}ReLU(\vct{g}_i)\right)^2,
\end{split}
\end{equation*}
where $\vct{g}_i=\vct{z}_i^T\frac{\vct{w}}{\|\vct{w}\|}\sim\mathcal{N}\left(0,\sigma_{z}^2\right)$. It follows that
\begin{equation*}
\begin{split}
\E\left(B\right)&=\E\left(\sum_{i=1}^{n}ReLU(\vct{g}_i) \right)^2\\
&=\sum_{i=1}^{n}\E\Big(ReLU^2(\vct{g}_i)\Big) + \sum_{i\neq j}\E\Big(ReLU(\vct{g}_i)ReLU(\vct{g}_j)\Big)\\
&=\sigma_{z}^2 \left(\frac{n}{2}+\frac{n(n-1)}{2\pi}\right),
\end{split}
\end{equation*}
which results in
\[ \E\left(\mtx{D}_{\ell\ell}^2\right)=\E\left(A\right)+\E\left(B\right)=\sigma_{z}^2\left(\frac{nd}{2}+\frac{n(n-1)}{2\pi}  \right), ~~~1\leq\ell\leq k. \]
Next, in order to show that $B$ concentrates around its mean, we note that $ReLU\left(\vct{g}_i\right)$ is a sub-Gaussian random variable with $\psi_2-$norm $C\sigma_z$, where $C$ is a fixed constant. Therefore $X=\sum_{i=1}^{n}ReLU\left(\vct{g}_i\right)$ is sub-Gaussian with $\psi_2-$norm $C\sqrt{n}\sigma_{z}$. By the sub-exponential tail bound for $X^2-\E (X^2)$ we obtain 
\[ \mathbb{P}\Big(\left|B-\E\left(B\right)\right|\geq t\Big)\leq 2e^{-c\frac{t}{n\sigma_{z}^2}}.         \]
Finally by putting these results together and using union bounds we have
\[\mathbb{P}\left\lbrace \left|\mtx{D}_{\ell\ell}^2 - \E\left(\mtx{D}_{\ell\ell}^2\right)\right|\geq \delta\E\left(\mtx{D}_{\ell\ell}^2\right)\right\rbrace \leq 2e^{-\frac{n\delta^2}{18}}+2e^{-\frac{d\delta^2}{54}} + 2e^{-c_1n\delta},~~~~~0\leq\delta\leq\frac{3}{2},    \] 
finishing the proof of Lemma \ref{expected jacobian}.
\subsection{Proof of lemma \ref{sigma_min}} \label{sminproof}
Our main tool for bounding the minimum singular value of the Jacobian mapping will be the following lemma from \citet{soltanolkotabi2019structured}:
\begin{lemma} \label{gordon}
	Let $\vct{d}\in\R^k$ be a fixed vector with nonzero entries and $\mtx{D}=diag\left(\vct{d}\right).$ Also, let $\mtx{A}\in\R^{k\times m}$ have i.i.d. $\mathcal{N}\left(0,1\right)$ entries and $\mathcal{T}\subseteq \R^m$. Define\\
	\[ b_k\left(\vct{d}\right)=\E\left[\twonorm{\mtx{D}\vct{g}}\right],\] where $\vct{g}\sim\mathcal{N}\left(\vct{0},\mtx{I}_k\right).$ Also define 
	\[\sigma\left(\mathcal{T}\right):=\underset{\vct{v}\in\mathcal{T}}{\text{max }} \twonorm{\vct{v}}.   \]
	Then for all $\vct{u}\in\mathcal{T}$ we have
	\[\left| \twonorm{\mtx{D}\mtx{A}\vct{u}}-b_k\left(\vct{d}\right)\twonorm{\vct{u}} \right| \leq\infnorm{\vct{d}}\omega\left(\mathcal{T}\right)+\eta \]
	with probability at least $1-6e^{\frac{-\eta^2}{8\infnorm{\vct{d}}^2\sigma^2\left(\mathcal{T}\right)}}.$
\end{lemma}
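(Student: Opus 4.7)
The plan is to establish Lemma~\ref{gordon} through a two-step argument: (i) control the expected deviation
\[
\E\Big[\sup_{\vct{u}\in\mathcal{T}}\big|\,\twonorm{\mtx{D}\mtx{A}\vct{u}}-b_k(\vct{d})\twonorm{\vct{u}}\,\big|\Big]
\]
using a Gordon-type Gaussian comparison inequality, and (ii) promote this expectation bound to a high-probability bound via Gaussian concentration of Lipschitz functionals. The pointwise centering is easy: for any fixed $\vct{u}$, since the entries of $\mtx{A}$ are i.i.d.\ standard Gaussian, $\mtx{A}\vct{u}$ is distributed as $\twonorm{\vct{u}}\cdot\vct{g}$ with $\vct{g}\sim\mathcal{N}(\vct{0},\mtx{I}_k)$, so $\twonorm{\mtx{D}\mtx{A}\vct{u}}$ has the same law as $\twonorm{\vct{u}}\cdot\twonorm{\mtx{D}\vct{g}}$, giving $\E\twonorm{\mtx{D}\mtx{A}\vct{u}}=b_k(\vct{d})\twonorm{\vct{u}}$ exactly. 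The nontrivial part is to produce a uniform control whose scale matches the stated bound.

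For step (i), I would rewrite $\twonorm{\mtx{D}\mtx{A}\vct{u}}=\sup_{\vct{w}\in S^{k-1}}\langle\mtx{D}\vct{w},\mtx{A}\vct{u}\rangle$ and apply Gordon's comparison inequality to the doubly-indexed centered Gaussian process
\[
X_{\vct{u},\vct{w}}:=\langle\mtx{D}\vct{w},\mtx{A}\vct{u}\rangle,\qquad(\vct{u},\vct{w})\in\mathcal{T}\times S^{k-1},
\]
compared against the auxiliary process $Y_{\vct{u},\vct{w}}:=\twonorm{\mtx{D}\vct{w}}\langle\vct{g},\vct{u}\rangle+\twonorm{\vct{u}}\langle\vct{h},\mtx{D}\vct{w}\rangle$ with $\vct{g},\vct{h}$ independent standard Gaussians. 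A direct variance computation verifies the Gordon increment condition, and taking $\sup_{\vct{u}}\sup_{\vct{w}}$ (resp.\ $\sup_{\vct{u}}\inf_{\vct{w}}$) on the $Y$-side decouples into the product $\twonorm{\vct{u}}\cdot b_k(\vct{d})$ plus a residual controlled by $\sup_{\vct{w}\in S^{k-1}}\twonorm{\mtx{D}\vct{w}}\cdot\E\sup_{\vct{u}\in\mathcal{T}}\langle\vct{g},\vct{u}\rangle=\infnorm{\vct{d}}\omega(\mathcal{T})$. This gives the desired centered expectation bound with scale $\infnorm{\vct{d}}\omega(\mathcal{T})$.

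For step (ii), observe that for any fixed $\vct{u}\in\mathcal{T}$ and any matrices $\mtx{A},\mtx{A}'$,
\[
\big|\twonorm{\mtx{D}\mtx{A}\vct{u}}-\twonorm{\mtx{D}\mtx{A}'\vct{u}}\big|\le\twonorm{\mtx{D}(\mtx{A}-\mtx{A}')\vct{u}}\le\infnorm{\vct{d}}\twonorm{\vct{u}}\fronorm{\mtx{A}-\mtx{A}'},
\]
so that the map $\mtx{A}\mapsto\sup_{\vct{u}\in\mathcal{T}}\big(\twonorm{\mtx{D}\mtx{A}\vct{u}}-b_k(\vct{d})\twonorm{\vct{u}}\big)$ is Lipschitz with constant at most $\infnorm{\vct{d}}\sigma(\mathcal{T})$ with respect to $\fronorm{\cdot}$. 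Gaussian concentration for Lipschitz functions then yields the subgaussian tail $\exp\!\big(-\eta^2/(2\infnorm{\vct{d}}^2\sigma^2(\mathcal{T}))\big)$ around the expectation. Applying this argument symmetrically to the lower side (and, separately, to $\inf_{\vct{u}}$ if needed) and union-bounding absorbs a small constant into the prefactor $6$ in the exponent's denominator, producing the final bound $\infnorm{\vct{d}}\omega(\mathcal{T})+\eta$.

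The main obstacle is step~(i): one must choose the auxiliary process $Y$ so that (a) the Gordon increment condition $\E(X_{\vct{u},\vct{w}}-X_{\vct{u}',\vct{w}'})^2\le\E(Y_{\vct{u},\vct{w}}-Y_{\vct{u}',\vct{w}'})^2$ holds with equality when $\vct{u}=\vct{u}'$, and (b) the $\sup$/$\inf$ over $\vct{w}\in S^{k-1}$ of $Y$ evaluates cleanly to the linear centering $b_k(\vct{d})\twonorm{\vct{u}}$ plus a width term scaled by $\infnorm{\vct{d}}$ rather than a weaker scale such as $\twonorm{\vct{d}}$. Getting the constants to line up with the factor of $8$ in the stated exponent requires a careful accounting of which Lipschitz constant (Frobenius vs.\ operator) drives the concentration, and of the equality case in Gordon's inequality that is needed to identify the leading-order centering as $b_k(\vct{d})$ rather than a looser surrogate.
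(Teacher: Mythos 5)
A preliminary remark: the paper does not prove this lemma at all — it is imported verbatim from \citet{soltanolkotabi2019structured} as an external tool for the proof of Lemma~\ref{sigma_min} — so there is no in-paper argument to compare against. What you propose is essentially the standard Gordon-comparison-plus-Gaussian-concentration route by which results of this type are established in that literature. Your step (ii) is sound: for fixed $\mtx{D}$ the map $\mtx{A}\mapsto\sup_{\vct{u}\in\mathcal{T}}\left(\twonorm{\mtx{D}\mtx{A}\vct{u}}-b_k(\vct{d})\twonorm{\vct{u}}\right)$ is $\infnorm{\vct{d}}\sigma(\mathcal{T})$-Lipschitz with respect to the Frobenius norm, so Gaussian concentration gives the claimed sub-Gaussian tail, and the union bound over the two sides only costs constants.

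Step (i) as written, however, has genuine gaps. (a) The claim that the increment condition holds ``with equality when $\vct{u}=\vct{u}'$'' is false for your auxiliary process: a direct computation gives $\E\left(Y_{\vct{u},\vct{w}}-Y_{\vct{u},\vct{w}'}\right)^2-\E\left(X_{\vct{u},\vct{w}}-X_{\vct{u},\vct{w}'}\right)^2=\twonorm{\vct{u}}^2\left(\twonorm{\mtx{D}\vct{w}}-\twonorm{\mtx{D}\vct{w}'}\right)^2\ge 0$, not zero. One-sided domination is all that Sudakov--Fernique needs, so the upper deviation survives, but the equality claim should be dropped. (b) More seriously, the lower deviation $\twonorm{\mtx{D}\mtx{A}\vct{u}}\ge b_k(\vct{d})\twonorm{\vct{u}}-\infnorm{\vct{d}}\omega(\mathcal{T})-\eta$ uniformly over $\mathcal{T}$ is not ``the same argument applied symmetrically'': a sup--sup comparison cannot produce it. You need Gordon's min--max comparison, whose hypotheses include matched variances, and your processes do not satisfy this ($\E Y_{\vct{u},\vct{w}}^2=2\twonorm{\vct{u}}^2\twonorm{\mtx{D}\vct{w}}^2$ versus $\E X_{\vct{u},\vct{w}}^2=\twonorm{\vct{u}}^2\twonorm{\mtx{D}\vct{w}}^2$); the standard remedy is to augment the primary process with an independent term $\gamma\twonorm{\vct{u}}\twonorm{\mtx{D}\vct{w}}$, $\gamma\sim\mathcal{N}(0,1)$, before comparing, and your sketch does not set this up. (c) The decoupling of $\E\sup_{\vct{u},\vct{w}}Y_{\vct{u},\vct{w}}$ into the per-$\vct{u}$ centering $b_k(\vct{d})\twonorm{\vct{u}}$ plus $\infnorm{\vct{d}}\omega(\mathcal{T})$ is only clean when $\mathcal{T}$ lies on a sphere; for general $\mathcal{T}$ the leftover term $\sup_{\vct{u}\in\mathcal{T}}\twonorm{\vct{u}}\left(\twonorm{\mtx{D}\vct{h}}-b_k(\vct{d})\right)$ contributes an additional $O\!\left(\infnorm{\vct{d}}\sigma(\mathcal{T})\right)$ slack, which is precisely the kind of term that gets absorbed into the $8$ (rather than $2$) in the exponent; this bookkeeping has to be carried out rather than acknowledged in passing. (In the paper's application $\mathcal{T}=S^{m-1}$, so (c) is benign there, but the lemma is stated for general $\mathcal{T}$.)
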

In order to apply this lemma, we set the elements of $\vct{d}$ to be $D_{\ell\ell}$ as in \eqref{D_ellell} and choose $\mathcal{T}=S^{m-1}$ and $\mtx{A}=\mtx{V}^T\in\R^{k\times m}$ with $\mathcal{N}(0,\sigma_v^2)$ entries. It follows that
\[ b_k\left(\vct{d}\right)=\E\twonorm{\mtx{D}\vct{g}}=\sqrt{\E \left(\twonorm{\mtx{D}\vct{g}}^2 \right) - \text{Var}\left(\twonorm{\mtx{D}\vct{g}}\right)  }, \]
where
\[\E\left(\twonorm{\mtx{D}\vct{g}}^2\right)=\twonorm{\vct{d}}^2=\sum_{\ell=1}^{k}\mtx{D}_{\ell\ell}^2.  \]
We are going to use the fact that for a $B$-Lipschitz function $\phi$ and normal random variable $g\sim\mathcal{N}(0,1),$ based on the Poincare inequality \citep{ledoux2001concentration} we have Var$\left(\phi\left(g\right)\right)\leq B^2.$ By noting that for a diagonal matrix $\mtx{D}$
\[\left|\twonorm{\mtx{D}x}-\twonorm{\mtx{D}y} \right|\leq \twonorm{\mtx{D}\vct{x}-\mtx{D}\vct{y}}\leq \infnorm{\vct{d}}\twonorm{\vct{x}-\vct{y}},\]
we get
\begin{align*}
\E\twonorm{\mtx{D}\vct{g}}&=\sqrt{\E \Big(\twonorm{\mtx{D}\vct{g}}^2  \Big) - \text{Var}(\twonorm{\mtx{D}\vct{g}})}\\
&\geq \sqrt{\twonorm{\vct{d}}^2-\infnorm{\vct{d}}^2}.
\end{align*}
This combined with $\omega\left(S^{m-1}\right)\leq\sqrt{m}$ and Lemma \ref{gordon} yields that the event
\begin{equation}\label{smin}
\sigma_{min}\left(\mtx{VD}\right)\geq \sigma_{v}\left(\sqrt{\twonorm{\vct{d}}^2-\infnorm{\vct{d}}^2} - \infnorm{\vct{d}}\sqrt{m}-\eta\right)
\end{equation}
holds with probability at least $1-3e^{\frac{-\eta^2}{8\infnorm{\vct{d}}^2}}.$\\
Next, using the concentration bound for $\mtx{D}_{\ell\ell}^2$, which we obtained in Section \ref{A1}, we bound $\twonorm{\vct{d}}^2$ and $\infnorm{\vct{d}}$, where we have set $\vct{d}_i=\mtx{D}_{ii}$ for $1\leq i\leq k.$ For $0\leq\delta\leq\frac{3}{2}$ we compute that
\begin{equation} \label{maxbound}
\begin{split}
\mathbb{P}\left(\underset{1\leq i\leq k}{\text{max }} \vct{d}_i\geq \left(1+\delta\right)\sqrt{\E\left[ \vct{d}_i^2\right] } \right)
&=\mathbb{P}\left(\bigcup\limits_{i=1}^{k} \vct{d}_i^2\geq \left(1+\delta\right)^2\E\left[ \vct{d}_i^2\right]    \right) \\
&\leq k\cdot \mathbb{P}\left(\vct{d}_i^2\geq \left(1+\delta\right)^2\E\left[ \vct{d}_i^2\right]\right)\\
&\leq k\cdot \mathbb{P}\left(\vct{d}_i^2\geq \left(1+\delta\right)\E\left[ \vct{d}_i^2\right]\right)\leq k\cdot \left(e^{-\frac{n\delta^2}{18}}+e^{-\frac{d\delta^2}{54}} + e^{-c_1n\delta}\right),
\end{split}
\end{equation}
as well as
\begin{equation} \label{l2bound_1}
\begin{split}
\mathbb{P}\left(\twonorm{\vct{d}}\leq \left(1-\delta\right)\sqrt{k}\sqrt{\E\left[ \vct{d}_i^2\right] } \right)
&\leq\mathbb{P}\left(\bigcup\limits_{i=1}^{k} \vct{d}_i^2\leq \left(1-\delta\right)^2\E\left[\vct{d}_i^2\right] \right) \\
&\leq k\cdot \mathbb{P}\left(\vct{d}_i^2\leq \left(1-\delta\right)^2\E\left[ \vct{d}_i^2\right] \right)\\
&\leq k\cdot \mathbb{P}\left(\vct{d}_i^2\leq (1-\delta)\E\left[ \vct{d}_i^2\right] \right)\leq k\cdot \left(e^{-\frac{n\delta^2}{18}}+e^{-\frac{d\delta^2}{54}} + e^{-c_1n\delta}\right).
\end{split}
\end{equation}
Finally by replacing $\eta$ with $\eta\infnorm{\vct{d}}\sqrt{m}$ in \eqref{smin}, combined with \eqref{maxbound} and \eqref{l2bound_1}, for a random $\mtx{W}_0$ with i.i.d. $\mathcal{N}\left(0,\sigma_{w}^2\right)$ entries we have:
\begin{equation*}
\begin{split}
\sigma_{min}\left({\cal{J}}(\mtx{W}_0)  \right)&=\frac{1}{n}\sigma_{min}\left( \mtx{VD} \right)\\
&\geq \frac{\sigma_{v}}{n}\left( \sqrt{\left(1-\delta \right)^2 k -\left(1+\delta\right)^2} - \sqrt{m}\left(1+\eta\right)\left(1+\delta\right) \right) \sqrt{\E\left[ \vct{d}_i^2\right] }\\
&=\left( \sqrt{\left(1-\delta \right)^2k-\left(1+\delta\right)^2} - \sqrt{m}\left(1+\eta\right)\left(1+\delta\right) \right)\sigma_{v}\sigma_{z}\sqrt{\frac{d+\frac{n-1}{\pi}}{2n}}, ~~~~0\leq\delta\leq\frac{3}{2},
\end{split}
\end{equation*}
with probability at least $1 - 3e^{-\frac{\eta^2m}{8}} -2k\cdot\left(e^{-\frac{n\delta^2}{18}}+e^{-\frac{d\delta^2}{54}} + e^{-c_1n\delta}\right)$. This completes the proof of Lemma \ref{sigma_min}.

\subsection{Proof of lemma \ref{spectral norm}}
Recall that 
\[ {\cal{J}}(\mtx{W}){\cal{J}}(\mtx{W})^T=\frac{1}{n^2}\mtx{V}\underset{\ell=1,\ldots,k}{\text{diag}} \left(\twonorm{\sum_{i=1}^{n}\vct{z}_i\phi'(\vct{w}_\ell^T\vct{z}_i)}^2\right) \mtx{V}^T =\frac{1}{n^2}\mtx{V}\cdot\mtx{D}^2\cdot\mtx{V}^T, \]
which implies that
\[\opnorm{{\cal{J}}(\mtx{W}_0)}=\frac{1}{n}\opnorm{\mtx{V}\cdot\mtx{D}}\le \frac{1}{n}\opnorm{\mtx{V}}\opnorm{\mtx{D}}. \]
For matrix $\mtx{V}\in\R^{m\times k}$ with i.i.d $\mathcal{N}(0,\sigma_{v}^2)$ the event
\[\opnorm{\mtx{V}}\le \sigma_{v}\left(\sqrt{k}+2\sqrt{m}\right) \]
holds with probability at least $1-e^{-\frac{m}{2}}$. Regarding matrix $\mtx{D}$ by repeating \eqref{maxbound} the following event 
\[\opnorm{\mtx{D}}=\underset{1\le i\le k}{\text{max}}D_{ii} \le \left(1+\delta\right)\sqrt{\E[D_{ii}^2]} =\left(1+\delta\right)\sigma_{z}\sqrt{\frac{nd}{2}+\frac{n(n-1)}{2\pi}},~~~~0\le \delta \le \frac{3}{2}  \]

holds with probability at least $1-k\cdot \left(e^{-\frac{n\delta^2}{18}}+e^{-\frac{d\delta^2}{54}} + e^{-c_1n\delta}\right)$. Putting these together it yields that the event
\[\opnorm{{\cal{J}}(\mtx{W}_0)}\le \left(1+\delta\right)\sigma_{v}\sigma_{z}\left(\sqrt{k}+2\sqrt{m}\right)\sqrt{\frac{d+\frac{n-1}{\pi}}{2n}},~~~~~~~~~0\le \delta \le \frac{3}{2}   \]
holds with probability at least $1-e^{-\frac{m}{2}}-k\cdot\left(e^{-\frac{n\delta^2}{18}}+e^{-\frac{d\delta^2}{54}} + e^{-c_1n\delta}\right)$, finishing the proof of Lemma \ref{spectral norm}.

\subsection{Proof of lemma \ref{initial misfit}}
First, note that if $\mtx{W}$ has i.i.d. $\mathcal{N}\left(0,\sigma_{w}^2\right) $ entries and $\mtx{V},\mtx{W},\mtx{Z}$ are all independent, then $\twonorm{f\left(\mtx{W}\right)}=\frac{1}{n}\twonorm{\mtx{V}\phi\left(\mtx{W}\mtx{Z}^T\right)\vct{1}_{n\times1}}$ has the same distribution as $\twonorm{\vct{v}}\twonorm{\vct{a}}$, where $\vct{v}\sim\mathcal{N}\left(0,\sigma_{v}^2\mtx{I}_m\right)$ and $\vct{a}=\frac{1}{n}\phi\left(\mtx{W}\mtx{Z}^T\right)\vct{1}$ has independent sub-Gaussian entries, so its $\ell_2$-norm is concentrated.
Note that conditioned on $\mtx{W}$, $a_i=\frac{1}{n}\sum_{j=1}^{n}ReLU\left(\vct{z}_j^T\vct{w}_i\right)$ is sub-Gaussian with $\left\|a_i\right\|_{\psi_2}=C\frac{\twonorm{\vct{w}_i}\sigma_{z}}{\sqrt{n}}$, and it is concentrated around $\E a_i=\frac{1}{\sqrt{2\pi}}\twonorm{\vct{w}_i}\sigma_{z}$. This gives
$$\mathbb{P}\left\lbrace a_i\leq (1+\delta)\E a_i \right\rbrace \geq 1-e^{-c\frac{\delta^2\left(\E a_i\right)^2}{\left\|a_i\right\|_{\psi_2}^2}}=1-e^{-cn\delta^2},$$
which implies that
$$\mathbb{P}\left\lbrace a_i^2\leq (1+3\delta)(\E a_i)^2 \right\rbrace \geq \mathbb{P}\left\lbrace a_i^2\leq (1+\delta)^2 (\E a_i)^2 \right\rbrace \geq 1-e^{-cn\delta^2},~~~~0\leq\delta\leq 1.$$
Due to the union bound we get that
\begin{equation*}
\begin{split}
\mathbb{P} \left\lbrace \twonorm{\vct{a}}^2\geq \left(1+\delta\right)\sum_{i=1}^{k} (\E a_i)^2 \right\rbrace&\leq \mathbb{P}\left\lbrace \bigcup\limits_{i=1}^{k} a_i^2\geq \left(1+\delta\right) (\E a_i)^2  \right\rbrace  \\
&\leq\sum_{i=1}^{k}\mathbb{P}\left\lbrace a_i^2\geq \left(1+\delta\right) (\E a_i)^2 \right\rbrace \leq k\cdot e^{-cn\left(\delta/3\right)^2}, ~~~0\leq\delta\leq 3.
\end{split}
\end{equation*}
By substituting $\sum_{i=1}^{k}  (\E a_i)^2=\frac{1}{2\pi}\sigma_{z}^2\fronorm{\mtx{W}}^2$ this shows
\begin{equation*}
\begin{split}
\mathbb{P}\left\lbrace \twonorm{\vct{a}}\leq\left(1+\delta\right)\frac{1}{\sqrt{2\pi}}\sigma_{z}\fronorm{\mtx{W}} \right\rbrace\geq \mathbb{P}\left\lbrace \twonorm{\vct{a}}^2\leq\left(1+\delta\right)\frac{1}{2\pi}\sigma_{z}^2\fronorm{\mtx{W}}^2 \right\rbrace \geq 1-k\cdot e^{-cn\left(\delta/3\right)^2}, ~~~0\leq\delta\leq 3 .
\end{split}
\end{equation*}
We also have the following result for $\vct{v}\sim\mathcal{N}(0,\sigma_{v}^2\mtx{I}_m)$
\[ \mathbb{P}\left\lbrace \twonorm{\vct{v}}\leq \left(1+\delta\right)\sigma_{v}\sqrt{m} \right\rbrace \geq 1-e^{-\frac{\delta^2m}{2}}.  \]
By combining the above results we obtain
\begin{equation*}
\begin{split}
\mathbb{P}\left\lbrace \twonorm{\vct{a}}\twonorm{\vct{v}}\leq \left(1+\delta\right)\frac{1}{\sqrt{2\pi}}\sigma_{v}\sigma_{z}\sqrt{m}\fronorm{\mtx{W}} \right\rbrace
&\geq \mathbb{P}\left\lbrace \twonorm{\vct{a}}\twonorm{\vct{v}}\leq \left(1+\delta/3\right)^2\frac{1}{\sqrt{2\pi}}\sigma_{v}\sigma_{z}\sqrt{m}\fronorm{\mtx{W}} \right\rbrace\\
& \geq 1-k\cdot e^{-cn\left(\delta/9\right)^2}-e^{-\frac{\left(\delta/3\right)^2m}{2}}, ~~~0\leq\delta\leq 3.
\end{split}
\end{equation*}
Furthermore, we can bound $\fronorm{\mtx{W}}$ by the tail inequality\\
\[ \mathbb{P}\left\lbrace \fronorm{\mtx{W}}\leq\left(1+\delta\right)\sigma_{w}\sqrt{kd} \right\rbrace \geq 1-e^{-\frac{\delta^2kd}{2}}.    \]
Hence, by combining the last two results we have that
\begin{equation*}
\begin{split}
\mathbb{P}\left\lbrace \twonorm{\vct{a}}\twonorm{\vct{v}}\leq \left(1+\delta\right)\frac{1}{\sqrt{2\pi}}\sigma_{v}\sigma_{z}\sigma_{w}\sqrt{k\cdot d\cdot m} \right\rbrace
&\geq \mathbb{P}\left\lbrace \twonorm{\vct{a}}\twonorm{\vct{v}}\leq \left(1+\delta/3\right)^2\frac{1}{\sqrt{2\pi}}\sigma_{v}\sigma_{z}\sigma_{w}\sqrt{k\cdot d\cdot m} \right\rbrace\\
& \geq 1-k\cdot e^{-cn\left(\delta/27\right)^2}-e^{-\frac{\left(\delta/9\right)^2m}{2}}-e^{-\frac{\left(\delta/3 \right)^2kd }{2}}, ~~~0\leq\delta\leq 3.
\end{split}
\end{equation*} 
Therefore, due to the triangle inequality the event 
\[ \twonorm{f(\mtx{W}_0)-\vct{\bar{x}}}\leq (1+\delta)\frac{1}{\sqrt{2\pi}}\sigma_{v}\sigma_{w}\sigma_{z}\sqrt{k\cdot d\cdot m} +\twonorm{\vct{\bar{x}}},~~~0\leq\delta\leq 3 \]
holds with probability at least $1-k\cdot e^{-c_2n\left(\delta/27\right)^2}-e^{-\frac{\left(\delta/9\right)^2m}{2}}-e^{-\frac{\left(\delta/3 \right)^2kd }{2}}$ for some positive constant $c_2$, completing the proof of Lemma \ref{initial misfit}.

\section{Additional Experiments}

{\bfseries Effect of single component overparameterization:} In Section 3 of the main paper, we performed experiments in the setting where the size of generator and discriminator are held roughly the same (both discriminator and generator uses the same value of $k$). In this part, we analyze single-component overparameterization where we study the effect of overparameterization when one of the components (generator / discriminator) has varying $k$, while the other component uses the standard value of $k$ ($64$ for DCGAN and $128$ for Resnet GAN). The FID variation of single-component overparameterization are shown in Fig.~\ref{fig:single_comp_overparam}. We observe similar trends as the previous case where increasing overparameterization leads to improved FID scores. Interestingly, increasing the value of $k$ beyond the default value used in the other component leads to a slight drop in performance. Hence, choosing comparable sizes of discriminator and generator models is recommended.

\begin{figure*}[t!]
    \centering
    \vspace{-5pt}
    \begin{subfigure}[t]{0.50\textwidth}
        \centering
        \includegraphics[height=1.9in]{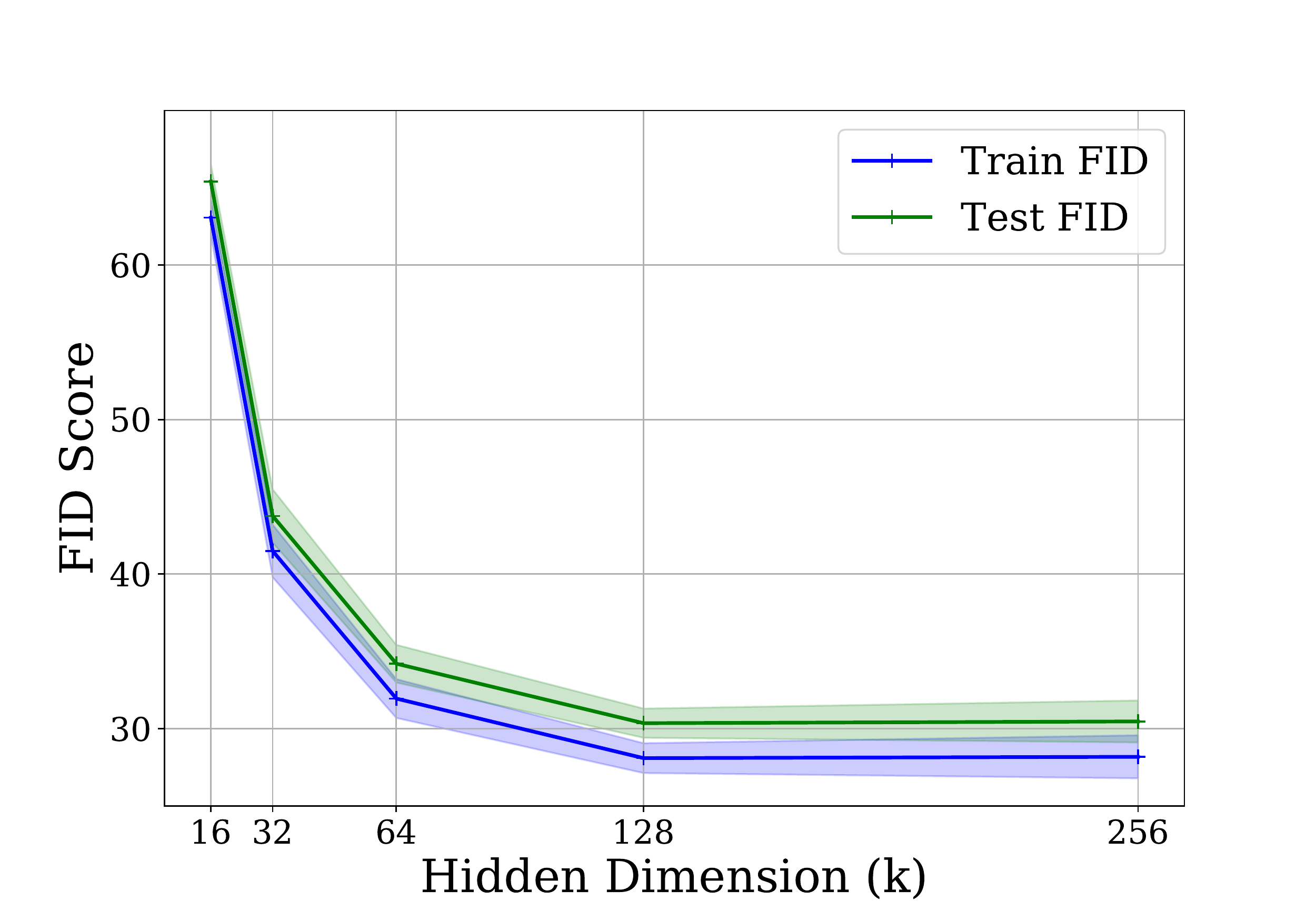}
        \caption{Discriminator overparameterization}
    \end{subfigure}%
    ~ 
    \begin{subfigure}[t]{0.50\textwidth}
        \centering
        \includegraphics[height=1.9in]{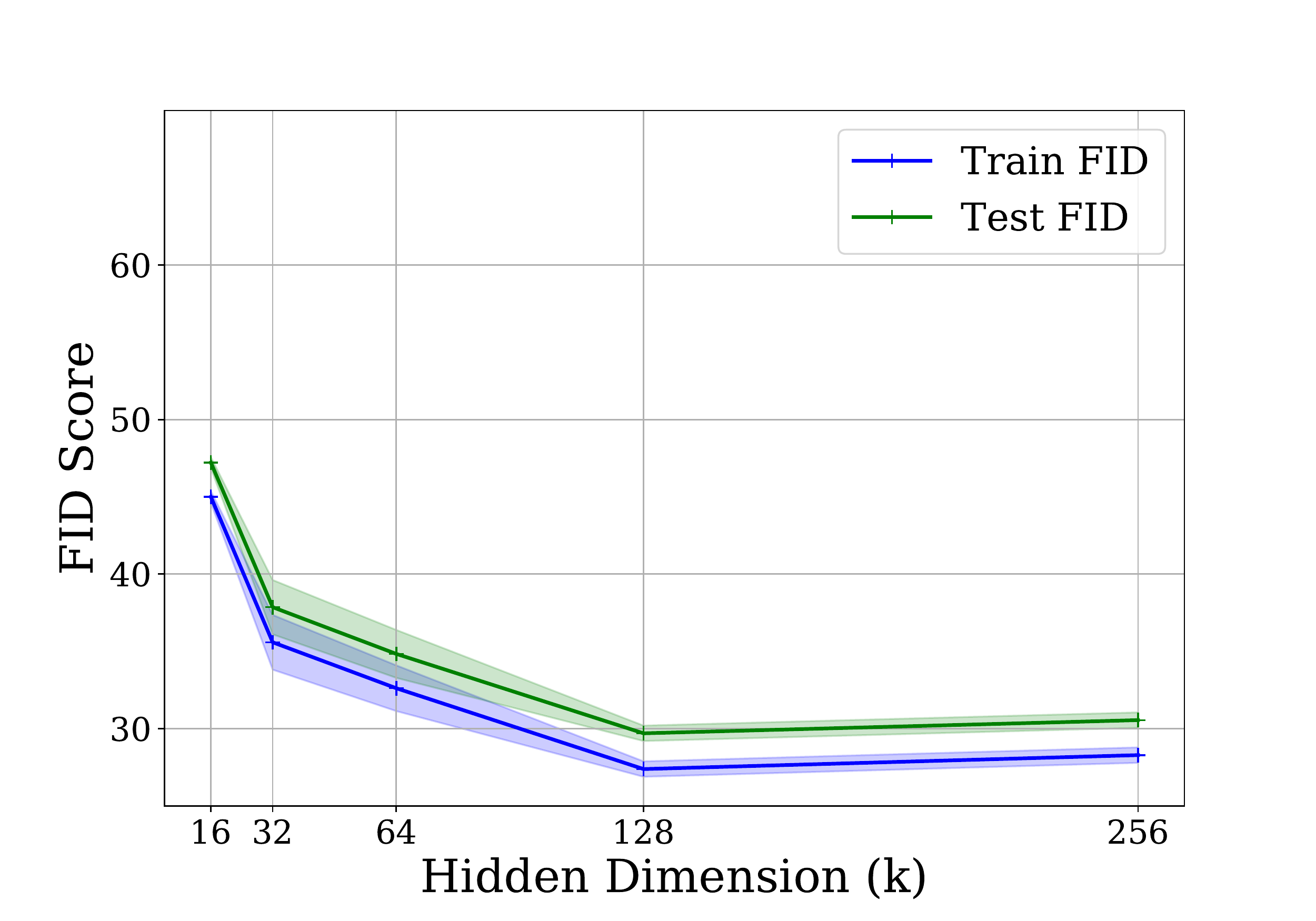}
        \caption{Generator overparameterization}
    \end{subfigure}
    \caption{\textbf{Single Component Overparamterization Results:} We plot FID scores of Resnet GAN as the hidden dimension of one of the components are varied, while the hidden dimension of other component is held fixed. Even in this case, overparameterization improves model convergence.}
    \label{fig:single_comp_overparam}
\end{figure*}

\section{Experimental Details}

\begin{figure*}
\centering
\includegraphics[width=\textwidth]{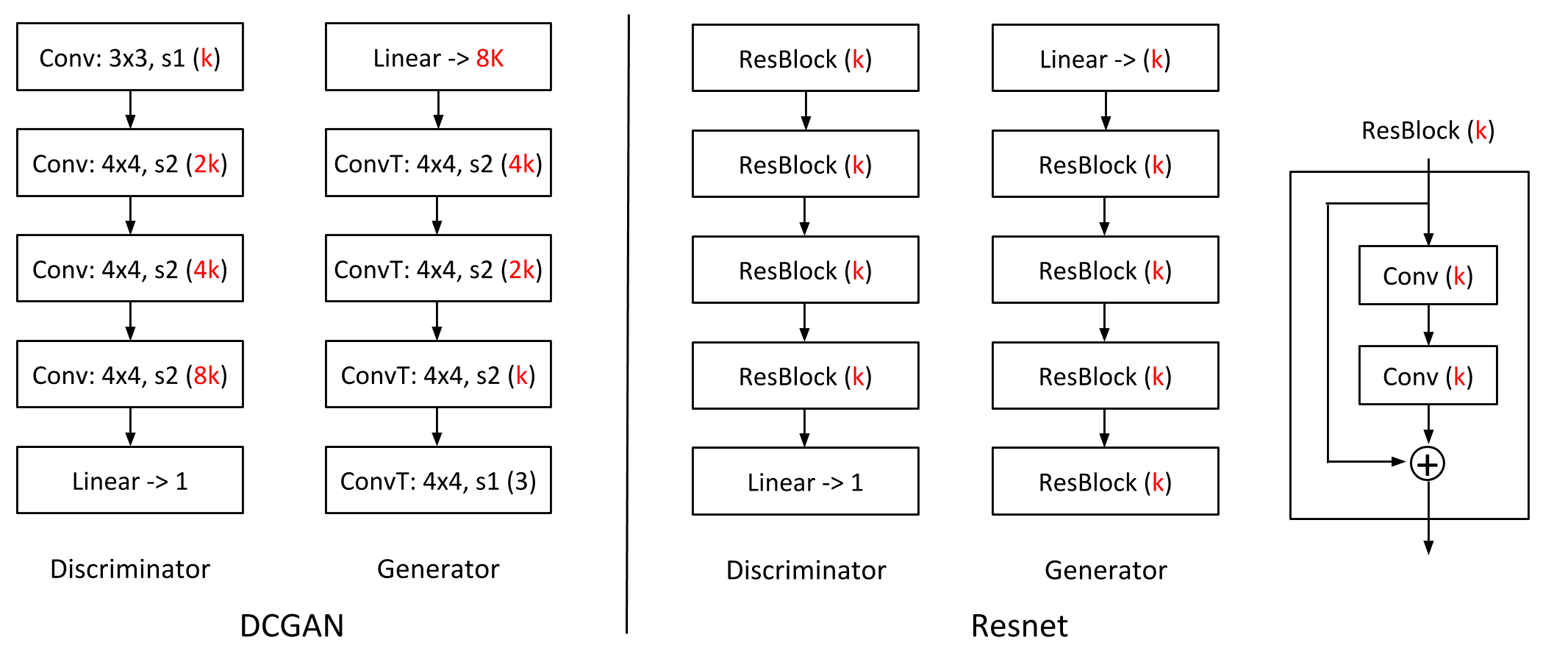}
\caption{\textbf{Architectures used in over-parameterization experiments.} The number of out-channels in convolutional layers is indicated in red. Parameter $k$ controls the width of the architectures -- larger the $k$, more over-parameterized the models are.}
\label{fig:model_arch} 
\end{figure*}

The model architectures we use this in the experiments are shown in Figure~\ref{fig:model_arch}. In both DCGAN and Resnet-based GANs, the papemeter $k$ controls the number of convolutional filters in each layer. The larger the value of $k$ is, the more overparameterized the models are. 

\paragraph{Optimization: }Both DCGAN and Resnet-based GAN models are optimized using the commonly used hyper-parameters: Adam with learning rate $0.0001$ and betas $(0.5, 0.999)$ for DCGAN, gradient penalty of $10$ and $5$ critic iterations per generator's iteration for both DCGAN and Resnet-based GAN models. Models are trained for $300,000$ iterations with a batch size of $64$.

\begin{figure*}[t!]
    \centering
    \begin{subfigure}[t]{0.33\textwidth}
        \centering
        \includegraphics[height=1.4in, trim = {1cm, 1cm, 1cm, 1cm}, clip]{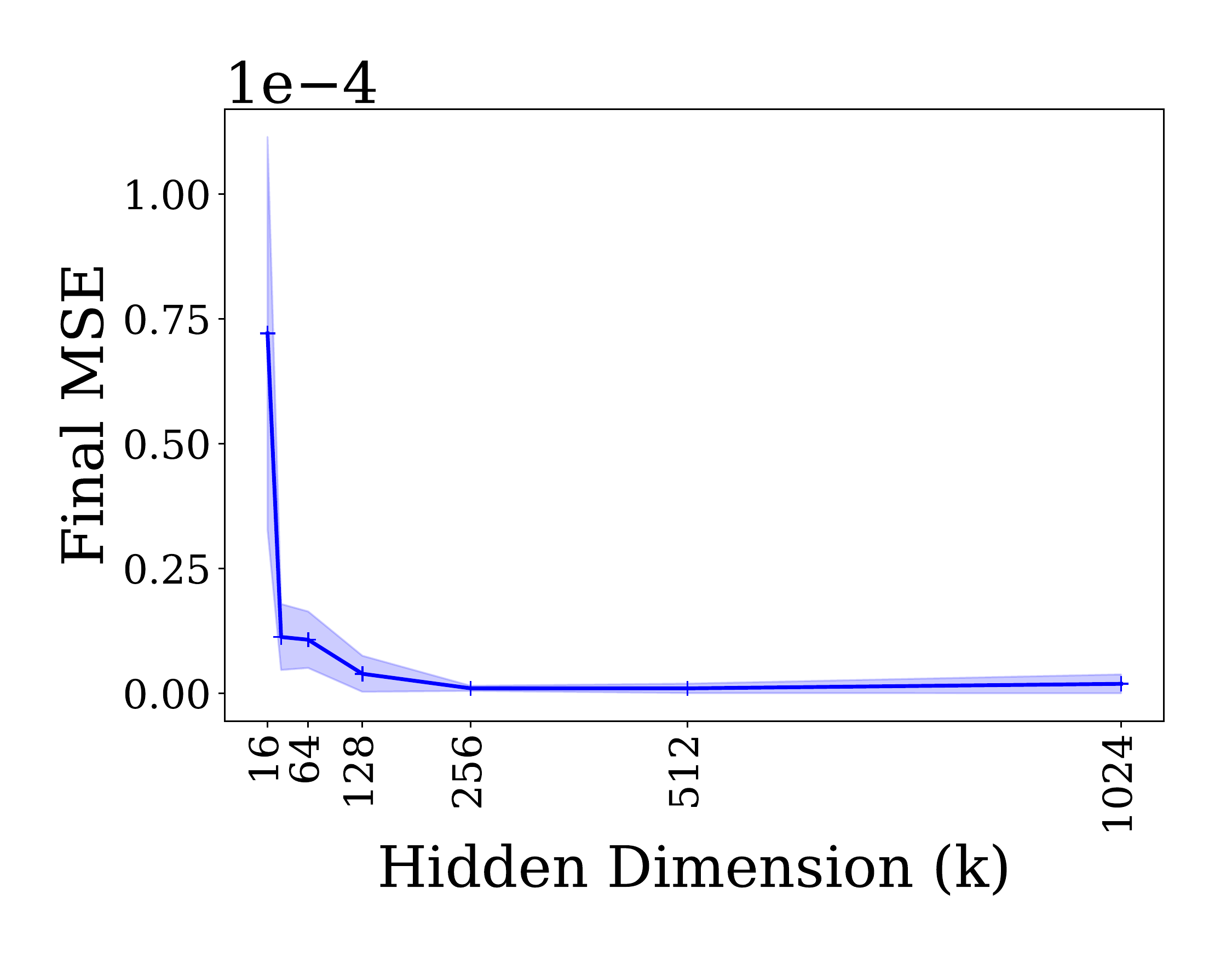}
        \caption{Discriminator trained to optimality}
    \end{subfigure}%
    ~ 
    \begin{subfigure}[t]{0.33\textwidth}
        \centering
        \includegraphics[height=1.4in, trim = {1cm, 1cm, 1cm, 1cm}, clip]{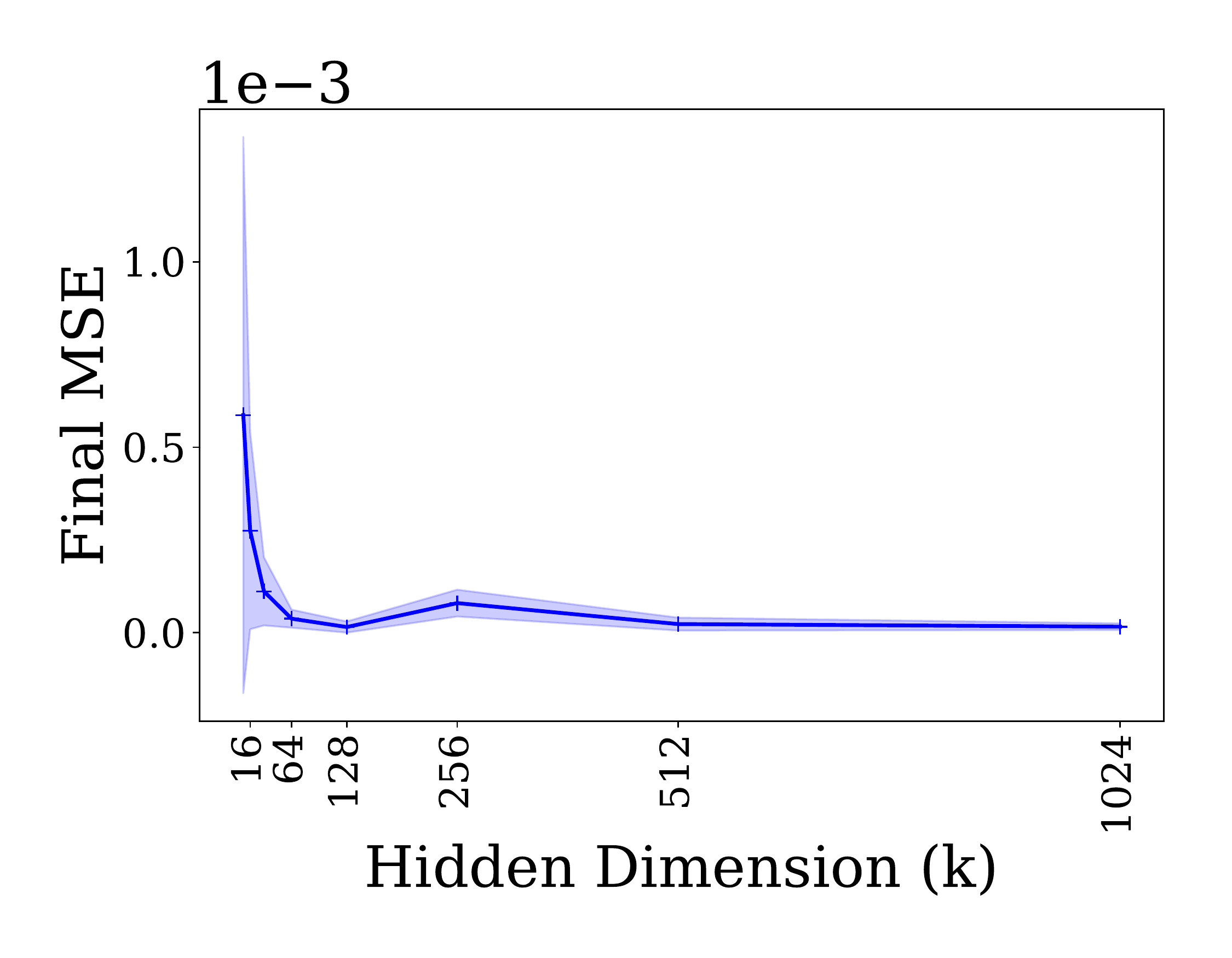}
        \caption{Gradient Descent Ascent}
    \end{subfigure}%
    ~
    \begin{subfigure}[t]{0.33\textwidth}
        \centering
        \includegraphics[height=1.4in, trim = {1cm, 1cm, 1cm, 1cm}, clip]{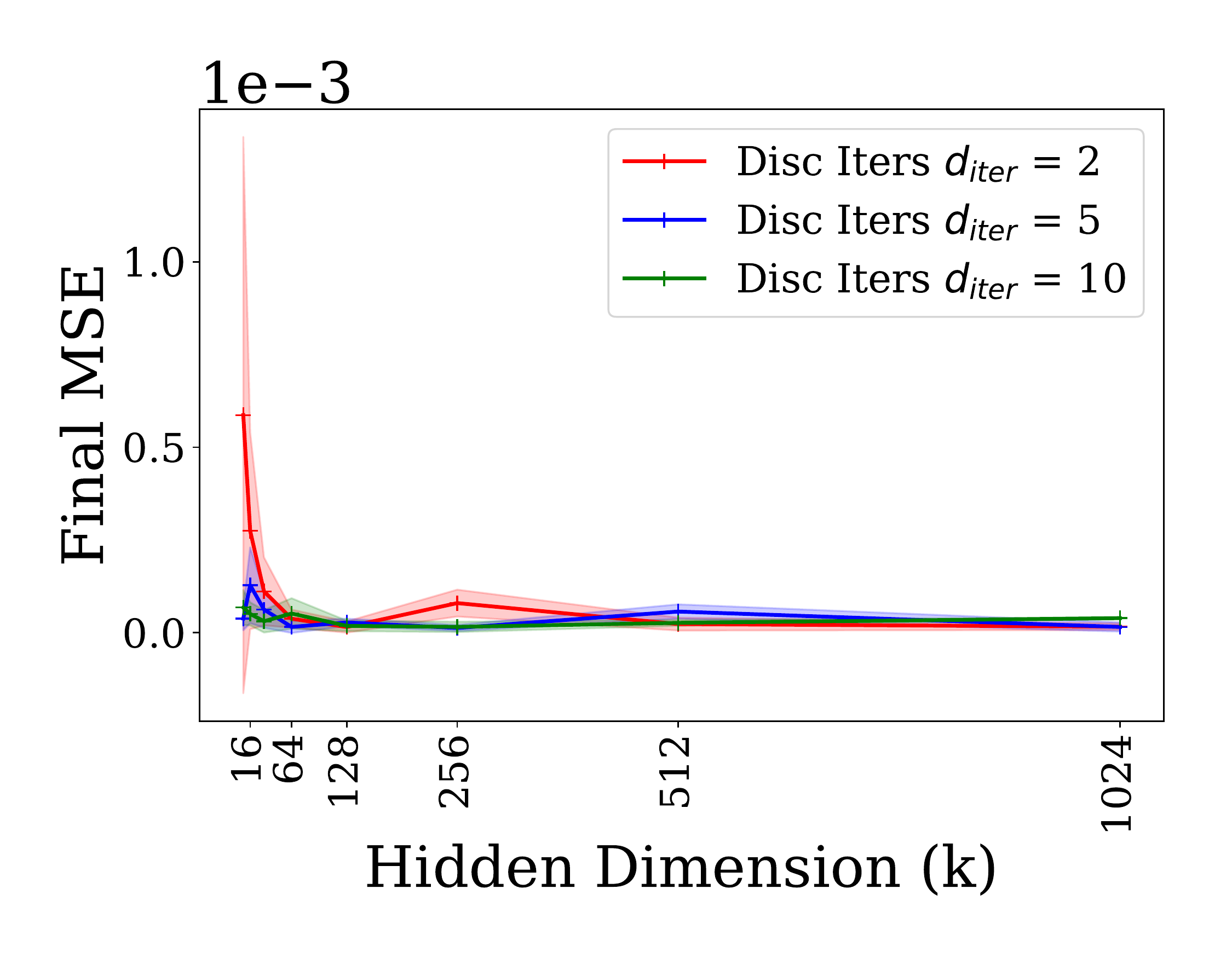}
        \caption{$d_{iter}$ steps of discriminator update per generator iteration}
    \end{subfigure}
    \caption{\textbf{Convergence plot} GAN model trained on the Two-Moons dataset, with linear discriminator and 1-hidden layer generator as the hidden dimension ($k$) increases. Over-parameterizated models show improved convergence}
    \label{fig:moons_overparam}
\end{figure*}

\section{Nearest Neighbor visualization}

\begin{figure*}
\centering
\includegraphics[width=\textwidth]{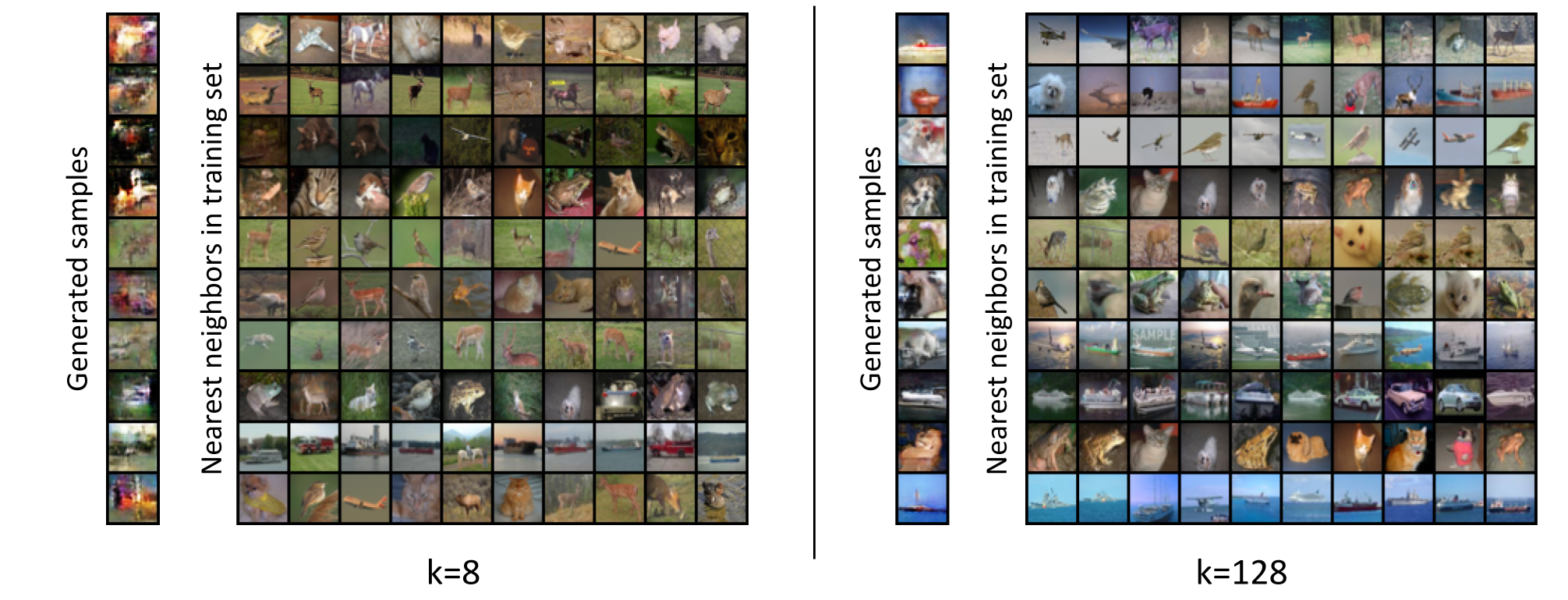}
\caption{\textbf{Nearest neighbor visualization.} We visualize the nearest neighbor samples in training set for generations from DCGAN model trained on CIFAR-10 dataset. Left panel shows DCGAN trained with $k=8$, while the right one shows the one with $k=128$. We observe that overparameterized models generate samples with high diversity. }
\label{fig:nearest_neighbor} 
\end{figure*}

In this section, we visualize the nearest neighbors of samples generated using GAN models trained with different levels of overparameterization. More specifically, we trained a DCGAN model with $k=8$ and $k=128$, synthesize random samples from the trained model and query the nearest neighbors in the training set. The plot of obtained samples is shown in Figure.~\ref{fig:nearest_neighbor}. We observe that overparameterized models generate samples with high diversity.

\end{document}